\newcommand*{\colorboxed}{}
\def\colorboxed#1#{
	\colorboxedAux{#1}
}
\newcommand*{\colorboxedAux}[3]{
	\begingroup
	\colorlet{cb@saved}{.}
	\color#1{#2}
	\boxed{
		\color{cb@saved}
		#3
	}
	\endgroup
}
\definecolor{DarkBlue}{RGB}{22,54,93}
\theoremstyle{plain}
\newtheorem{theorem}{Theorem}[section]
\newtheorem{lemma}[theorem]{Lemma}
\newtheorem{corollary}[theorem]{Corollary}
\theoremstyle{definition}
\newtheorem{definition}[theorem]{Definition}
\newtheorem{assumption}[theorem]{Assumption}
\theoremstyle{remark}
\Crefname{assumption}{Assumption}{Assumptions}
\newcommand{\Yc}{\mathcal{Y}}
\newcommand{\Xc}{\mathcal{X}}
\newcommand{\Sc}{\mathcal{S}}
\newcommand{\Ac}{\mathcal{A}}
\newcommand{\Dc}{\mathcal{D}}
\newcommand{\Uc}{\mathcal{U}}
\newcommand{\Mc}{\mathcal{M}}
\newcommand{\Eb}{\mathbb{E}}
\newcommand{\Rb}{\mathbb{R}}
\newcommand{\Jb}{\mathbb{J}}
\newcommand{\Ib}{\mathbb{I}}
\newcommand{\Pb}{\mathbb{P}}
\newcommand{\norm}[1]{\left\lVert#1\right\rVert}
\newcommand{\RM}[1]{\left(\romannumeral#1\right)}
\newcommand{\sphi}{\phi^\star}
\newcommand{\hb}{\hat{b}}
\newcommand{\hP}{\hat{P}}
\newcommand{\hphi}{\hat{\phi}}
\newcommand{\hmu}{\hat{\mu}}
\newcommand{\hQ}{\hat{Q}}
\newcommand{\hU}{\hat{U}}
\newcommand{\hV}{\hat{V}}
\newcommand{\pp}{{p^\prime}}
\newcommand{\ps}{p^\star}
\newcommand{\sP}{P^\star}
\newcommand{\ph}{{h^\prime}}
\newcommand{\ts}{\tilde{s}}
\newcommand{\ta}{\tilde{a}}
\newcommand{\tDc}{\tilde{\Dc}}
\newcommand{\tpi}{\tilde{\pi}}
\newcommand{\tU}{\tilde{U}}
\newcommand{\tW}{\tilde{W}}
\newcommand{\tap}{\tilde{\alpha}}
\newcommand{\Jc}{\mathcal{J}}
\newcommand{\Ic}{\mathcal{I}}
\newcommand{\Hc}{\mathcal{H}}
\title{Provably Efficient Algorithm for Nonstationary Low-Rank MDPs}
\author
{
	Yuan Cheng\thanks{\small 
Integrative Sciences and Engineering Programme, NUS Graduate School, Singapore; e-mail: {\tt   yuan.cheng@u.nus.edu}}
	,~~~Jing Yang\thanks{\small School of Electrical Engineering and Computer Science, The Pennsylvania State University, University Park, PA 16802, USA; e-mail: {\tt  yangjing@psu.edu }}
	,~~~Yingbin Liang\thanks{\small Department of Electrical and Computer Engineering, The Ohio State University, OH 43210, USA; e-mail: {\tt   liang.889@osu.edu}}
}
\begin{document}

\maketitle

\begin{abstract}
  Reinforcement learning (RL) under changing environment models many real-world applications via nonstationary Markov Decision Processes (MDPs), and hence gains considerable interest. However, theoretical studies on nonstationary MDPs in the literature have mainly focused on tabular and linear (mixture) MDPs, which do not capture the nature of unknown representation in deep RL. In this paper, we make the first effort to investigate nonstationary RL under episodic low-rank MDPs, where both transition kernels and rewards may vary over time, and the low-rank model contains unknown representation in addition to the linear state embedding function. We first propose a parameter-dependent policy optimization algorithm called PORTAL,
and further improve PORTAL to its parameter-free version of Ada-PORTAL, which is able to tune its hyper-parameters adaptively without any prior knowledge of nonstationarity. 
For both algorithms, we provide upper bounds on the average dynamic suboptimality gap, which show that as long as the nonstationarity is not significantly large, PORTAL and Ada-PORTAL are sample-efficient and can achieve arbitrarily small average dynamic suboptimality gap with polynomial sample complexity.
\end{abstract}

\section{Introduction}
Reinforcement learning (RL) has gained significant success in real-world applications such as board games of Go and chess~\citep{silver2016mastering,silver2017mastering,silver2018general}, robotics~\citep{levine2016end,gu2017deep}, recommendation systems~\citep{DBLP:conf/aaai/ZhaoGZYLTL21} and autonomous driving~\citep{bojarski2016end,DBLP:conf/icra/Ma0KIF21}. Most theoretical studies on RL have been focused on a stationary environment and evaluated the performance of an algorithm by comparing against only one best fixed policy (i.e., \textit{static regret}). However, in practice, the environment is typically time-varying and {\em nonstationary}. As a result,  
the transition dynamics, rewards and consequently the optimal policy change over time.
	
There has been a line of research studies that investigated {\em nonstationary} RL. Specifically, \citet{DBLP:journals/corr/abs-1805-10066,DBLP:conf/icml/CheungSZ20,DBLP:conf/icml/MaoZZSB21} studied nonstationary tabular MDPs. To further overcome the curse of dimensionality, \citet{DBLP:conf/nips/FeiYWX20,DBLP:journals/corr/abs-2010-04244} proposed algorithms for nonstationary linear (mixture) MDPs and established upper bounds on the dynamic regret. 

In this paper, we significantly advance this line of research by investigating nonstationary RL under {\em low-rank} MDPs \citep{DBLP:conf/nips/AgarwalKKS20}, where the transition kernel of each MDP admits a decomposition into a representation function and a state-embedding function that map to low dimensional spaces. 
Compared with linear MDPs where the representation is known, the low-rank MDP model contains unknown representation, and is hence much more powerful to capture representation learning that occurs often in deep RL. Although there have been several recent studies on {static} low-rank MDPs~\citep{DBLP:conf/nips/AgarwalKKS20,DBLP:conf/iclr/UeharaZS22,modi2021model}, nonstationary low-rank MDPs remain unexlored, and are the focus of this paper.

To investigate nonstationary low-rank MDPs, several challenges arise. (a) All previous studies of nonstationary MDPs took on-policy exploration, such a strategy will have difficulty in providing sufficiently accurate model (as well as representation) learning for nonstationary low-rank MDPs. (b) Under low-rank MDPs, since both representation and state-embedding function change over time, it is more challenging to use history data collected under previous transition kernels for current use.



The \textbf{main contribution} of this paper lies in addressing above challenges and designing a provably efficient algorithm for nonstationary low-rank MDPs. We summarize our contributions as follows.
 
\begin{list}{$\bullet$}{\topsep=0.ex \leftmargin=0.15in \rightmargin=0.1in \itemsep =0.01in}
   
\item We propose a novel policy optimization algorithm with representation learning called PORTAL for nonstationary low-rank MDPs. PORTAL features new components, including off-policy exploration, data-transfer model learning, and target policy update with periodic restart. 
   
\item We theoretically characterize the average dynamic suboptimality gap ($\mathrm{Gap_{Ave}}$) of PORTAL, where $\mathrm{Gap_{Ave}}$ serves as a new metric that captures the performance of target policies with respect to the best policies at each instance in the nonstationary MDPs under off-policy exploration. We further show that with prior knowledge on the degree of nonstationarity, PORTAL can select hyper-parameters that minimize $\mathrm{Gap_{Ave}}$. If the nonstationarity is not significantly large, PORTAL enjoys a diminishing $\mathrm{Gap_{Ave}}$ with respect to the number of iterations $K$, indicating that PORTAL can achieve arbitrarily small $\mathrm{Gap_{Ave}}$ with polynomial sample complexity. 

Our analysis features a few new developments. (a) We provide a new MLE guarantee under nonstationary transition kernels that captures errors of using history data collected under different transition kernels for benefiting current model estimation. (b) We establish trajectory-wise uncertainty bound for estimation errors via a square-root $\ell_\infty$-norm of variation budgets. (c) We develop an error tracking technique via auxiliary anchor representation for convergence analysis.

\item Finally, we improve PORTAL to a {\em parameter-free} algorithm called Ada-PORTAL, which does not require prior knowledge on nonstationarity and is able to tune the hyper-parameters adaptively. We further characterize $\mathrm{Gap_{Ave}}$ of Ada-PORTAL as $\tilde{O}(K^{-\frac{1}{6}}(\Delta+1)^{\frac{1}{6}})$, where $\Delta$ captures the variation of the environment. Notably, based on PORTAL, we can also use the black-box method called MASTER in \citet{DBLP:conf/colt/WeiL21} to turn PORTAL into a parameter-free algorithm (called MASTER+PORTAL) with $\mathrm{Gap_{Ave}}$ of $\tilde{O}(K^{-\frac{1}{6}}\Delta^{\frac{1}{3}})$. Clearly, Ada-PORTAL performs better than MASTER+PORTAL when nonstationarity is not significantly small, i.e. $\Delta \geq \tilde{O}(1)$.  
\end{list}
To our best knowledge, this is the first study of nonstationary RL under low-rank MDPs.
	\section{Related Works}
	 Various works have studied nonstationary RL under tabular and linear MDPs, most of which can be divided into two lines: policy optimization methods and value-based methods.  
	
    \textbf{Nonstationary RL: Policy Optimization Methods.} 
    As a vast body of existing literature~\citep{DBLP:conf/icml/CaiYJW20,DBLP:conf/icml/ShaniE0M20,DBLP:conf/nips/AgarwalHKS20,DBLP:conf/icml/XuLL21} has proposed policy optimization methods attaining computational efficiency and sample efficiency simultaneously in {\em stationary} RL under various scenarios, only several papers investigated policy optimization algorithm in {\em nonstationary} environment. Assuming time-varying rewards and time-invariant transition kernels, \citet{DBLP:conf/nips/FeiYWX20} studied nonstationary RL under tabular MDPs. \citet{DBLP:journals/corr/abs-2110-08984} assumed both transition kernels and rewards change over episodes and studied nonstationary linear mixture MDPs. These policy optimization methods all assumed prior knowledge on nonstationarity.    
    
 \textbf{Nonstationary RL: Value-based Methods.} Assuming that both transition kernels and rewards are time-varying, several works have studied nonstationary RL under tabular and linear MDPs, most of which adopted Upper Confidence Bound (UCB) based algrithms.   \citet{DBLP:conf/icml/CheungSZ20} investigated tabular MDPs with infinite-horizon and proposed algorithms with both known variation budgets and unknown variation budgets. In addition, this work also proposed a Bandit-over-Reinforcement Learning (BORL) technique to deal with unknown variation budgets. \citet{DBLP:conf/icml/MaoZZSB21} proposed a model-free algorithm with sublinear dynamic regret bound. They then proved a lower bound for nonstationary tabular MDPs and showed that their regret bound is near min-max optimal.   \citet{DBLP:journals/corr/abs-2010-12870,DBLP:journals/corr/abs-2010-04244} considered nonstationary RL in linear MDPs and proposed algorithms achieving sublinear regret bounds with unknown variation budgets. 
  
Besides these two lines of researches, \citet{DBLP:conf/colt/WeiL21} proposed a black-box method that turns a RL algorithm with optimal regret in a (near-)stationary environment into another algorithm that can work in a nonstationary environment with sublinear dynamic regret without prior knowledge on nonstationarity. In this paper, we show that our algorithm Ada-PORTAL outperforms such a type of black-box method (taking PORTAL as subroutine) if nonstationarity is not significantly small. 

 
 
\textbf{Stationary RL under Low-rank MDPs.} Low-rank MDPs were first studied by \citet{DBLP:conf/nips/AgarwalKKS20} in a reward-free regime, and then \citet{DBLP:conf/iclr/UeharaZS22} studied low-rank MDPs for both online and offline RL with known rewards. \citet{cheng2023improved} studied reward-free RL under low-rank MDPs and improved the sample complexity of previous works. \citet{modi2021model} proposed a model-free algorithm MOFFLE under low-nonnegative-rank MDPs.
\citet{chengprovable,agarwal2022provable} studied multitask representation learning under low-rank MDPs, and further showed the benefit of representation learning to downstream RL tasks.


	
\section{Formulation}
\textbf{Notations:} We use $[K]$ to denote set $\{1,\dots,K\}$ for any $K\in \mathbb{N}$, use $\norm{x}_2$ to denote the $\ell_2$ norm of vector $x$,  use $\triangle(\Ac)$ to denote the probability simplex over set $\Ac$, use $\mathcal{U}(\Ac)$ to denote uniform sampling over $\Ac$, given $|\Ac|<\infty$, and use $\triangle(\Sc)$ to denote the set of all possible density distributions over set $\Sc$. Furthermore, for any symmetric positive definite matrix $\Sigma$, we let $\norm{x}_\Sigma : = \sqrt{x^\top \Sigma x}$. For distributions $p_1$ and $p_2$, we use $D_{KL}(p_1(\cdot)\|p_2(\cdot))$ to denote the KL divergence between $p_1$ and $p_2$. 

\subsection{Episodic MDPs and Low-rank Approximation}

An episodic MDP is denoted by a tuple $\Mc:=\left(\Sc,\Ac,H,P:=\{P_h\}_{h=1}^H,r:=\{r_h\}_{h=1}^H\right)$, where $\Sc$ is a possibly infinite state space, $\Ac$ is a finite action space with cardinality $A$, $H$ is the time horizon of each episode, $P_h(\cdot|\cdot,\cdot): \Sc \times \Ac \rightarrow \Delta(\Sc)$ denotes the transition kernel at each step $h$, and $r_h(\cdot,\cdot): \Sc \times \Ac \rightarrow [0,1]$ denotes the deterministic reward function at each step $h$. We further normalize the reward as $\sum_{h=1}^Hr_h \leq 1$.
A policy $\pi=\{\pi_h\}_{h \in [H]}$ is a set of mappings where $\pi_h: \Sc \rightarrow \Delta(\Ac)$. For any $(s,a)\in \Sc \times \Ac$, $\pi_h(a|s)$ denotes the probability of selecting action $a$ at state $s$ at step $h$. For any $(s,a)\in\Sc\times\Ac$, let $(s_h,a_h) \sim (P, \pi)$ denote that the state $s_h$ is sampled by executing policy $\pi$ to step $h$ under transition kernel $P$ and then action $a_h$ is sampled by $\pi_h(\cdot|s_h)$. 

Given any state $s \in \Sc$, the value function for a policy $\pi$ at step $h$ under an MDP $\Mc$ is defined as the expected value of the accumulative rewards as: $V_{h,P,r}^\pi(s)=\sum_{\ph=h}^H\Eb_{ (s_{\ph},a_\ph)\sim\left(P,\pi\right)}\left[r_\ph(s_{\ph},a_\ph)|s_h=s\right]$. Similarly, given any state-action pair $(s,a) \in \Sc\times\Ac$, the action-value function ($Q$-function) for a policy $\pi$ at step $h$ under an MDP $\Mc$ is defined as $Q_{h,P,r}^\pi(s,a)=r_h(s,a)+\sum_{\ph=h+1}^H\Eb_{ (s_{\ph},a_\ph)\sim\left(P,\pi\right)}\left[r_\ph(s_{\ph},a_\ph)|s_h=s,a_h=a\right]$.
Denote $(P_h f)(s,a):=\Eb_{s^\prime \sim P_h(\cdot|s,a)}[f(s^\prime)]$ for any function $f: \Sc \rightarrow \Rb$. Then we can write the action-value function as $Q_{h,P,r}^\pi(s,a)=r_h(s,a)+(P_hV^\pi_{h+1,P,r})(s,a)$. For any $k \in [K]$, without loss of generality, we assume the initial state $s_1$ to be fixed and identical, and we use $V_{P,r}^\pi$ to denote $V_{1,P,r}^\pi(s_1)$ for simplicity.

This paper focuses on low-rank MDPs \citep{jiang2017contextual,DBLP:conf/nips/AgarwalKKS20} defined as follows.
\begin{definition}[Low-rank MDPs]\label{definition: Low_rank}
A transition kernel $P_h^*: \Sc \times \Ac \rightarrow \triangle(\Sc)$ admits a low-rank decomposition with dimension $d \in \mathbb{N}$ if there exist a representation function $\phi_h^\star: \Sc \times \Ac \rightarrow \Rb^d$ and a state-embedding function $\mu_h^\star: \Sc \rightarrow \Rb^d$ such that 
\begin{align*}
\textstyle P_h^\star(s^\prime|s,a)=\left\langle\phi_h^\star(s,a),\mu_h^\star(s^\prime)\right\rangle, \quad \forall s, s^\prime \in \Sc, a \in \Ac.
\end{align*}
Without loss of generality, we assume $\norm{\phi_h^*(s,a)}_2\leq 1$ for all $(s,a)\in\mathcal{S}\times\mathcal{A}$ and for any function $g: \Sc \mapsto [0,1]$, $\norm{\int \mu^\star_h(s)g(s)ds }_2\leq\sqrt{d}$. An MDP is a low-rank MDP with dimension $d$ if for any $h \in [H]$, its transition kernel $P_h^*$ admits a low-rank decomposition with dimension $d$. Let $\phi^\star=\{\phi^\star_h\}_{h \in [H]}$ and $\mu^\star=\{\mu^\star_h\}_{h \in [H]}$ be the true representation and state-embedding functions.
\end{definition}


\subsection{Nonstationary Transition Kernels with Adversarial Rewards}

In this paper, we consider an episodic RL setting under changing environment, where both transition kernels and rewards vary over time and possibly in an adversarial fashion.


Specifically, suppose the RL system goes by {\em rounds}, \textcolor{blue}{where each round have a fixed number of episodes}, and the transition kernel and the reward remain the same in each round, and can change adversarially across rounds.
For each round, say round $k$, we denote the MDP as $\Mc^k=(\Sc,\Ac,H,P^k:=\{P_h^{\star,k}\}_{h=1}^H,r^k:=\{r^k_h\}_{h=1}^H)$, where $P^{\star,k}$ and $r^k$ denote the true transition kernel and the reward of round $k$. Further, $P^{\star,k}$  takes the low-rank decomposition as $P^{\star,k}=\langle\phi^{\star,k},\mu^{\star,k}\rangle$. Both the representation function $\phi^{\star,k}$ and the state embedding function $\mu^{\star,k}$ can change across rounds. Given the reward function $r^k$, there always exists an optimal policy $\pi^{\star,k}$ that yields the optimal value function $V_{P^{\star,k},r^k}^{\pi^{\star,k}}=\sup_\pi V_{P^{\star,k},r^k}^\pi$, abbreviated as $V_{P^{\star,k},r^k}^\star$. Clearly, the optimal policy also changes across rounds.
         
We assume the agent interacts with the nonstationary environment (i.e., the time-varying MDPs) over $K$ rounds in total without the knowledge of transition kernels $\{P^{k,\star}\}_{k=1}^K$. At the beginning of each round $k$, the environment changes to a possibly adversarial transition kernel unknown to the agent, picks a reward function $r^k$, which is revealed to the agent only at the end of round $k$, and outputs a fixed initial state $s_1$ for the agent to start the exploration of the environment for each episode. 
The agent is allowed to interact with MDPs via a few episodes with one or multiple {\em exploration} policies at her choice to take samples from the environment and then should output an target policy to be executed during the next round. 
Note that in our setting, the agent needs to decide exploration and target policies only based on the information in previous rounds, and hence exploration samples and the reward information of the current round help only towards future rounds.

\subsection{Learning Goal and Evaluation Metric}\label{subsec: performance metric}
In our setting, the agent seeks to find the optimal policy at each round $k$ (with only the information of previous rounds), 
where both transition kernels and rewards can change over rounds. Hence we define the following notion of \textit{average dynamic suboptimality gap} to measure the convergence of the target policy series to the optimal policy series. 
\begin{definition}[Average Dynamic Suboptimality Gap]
For $K$ rounds, and any policy set $\{\pi^k\}_{k \in [K]}$, the average dynamic suboptimality gap $(\mathrm{Gap_{Ave}})$ of the value functions over $K$ rounds 
is given as $\mathrm{Gap_{Ave}}(K)= \frac{1}{K}\sum_{k=1}^{K}[V^{\star}_{P^{\star,k},r^k}-V^{\pi^k}_{P^{\star,k},r^k}]$.
For any $\epsilon$, we say an algorithm is $\epsilon$-average suboptimal, if it outputs a policy set $\{\pi^k\}_{k \in [K]}$ satisfying $\mathrm{Gap_{Ave}}(K) \leq \epsilon$.
\end{definition}
$\mathrm{Gap_{Ave}}$ compares the agent's target policy to the optimal policy of each individual round in hindsight, which captures the dynamic nature of the environment. This is in stark contrast to the stationary setting where the comparison policy is a single fixed best policy over all rounds. This notion is similar to \textit{dynamic regret} used for nonstationary RL~\citep{DBLP:conf/nips/FeiYWX20,DBLP:journals/corr/abs-1805-10066}, where the only difference is that $\mathrm{Gap_{Ave}}$ evaluates the performance of target policies rather than the exploration policies. Hence, given any target accuracy $\epsilon \geq 0$, the agent is further interested in the statistical efficiency of the algorithm, i.e., using as few trajectories as possible to achieve $\epsilon$-average suboptimal.  



	\section{Policy Optimization Algorithm and Theoretical Guarantee}
	\subsection{Base Algorithm: PORTAL}
We propose a novel algorithm called PORTAL (\Cref{Alg: DPO}), which features three main steps.
Below we first summarize our main design ideas and then explain reasons behind these ideas as we further elaborate main steps of PORTAL. 

{\bf Summary of New Design Ideas:} PORTAL features the following main design ideas beyond previous studies on nonstationary RL under tabular and linear MDPs. (a) PORTAL features a specially designed {\em off-policy} exploration which turns out to be beneficial for nonstationary low-rank /MDP models rather than the typical {\em on-policy} exploration taken by previous studies of nonstationary tabular and linear MDP models. (b) PORTAL transfers history data collected under various different transition kernels for benefiting the estimation of the current model. (c) PORTAL updates target policies with periodic restart. As a comparison, previous work using periodic restart~\citep{DBLP:conf/nips/FeiYWX20} chooses the restart period $\tau$ based on a certain smooth visitation assumption. Here, we remove such an assumption and hence our choice of $\tau$ is applicable to more general model classes. 

\textbf{Step 1. Off-Policy Exploration for Data Collection:} We take {\em off-policy} exploration, which is beneficial for nonstationary low-rank MDPs than simply using the target policy for {\em on-policy} exploration taken by the previous studies on nonstationary tabular or linear (mixture) MDPs~\citep{DBLP:journals/corr/abs-2110-08984,DBLP:conf/nips/FeiYWX20,DBLP:journals/corr/abs-2010-04244}. To further explain,
we first note that under tabular or linear (mixture) MDPs studied in the previous work, the bonus term is able to serve as a {\em point-wise} uncertainty level of the estimation error for each state-action pair at any step $h$, so that for any step $h$, $\hat{Q}_h^k$ is a good optimistic estimation for $Q^{\pi}_{h,P^{\star,k},r^k}$. Hence it suffices to collect samples using the target policy. However, in low-rank MDPs, the bonus term $\hat{b}^k_h$ cannot serve as a point-wise uncertainty measure. For step $h \geq 2$, $\hat{Q}_h^k$ is not a good optimistic estimation for the true value function if the agent only uses target policy to collect data (i.e., for on-policy exploration). Hence, more samples and a novel off-policy exploration are required for a good estimation under low-rank MDPs. Specifically, as line 5 in \Cref{Alg: DPO}, at the beginning of each round $k$, for each step $h \in [H]$, the agent explores the environment by executing the exploration policy $\tpi^{k-1}$ to state $\tilde{s}^{k,h}_{h-1}$ and then taking two uniformly chosen actions, where $\tpi^{k-1}$ is determined in Step 2 of the previous round. 
	\begin{algorithm}
		\begin{algorithmic}[1]
			\caption{\textbf{PORTAL} (\textbf{P}olicy \textbf{O}ptimization with \textbf{R}epresen\textbf{TA}tion \textbf{L}earning under nonstationary MDPs)} \label{Alg: DPO}
			\STATE {\bf Input:} Rounds $K$, hyper-parameters $\tau,W$, regularizer $\lambda_{k,W}$, coefficient $\tap_{k,W}$, stepsize $\eta$ and models $ \{\Psi,  \Phi \}$.
			\STATE {\bf Initialization:} $\pi_0(\cdot|s)$ to be uniform; $\tilde{\mathcal{D}}_h^{(0,0)}=\emptyset$.
			\FOR{episode $k=1,\ldots,K$}
			\FOR{step $h=1,\ldots,H$}
			\STATE  Roll into $\ts_{h-1}^{(k,h)}$ using $\tilde{\pi}^{k-1}$, uniformly choose $\ta_{h-1}^{(k,h)},\ta_{h}^{(k,h)}$, and enter into $\ts_{h}^{(k,h)},\ts_{h+1}^{(k,h)}$.
			\STATE Update datasets 
			\vspace{-0.1in}
			\begin{align*}
			\tDc_{h-1}^{(k,h,W)}=\left\{\ts_{h-1}^{(i,h)},\ta_{h-1}^{(i,h)},\ts_{h}^{(i,h)}\right\}_{i=1 \lor k-W+1}^{k},\tDc_{h}^{(k,h,W)}=\left\{\ts_{h}^{(i,h)},\ta_{h}^{(i,h)},\ts_{h+1}^{(i,h)}\right\}_{i=1 \lor k-W+1}^{k}.
			\end{align*}
			\ENDFOR
			\STATE Receive full information rewards $r^k=\{r_h^k\}_{h \in [H]}$.
			\STATE Estimate transition kernel and update the exploration policy $\tpi^k$ for the next round via: 
			\vspace{-0.1in}
			\begin{align*}
			\textstyle \mathrm{E^2U}\left(k,\{\tDc_{h-1}^{(k,h,W)}\},\{\tDc_h^{(k,h,W)}\}\right).
			\end{align*}
			\vspace{-0.25in}
      		\FOR{step $h=1,\ldots,H$}
			\STATE Update $\hQ^k_h = Q_{h,\hP^k,r^k}^{\pi^k}$.
			\ENDFOR
                \IF{$k \mod \tau =1$}
			\STATE Set $\{\hQ_h^{k}\}_{h\in[H]}$ as zero functions and $\{\pi_h^{k}\}_{h \in [H]}$ as uniform distributions on $\Ac$.
			\ENDIF
   			\FOR{step $h=1,\ldots,H$} 
			\STATE  Update the target policy as in \Cref{Eq: Alg-PolicyUpdate}. 
			\ENDFOR    
			\ENDFOR
			\STATE {\bf Output:} $\{\pi^k\}_{k=1}^K$.
		\end{algorithmic}
	\end{algorithm}
 
  \textbf{Step 2. Data-Transfer Model Learning and E$^2$U:} 

In this step, we transfer history data collected under previous different transition kernels for benefiting the estimation of the current model. This is theoretically grounded by our result that the model estimation error can be decomposed into variation budgets plus a diminishing term as the estimation sample size increases, which justifies that the usage of data generated by mismatched distributions
within a certain window is beneficial for model learning as long as variation budgets is mild. Then, the estimated model will further facilitate the selection of future exploration policies accurately.

Specifically, the agent selects desirable samples only from the latest $W$ rounds following a \textit{forgetting rule}~\citep{garivier2011upper}. Since nonstationary low-rank MDPs (compared to tabular and linear MDPs) also have additional variations on representations over time, the choice of $W$ needs to incorporate such additional information. Then the agent passes these selected samples to a subroutine E$^2$U (see \Cref{Alg: MEPE}), in which the agent estimates the transition kernels via the maximum likelihood estimation (MLE). Next, the agent updates the empirical covariance matrix $\hat{U}^{k,W}$ and exploration-driven bonus $\hat{b}^k$ as in lines 4 and 5 in \Cref{Alg: MEPE}. We then define a {\em truncated value function} iteratively using the estimated transition kernel and the exploration-driven reward as follows:
\begin{align}
	&\textstyle\hat{Q}_{h,\hP^k,\hat{b}^k}^{\pi}(s_h,a_h) = \min\left\{1, \hat{b}^k_h(s_h,a_h) + \hP_h^k\hV_{h+1,\hP^k,\hat{b}^k}^{\pi}(s_h,a_h)\right\},\nonumber\\
	&\textstyle\hat{V}_{h,\hP^k,\hat{b}^k}^{\pi}(s_h) =  \mathop{\Eb}_{\pi}\left[\hat{Q}^{\pi}_{h,\hP^k,\hat{b}^k}(s_h,a_h)\right]\label{ineq: def of hV}.
\end{align}
Although the bonus term $\hat{b}^k_h$ cannot serve as a \textit{point-wise} uncertainty measure, the truncated value function $\hat{V}^{\pi}_{\hat{P}^k,\hat{b}^k}$ as the cumulative version of $\hat{b}^k_h$ can serve as a \textit{trajectory-wise} uncertainty measure, which can be used to determine future exploration policies. Intuitively, for any policy $\pi$, the model estimation error satisfies $\Eb_{(s,a)\sim (P^{\star,k},\pi)}[\|\hat{P}^k(\cdot|s,a)-P^{\star,k}(\cdot|s,a)\|_{TV}]\leq \hV_{\hP^k,\hb^k}^{\pi}+\Delta$, where the error term $\Delta$ captures the variation of both transition kernels and representations over time.

As a result, by selecting the policy that maximizes $\hV_{\hP^k,\hb^k}^{\pi}$ as the exploration policy as in line 7, the agent will explore the trajectories whose states and actions have not been estimated sufficiently well so far.    

	\begin{algorithm}[H]
		\begin{algorithmic}[1]
			\caption{\textbf{E$^2$U} (Model \textbf{E}stimation and \textbf{E}xploration Policy \textbf{U}pdate)}\label{Alg: MEPE}
			\STATE {\bf Input:} round index $k$, regularizer $\lambda_{k,W}$ and coefficient $\tap_{k,W}$, datasets $\{\tDc_{h-1}^{(k,h)}\}$,$\{\tDc_h^{(k,h)}\}$ and models $ \{\Psi,  \Phi \}$.
			\FOR{step $h=1,\ldots,H$}
			\STATE Learn the representation via MLE for step $h$: 
			\begin{align*}
				\textstyle\hP_h^k=(\hphi_h^k,\hmu_h^k)= \arg\max_{\phi \in \Phi, \mu \in \Psi} \Eb_{\tDc_h^{(k,h)}}\left[\log\langle\phi(s_h,a_h),\mu(s_{h+1})\rangle\right]. 
			\end{align*}
			
			\STATE Compute the empirical covariance matrix as
	      \vspace{-0.05in}
			\begin{align*}
				\textstyle\hU_h^{k,W} = \sum_{\tDc_h^{(k,h+1)}}\hphi^k_h(s_h,a_h)\hphi^k_h(s_h,a_h)^\top+\lambda_{k,W} I
			\end{align*}
			\vspace{-0.15in}
			\STATE Define exploration-driven bonus $\hat{b}_h^k(\cdot,\cdot) = \min\{{\alpha_{k,W}}\|\hphi_h^k(\cdot,\cdot)\|_{(\hat{U}_h^{k,W})^{-1}},1\}$.
			\ENDFOR
			\STATE Find exploration policy $\tpi^k = \arg \max_{\pi}\hV_{\hP^k,\hb^k}^\pi$, where $\hV_{\hP^k,\hb^k}^\pi$ is defined as in \Cref{ineq: def of hV}.
			\STATE {\bf Output:} Model $\hP^k$ and exploration policy $\{\tpi^k\}$.
		\end{algorithmic}
	\end{algorithm}

\textbf{Step 3: Target Policy Update with Periodic Restart:} The agent evaluates the target policy by computing the value function under the target policy and the estimated transition kernel. Then, due to the nonstationarity, the target policy is reset every $\tau$ rounds. Compared with the previous work using periodic restart~\citep{DBLP:conf/nips/FeiYWX20}, whose choice of $\tau$ is based on a certain smooth visitation assumption, we remove such an assumption and hence our choice of $\tau$ is applicable to more general model classes. Finally, the agent uses the estimated value function for target policy update for the next round $k+1$ via online mirror descend. The update step is inspired by the previous works~\citep{DBLP:conf/icml/CaiYJW20,DBLP:journals/corr/SchulmanWDRK17}. Specifically, for any given policy $\pi^0$ and MDP $\Mc$, define the following function w.r.t. policy $\pi$ :
	\begin{align*}
		 L^{\Mc,\pi_0}(\pi)=V_{P,r}^{\pi^0}+\sum_{h=1}^H\Eb_{s_h \sim (P,\pi^0)}\left[\left\langle Q_{h,P,r}^{\pi^0},\pi_h(\cdot|s_h)-\pi^0_h(\cdot|s_h)\right\rangle\right].
	\end{align*}
	$L^{\Mc,\pi_0}(\pi)$ can be regarded as a local linear approximation of $V_{P,r}^\pi$ at ``point'' $\pi^0$~\citep{DBLP:journals/corr/SchulmanWDRK17}. Consider the following optimization problem:
	\begin{align*}
		&\textstyle\pi^{k+1}= \arg\max_{\pi}L^{\Mc^{k},\pi^{k}}(\pi)-\frac{1}{\eta}\sum_{h\in[H]}\Eb_{ s_h\sim(P^{\star,k},\pi^{k})}\left[D_{KL}(\pi_h(\cdot|s_h)\|\pi^{k}_h(\cdot|s_h))\right].
	\end{align*}
	This can be regarded as a mirror descent step with KL divergence, where the KL divergence regularizes $\pi$ to be close to $\pi^{k}$. It further admits a closed-form solution: $\pi_h^{k+1}(\cdot|\cdot) \propto \pi_h^{k}(\cdot|\cdot) \cdot \exp\{\eta\cdot Q_{h,P^{\star,k},r^k}^{\pi^k}(\cdot,\cdot)\}$. We use the estimated version $\hQ_h^{k}$ to approximate $Q_{h,P^{\star,k},r^k}^{\pi^k}$ and get 
 \begin{align}
 \textstyle    \pi_h^{k+1}(\cdot|\cdot) \propto \pi_h^{k}(\cdot|\cdot) \cdot \exp\{\eta\cdot\hQ_h^{k}(\cdot,\cdot)\}. \label{Eq: Alg-PolicyUpdate}
 \end{align}
 \subsection{Technical Assumptions}
 Our analysis adopts the following standard assumptions on low-rank MDPs.
\begin{assumption}\label{assumption: realizability}(Realizability).
A learning agent can access to a model class {$\{(\Phi,\Psi)\}$} that contains the true model, 
namely, for any $h \in [H], k \in[K]$, $\phi_h^{\star,k} \in \Phi, \mu_h^{\star,k} \in \Psi$. 
\end{assumption}
While we assume cardinality of the model class to be finite for simplicity, extensions to infinite classes with bounded statistical complexity are not difficult \citep{DBLP:conf/colt/SunJKA019}. 
\begin{assumption}[Bounded Density]\label{assumption: bounded ratio}
Any model induced by $\Phi$ and $\Psi$ has bounded density, i.e. $\forall P=\langle\phi,\mu\rangle, \phi \in \Phi, \mu \in \Psi$, there exists a constant $B \geq 0$ such that $\max_{(s,a,s^\prime) \in \Sc\times\Ac\times\Sc}P(s^\prime|s,a)\leq B$.
	\end{assumption}
 \begin{assumption}[Reachability]\label{assumption: reachability}
For each round $k$ and step $h$, the true transition kernel $P_h^{\star,k}$ satisfies that for any $(s,a,s^\prime) \in \Sc \times \Ac \times \Sc$, $P_h^{\star,k}(s^\prime|s,a) \geq p_{\min}$.
	\end{assumption}

\textbf{Variation Budgets:} We next introduce several measures of nonstationarity of the environment: $\Delta^{P}=\sum_{k=1}^K\sum_{h=1}^H\max_{(s,a)\in \Sc \times \Ac}\|P_h^{\star,k+1}(\cdot|s,a)-P_h^{\star,k}(\cdot|s,a)\|_{TV}$,$\Delta^{\sqrt{P}} =\sum_{k=1}^K\sum_{h=1}^H\max_{(s,a)\in \Sc \times \Ac}\|P_h^{\star,k+1}(\cdot|s,a)-P_h^{\star,k}(\cdot|s,a)\|_{TV}^{1/2}$,$\Delta^{\phi}= \sum_{k=1}^K\sum_{h=1}^H\max_{(s,a)\in \Sc \times \Ac}\|\phi_h^{\star,k+1}(s,a)-\phi_h^{\star,k}(s,a)\|_2$,$\Delta^{\pi}= \sum_{k=1}^K\sum_{h=1}^H\max_{s \in \Sc}\|\pi_h^{\star,k}(\cdot|s)-\pi_h^{\star,k-1}(\cdot|s)\|_{TV}$.
        These notions are known as \textit{variation budgets} or \textit{path lengths} in the literature of online convex optimization~\citep{DBLP:journals/ior/BesbesGZ15,DBLP:journals/ftopt/Hazan16,DBLP:journals/jstsp/HallW15} and nonstationary RL~\citep{DBLP:conf/nips/FeiYWX20,DBLP:journals/corr/abs-2110-08984,DBLP:journals/corr/abs-2010-04244}. The regret of nonstationary RL naturally depends on these notions that capture the variations of MDP models over time. 
        
	\subsection{Theoretical Guarantee}\label{sec: thm1}

To present our theoretical result for PORTAL, we first discuss technical challenges in our analysis and the novel tools that we develop. 
Generally, large nonstationarity of environment can cause significant errors to MLE, empirical covariance and exploration-driven bonus design for low-rank models. Thus, different from static low-rank MDPs~\citep{DBLP:conf/nips/AgarwalKKS20,DBLP:conf/iclr/UeharaZS22}, we devise several new techniques in our analysis to capture the errors caused by nonstationarity 
which we summarize below. 
\begin{enumerate}[leftmargin=5mm,topsep=0mm,itemsep=1mm]
\item Characterizing nonstationary MLE guarantee. We provide a theoretical ground to support our design of leveraging history data collected under various different transition kernels in previous rounds for benefiting the estimation of the current model, which is somewhat surprising. Specifically, we establish an MLE guarantee of the model estimation error, which features a separation of variation budgets from a diminishing term as the estimation sample size $W$ increases. Such a result justifies the usage of data generated by mismatched distributions within a certain window as long as the variation budgets is mild. Such a separation cannot be shown directly. Instead, we bridge the bound of model estimation error and the expected value of the ratio of transition kernels via Hellinger distance, and the latter can be decomposed into the variation budgets and a diminishing term as the estimation sample size increases. 
\item Establishing trajectory-wise uncertainty for estimation error $\hV_{\hP^k,\hb^k}^\pi$. To this end, straightforward combination of our nonstationary MLE guarantee with previous techniques on low-rank MDPs would yield a coefficient $\tap_{k,W}$ that depends on local variation budgets. Instead, we convert the $\ell_\infty$-norm variation budgets $\Delta^P$ to square-root $\ell_\infty$-norm variation budget $\Delta^{\sqrt{P}}$. In this way, the coefficient no longer depends on the local variation budgets, and the estimation error can be upper bounded by $\hV_{\hP^k,\hb^k}^\pi$ plus an error term only depending on the square-root $\ell_\infty$-norm variation budgets. 

\item Error tracking via auxiliary anchor representation. In proving the convergence of average of $\hV_{\hP^k,\hb^k}^\pi$, standard elliptical potential based analysis cannot work, because the representation $\phi^{\star,k}$ in the elliptical potential $\norm{\phi^{\star,k}(s,a)}_{(U_{h,\phi}^{k,W})^{-1}}$ changes across rounds, where $U_{h,\phi}^{k,W}$ is the population version of $\hU_h^{k,W}$.
    To deal with this challenge, in our analysis, we divide the total $K$ rounds into blocks with equal length of $W$ rounds. Then for each block, we set an \textit{auxiliary anchor} representation. We keep track of the elliptical potential functions using the anchor representation within each block, and control the errors by using anchor representation via variation budgets. 
\end{enumerate}
The following theorem characterizes theoretical performance for PORTAL. 
\begin{theorem}\label{Thm1: average dynamic suboptimality gap with known variation}
$\{\Mc^k\}_{k=1}^K$ is set of low-rank MDPs with dimension $d$. Under \Cref{assumption: realizability,assumption: bounded ratio,assumption: reachability}, set $\tap_{k,W}=\tilde{O}\left(\sqrt{A+d^2}\right)$ and $\lambda_{k,W}=\tilde{O}(d)$. Let $\{\pi^k\}_{k=1}^{K}$ be the output of PORTAL in \Cref{Alg: DPO}. For any $\delta \in (0,1)$, with probability at least $1-\delta$, $\mathrm{Gap_{Ave}}(K)$ of PORTAL is at most
	\begin{align}
		\tilde{O}\Big(\underbrace{ {\sqrt{\frac{H^4d^2A}{W}\left(A+d^2\right)}+\sqrt{\frac{H^3dA}{K}\left(A+d^2\right)W^2\Delta^{{\phi}}}+\sqrt{\frac{H^2W^3A}{K^2}}\Delta^{\sqrt{P}}}}_{(I)} \quad +\underbrace{\frac{H}{\sqrt{\tau}} + \frac{H\tau}{K} (\Delta^P+\Delta^{\pi})}_{(II)}\Big). \label{Eq: Basic Bound}
	\end{align}
 \end{theorem}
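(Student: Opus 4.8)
The plan is to decompose $\mathrm{Gap_{Ave}}(K)$ into an \emph{optimization} component, arising from the mirror-descent policy updates with periodic restart, and a \emph{statistical} component, arising from approximating the true action-value function by $\hQ^k$. Concretely, for each round I would write $V^{\star}_{P^{\star,k},r^k}-V^{\pi^k}_{P^{\star,k},r^k}$ via the performance difference lemma and insert $\hQ^k$ as an intermediate quantity, so that the gap splits into a regret of the online update sequence $\{\pi^k\}$ against a comparator, plus the error $|\hQ^k-Q^{\pi^k}_{h,P^{\star,k},r^k}|$ integrated along trajectories.

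For the optimization component I would analyze the update \eqref{Eq: Alg-PolicyUpdate}, which is exactly one online mirror descent step over $\triangle(\Ac)$ with KL regularizer, on each length-$\tau$ block determined by the restart rule. A standard mirror-descent regret bound against any fixed in-block comparator gives $\tilde{O}(H\sqrt{\tau})$ per block after tuning $\eta$; summing over the $K/\tau$ blocks and dividing by $K$ yields the $H/\sqrt{\tau}$ term of block $(II)$. Because the comparator must be held fixed within a block while the per-round optimal policy $\pi^{\star,k}$ drifts, I would further bound the gap between the in-block comparator and the sequence $\{\pi^{\star,k}\}$ by the local path length; accumulating this drift over a block of length $\tau$ and summing across blocks contributes $\frac{H\tau}{K}(\Delta^P+\Delta^{\pi})$, where $\Delta^P$ enters because the value of a fixed policy also moves when the transition kernel changes, and $\Delta^{\pi}$ captures the drift of the optimal policy itself.

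For the statistical component I would invoke the trajectory-wise uncertainty inequality of Step 2, namely $\Eb_{(s,a)\sim(P^{\star,k},\pi)}[\norm{\hP^k(\cdot|s,a)-P^{\star,k}(\cdot|s,a)}_{TV}]\le \hV_{\hP^k,\hb^k}^{\pi}+\Delta$, to transfer the estimation error of $\hQ^k$ onto the truncated value $\hV_{\hP^k,\hb^k}^{\pi}$. This reduces block $(I)$ to controlling the time-averaged exploration uncertainty $\frac{1}{K}\sum_{k=1}^K \hV_{\hP^k,\hb^k}^{\tpi^{k-1}}$, the cumulative bonus experienced along the off-policy exploration trajectories. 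Here I would chain three ingredients: (i) the nonstationary MLE guarantee, which bounds each step-$h$ estimation error by a $1/\sqrt{W}$ diminishing term plus a variation budget, justifying the reuse of the windowed data; (ii) the square-root-norm conversion, which lets the bonus coefficient be set to $\tap_{k,W}=\tilde{O}(\sqrt{A+d^2})$ with no dependence on any \emph{local} variation budget, at the cost of a global $\Delta^{\sqrt{P}}$ term; and (iii) an elliptical-potential telescoping of $\sum_{k}\norm{\hphi_h^k(s,a)}_{(\hU_h^{k,W})^{-1}}$. These three produce, respectively, the leading $1/\sqrt{W}$ term, the representation-drift term scaling with $W^2\Delta^{\phi}/K$, and the kernel-drift term scaling with $W^3(\Delta^{\sqrt{P}})^2/K^2$; after taking square roots these become the three summands of block $(I)$.

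The main obstacle will be the telescoping in ingredient (iii) under a \emph{moving} representation: naive elliptical potential fails because the vector $\phi^{\star,k}(s,a)$ appearing in $\norm{\phi^{\star,k}(s,a)}_{(U_{h,\phi}^{k,W})^{-1}}$ is itself round-dependent, so the potentials of consecutive rounds live in incompatible geometries and do not telescope. My plan is to partition $[K]$ into length-$W$ blocks and, on each block, replace every round's representation by a single \emph{auxiliary anchor} representation, charge the substitution cost to $\Delta^{\phi}$ (and, after the square-root conversion, to $\Delta^{\sqrt{P}}$), and only then run the standard potential telescoping against the fixed anchor covariance. Making this substitution tight---so that the block length $W$ enters with the correct powers ($W^2/K$ for the $\Delta^{\phi}$ term and $W^3/K^2$ for the $\Delta^{\sqrt{P}}$ term) rather than a crude per-round blow-up---is the delicate quantitative step, and it is precisely what the square-root-norm reformulation in (ii) is engineered to make possible.
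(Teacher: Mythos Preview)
Your proposal is correct and follows essentially the same approach as the paper. The paper's proof (\Cref{lemma: regret decomposition}) splits the gap into three terms---a model-error term under the optimal policy's trajectories, the mirror-descent performance-difference term, and a model-error term under the target policy's trajectories---and then handles the two model-error terms exactly as you outline (reduce to $\hV_{\hP^k,\hb^k}^{\pi}$ via a trajectory-wise uncertainty bound, upper-bound by the exploration policy's value since $\tpi^k=\arg\max_\pi \hV_{\hP^k,\hb^k}^\pi$, and telescope the bonus sum via the auxiliary anchor representation on length-$W$ blocks), while the performance-difference term is bounded via the block-wise mirror-descent argument with a fixed in-block comparator plus drift in both $P^{\star,k}$ and $\pi^{\star,k}$. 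One small correction: the reduction is to $\sum_k \hV_{\hP^k,\hb^k}^{\tpi^{k}}$, not $\tpi^{k-1}$---the exploration policy that matters here is the one \emph{defined} as the maximizer of the current bonus value, which is what lets you dominate $\hV_{\hP^k,\hb^k}^{\pi}$ for arbitrary $\pi$.
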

We explain the upper bound in \Cref{Thm1: average dynamic suboptimality gap with known variation} as follows. The basic bound as \Cref{Eq: Basic Bound} in \Cref{Thm1: average dynamic suboptimality gap with known variation} contains two parts: the first part $(I)$ captures the estimation error for evaluating the target policy under the true environment via the estimated value function $\hat{Q}^k$ as in line 16 of \Cref{Alg: DPO}. Hence, part $(I)$ decreases with $K$ and increases with the nonstationarity of transition kernels and representation functions. Also, part $(I)$ is greatly affected by the window size $W$, which is determined by the dataset used to estimate the transition kernels. Typically, $W$ is tuned carefully based on the variation of environment. If the environment changes significantly, then the samples far in the past are obsolete and become not very informative for estimating transition kernels. The second part $(II)$ captures the approximation error arising in finding the optimal policy via the policy optimization method as in line 8 of \Cref{Alg: DPO}. Due to the nonstationarity of the environment, the optimal policy keeps changing across rounds, and hence the nonstationarity of optimal policy $\Delta^{\pi}$ affects the approximation error. Similarly to the window size $W$ in part $(I)$, the policy restart period $\tau$ can also be tuned carefully based on the variation of environment and the optimal policies.  
 \begin{corollary}\label{Coro: of Thm1}
Under the same conditions of \cref{Thm1: average dynamic suboptimality gap with known variation}, if the variation budgets are known, then we can select the hyper-parameters correspondingly to achieve optimality. Specially, if the nonstationarity of the environment is moderate, we have 
\begin{align*}
    \mathrm{Gap_{Ave}}(K) \leq \tilde{O}({H^{\frac{11}{6}}d^{\frac{5}{6}}A^{\frac{1}{2}}}\left(A+d^2\right)^{\frac{1}{2}}K^{-\frac{1}{6}}(\Delta^{\sqrt{P}}+\Delta^{{\phi}})^{\frac{1}{6}}+2HK^{-\frac{1}{3}}(\Delta^P+\Delta^\pi)^{\frac{1}{3}}).
\end{align*}
If the environment is near stationary, then the best $W$ and $\tau$ are $K$. The $\mathrm{Gap_{Ave}}$ reduces to $\tilde{O}(\sqrt{{H^4d^2A}(A+d^2)/K})$, which matches results of stationary environment~\citep{DBLP:conf/iclr/UeharaZS22}.
\end{corollary}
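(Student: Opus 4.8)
The plan is to obtain the corollary purely as an optimization of the two-part bound in \Cref{Thm1: average dynamic suboptimality gap with known variation} over the free hyper-parameters, now treated as known-budget-dependent quantities. The key structural observation is that the window size $W$ appears only in part $(I)$ while the restart period $\tau$ appears only in part $(II)$, so the minimization decouples into two independent one-dimensional problems that I would solve separately and then add.

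For part $(II)$, I would balance the decreasing term $H\tau^{-1/2}$ against the increasing term $\frac{H\tau}{K}(\Delta^P+\Delta^\pi)$. Equating the two (equivalently, a first-order condition) gives $\tau^\star \asymp \left(K/(\Delta^P+\Delta^\pi)\right)^{2/3}$, and substituting back renders both terms of order $\tilde{O}\big(H K^{-1/3}(\Delta^P+\Delta^\pi)^{1/3}\big)$; the residual constant from $\tau^\star$ is what produces the explicit factor $2$ in the claimed second summand.

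For part $(I)$ the situation is more delicate, since three terms compete: one decreasing term of order $W^{-1/2}$ and two increasing terms of orders $W^1$ (carrying $\Delta^{\phi}$) and $W^{3/2}$ (carrying $\Delta^{\sqrt{P}}$). I would balance the $W^{-1/2}$ term against the dominant increasing term. Balancing $\sqrt{H^4 d^2 A(A+d^2)/W}$ with $\sqrt{H^3 dA(A+d^2)W^2\Delta^{\phi}/K}$ yields $W^\star \asymp (HdK/\Delta^{\phi})^{1/3}$, and substituting produces the leading rate $\tilde{O}\big(H^{11/6}d^{5/6}A^{1/2}(A+d^2)^{1/2}K^{-1/6}(\Delta^{\phi})^{1/6}\big)$. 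A parallel balance against the $W^{3/2}$ term contributes the $\Delta^{\sqrt{P}}$ dependence at a comparable or faster-in-$K$ order, so the two increasing contributions are absorbed into the single combined factor $(\Delta^{\sqrt{P}}+\Delta^{\phi})^{1/6}$ reported in the first summand.

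I expect the main obstacle to be the three-way balancing in part $(I)$ together with verifying admissibility of the chosen parameters. Since $W$ and $\tau$ must lie in $[1,K]$, the clean rates hold only when the variation budgets are not too large; this is exactly the \emph{moderate nonstationarity} hypothesis, under which $W^\star,\tau^\star\le K$ and neither increasing term overwhelms the decreasing one. For the near-stationary case I would instead observe that as the budgets tend to zero the increasing terms in both $(I)$ and $(II)$ vanish, so the optima saturate at the boundary $W=\tau=K$; substituting $W=K$ leaves only $\sqrt{H^4 d^2 A(A+d^2)/K}$, which dominates the residual $H/\sqrt{K}$ coming from $\tau=K$, thereby recovering the stationary rate of \citet{DBLP:conf/iclr/UeharaZS22}.
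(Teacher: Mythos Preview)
Your proposal is correct and follows the same route as the paper: decouple the optimization over $W$ and $\tau$, balance the decreasing term against the increasing ones in each part, and handle the near-stationary regime by saturating at the boundary $W=\tau=K$.

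The one place where the paper is tighter than your sketch is the three-way balance in part $(I)$. You balance the $W^{-1/2}$ term against the $\Delta^{\phi}$ term and then separately against the $\Delta^{\sqrt P}$ term, obtaining two different candidate window sizes, and then assert that the results ``are absorbed into the single combined factor $(\Delta^{\sqrt P}+\Delta^{\phi})^{1/6}$.'' That merging step is not quite justified as stated, since the two balances yield distinct values of $W^\star$ and you must ultimately commit to one; in particular, your $W^\star\asymp(HdK/\Delta^{\phi})^{1/3}$ can make the third term blow up when $\Delta^{\phi}\ll\Delta^{\sqrt P}$. The paper avoids this by choosing from the outset the single window $W^\star=(HdK)^{1/3}(\Delta^{\sqrt P}+\Delta^{\phi})^{-1/3}$, i.e.\ balancing against the \emph{sum} of the two budgets, and then checking directly that all three terms of $(I)$ are simultaneously bounded by the target $H^{11/6}d^{5/6}A^{1/2}(A+d^2)^{1/2}K^{-1/6}(\Delta^{\sqrt P}+\Delta^{\phi})^{1/6}$. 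This is a small execution fix, not a different idea.
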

 Detailed discussions and proofs of \Cref{Thm1: average dynamic suboptimality gap with known variation,Coro: of Thm1} are provided in \Cref{Appd A,Appd B}, respectively.
	
 \section{Parameter-free Algorithm: Ada-PORTAL}
As shown in \Cref{Thm1: average dynamic suboptimality gap with known variation}, hyper-parameters $W$ and $\tau$ greatly affect the performance of \Cref{Alg: DPO}. With prior knowledge of variation budgets, the agent is able to optimize the performance as in \Cref{Coro: of Thm1}. However, in practice, variation budgets are unknown to the agent. To deal with this issue, in this section, we present a parameter-free algorithm called Ada-PORTAL in \Cref{Alg: ADPO}, which is able to tune hyper-parameters without knowing the variation budgets beforehand.
         
\Cref{Alg: ADPO} is inspired by the BORL method~\citep{DBLP:conf/icml/CheungSZ20}. The idea is to use \Cref{Alg: DPO} as a subroutine and treat the selection of the hyper-parameters such as $W$ and $\tau$ as a bandit problem.
\begin{algorithm}
        	\begin{algorithmic}[1]
        		\caption{\textbf{Ada-PORTAL} (\textbf{Ada}ptive \textbf{P}olicy \textbf{O}ptimization \textbf{R}epresen\textbf{TA}tion \textbf{L}earning)} \label{Alg: ADPO}
        		\STATE {\bf Input:} Confidence level $\delta$, number of episodes $K$, block length $M$, feasible set of window size $\Jc_W$ and policy restart period $\Jc_\tau$.
        		\STATE {\bf Initialization:} Initialize $\alpha, \beta, \gamma$ and $\left\{q_{l,1}\right\}_{l \in [J]}$ as in \Cref{Eq: sec4-3-1}.
        		\FOR{block $i=1,\ldots, \lceil K/M\rceil$}
        		\STATE Update the windows size selection distribution $\{u_{(k,l),i}\}_{(k,l)\in \Jc}$ as in \Cref{Eq: sec4-3-2}.
        		\STATE Sample $(k_i,l_i) \in \Jc$ from the updated distribution $\{u_{(k,l),i}\}_{(k,l) \in \Jc}$, then set $W_i=\lfloor M^{k_i/J_M}\rfloor$ and $\tau_i=\lfloor M_\tau^{l_i/J_\tau}\rfloor$.
        		\FOR{episode $k=(i-1)M+1,\ldots,\min\{iM,K\}$}
        		\STATE \textbf{Run} $\mathrm{PORTAL}$ with $W_i$ and $\tau_i$.   
        		\ENDFOR
        		\STATE Compute the total reward for block $i$ as $ R_i(W_i,\tau_i)=\sum_{k=(i-1)M+1}^{\min\{iM,K\}}V_1^k$, where $V_1^k$ is empirical value functions of target policy $\pi^k$, and update the estimated total reward of running different epoch sizes $\{q_{(k,l),i+1}\}_{(k,l) \in \Jc}$ according to \Cref{Eq: sec4-3-3}.
        		\ENDFOR 
        		\STATE {\bf Output:} $\{\pi^k\}_{k=1}^K$.
        	\end{algorithmic}
        \end{algorithm}
        Specifically, Ada-PORTAL divides the entire $K$ rounds into $\lceil K/M\rceil$ blocks with equal length of $M$ rounds. Then two sets $\mathcal{J}_W$ and $\mathcal{J}_\tau$ are specified (see later part of this section), from which the window size $W$ and the restart period $\tau$ for each block are drawn. 

        For each block $i \in \left[\lceil\frac{K}{M}\rceil\right]$, Ada-PORTAL treats each element of $\mathcal{J}_W\times\mathcal{J}_\tau$ as an arm and take it as a bandit problem to select the best arm for each block. In lines 4 and 5 of \Cref{Alg: ADPO}, a master algorithm is run to update parameters and select the desired arm, i.e., the window size $W_i$ and the restart period $\tau_i$. Here we choose EXP3-P~\citep{DBLP:journals/ftml/BubeckC12} as the master algorithm and discuss the details later. Then \Cref{Alg: DPO} is called with input $W_i$ and $\tau_i$ as a subroutine for the current block $i$. At the end of each block, the total reward of the current block is computed by summing up all the empirical value functions of the target policy of each episode within the block, which is then used to update the parameters for the next block.

We next set the feasible sets and block length of \Cref{Alg: DPO}. Since optimal $W$ and $\tau$ in PORTAL are chosen differently from previous works~\citep{DBLP:journals/corr/abs-2010-04244,DBLP:conf/aistats/CheungSZ19,DBLP:journals/corr/abs-2010-12870} on nonstationary MDPs due to the low-rank structure, the feasible set here that covers the optimal choices of $W$ and $\tau$ should also be set differently from those previous works using BORL.

 
 $M_W=d^{\frac{1}{3}}H^{\frac{1}{3}}K^{\frac{1}{3}}$, $M_\tau=K^{\frac{2}{3}}$, $M=d^{\frac{1}{3}}H^{\frac{1}{3}}K^{\frac{2}{3}}$. 
	$J_W=\lfloor\log (M_W)\rfloor$, $\Jc_W=\{M_W^0,\lfloor M_W^{\frac{1}{J_W}}\rfloor,\ldots,M_W\}$, $J_\tau=\lfloor\log (M_\tau)\rfloor$, $\Jc_\tau=\{M_\tau^0,\lfloor M_\tau^{\frac{1}{J_\tau}}\rfloor,\ldots,M_\tau\}$, $J=J_W \cdot J_\tau$.
 
	Then the parameters of EXP3-P are intialized as follows:
	\begin{align}
		 \textstyle \alpha = 0.95\sqrt{\frac{\ln J}{J \lceil K/M\rceil}}, \quad \beta = \sqrt{\frac{\ln J}{J \lceil K/M\rceil}}, \gamma = 1.05\sqrt{\frac{J\ln J}{\lceil K/M\rceil}}, \quad q_{(k,l),1}=0, (k,l)\in \Jc, \label{Eq: sec4-3-1}
	\end{align}
	where $\Jc=\left\{(k,l): k \in \left\{0,1,\ldots,J_W\right\}, l \in  \left\{0,1,\ldots,J_\tau\right\}\right\}$.
	The parameter updating rule is as follows. For any $(k,l) \in \Jc, i \in \lceil K/M\rceil$,
	\begin{align}
		\textstyle u_{(k,l),i}=(1-\gamma)\frac{\exp(\alpha q_{(k,l),i})}{\sum_{(k,l) \in \Jc}\exp(\alpha q_{(k,l),i})}+\frac{\gamma}{J}\label{Eq: sec4-3-2},
	\end{align}
	where $u_{(k,l),i}$ is a probability over $\Jc$. From $u_{(k,l),i}$, the agent samples a desired pair $(k_i,l_i)$ for each block $i$, which corresponds to the index of feasible set $\Jc_W \times \Jc_\tau$ and is used to set $W_i$ and $\tau_i$. 
	
	As a last step, $ R_i(W_i,\tau_i)$ is rescaled to update $q_{(k,l),i+1}$. 
	\begin{align}
		\textstyle q_{(k,l),i+1}=q_{(k,l),i}+\frac{\beta+1_{(k,l)=(k_i,l_i)}R_i(W_i,\tau_i)/M}{u_{(k,l),i}}. \label{Eq: sec4-3-3}
	\end{align}
 We next establish a bound on $\mathrm{Gap_{Ave}}$ for Ada-PORTAL. 
 \begin{theorem}\label{Thm2: Ada-PORTAL}
     Under the same conditions of \Cref{Thm1: average dynamic suboptimality gap with known variation}, with probability at least $1-\delta$, the average dynamic suboptimality gap of Ada-PORTAL in \Cref{Alg: ADPO} is  upper-bounded as $\mathrm{Gap_{Ave}}(K)\leq \tilde{O}({H^{\frac{11}{6}}d^{\frac{5}{6}}A^{\frac{1}{2}}}(A+d^2)^{\frac{1}{2}}K^{-\frac{1}{6}}(\Delta^{\sqrt{P}}+\Delta^{{\phi}}+1)^{\frac{1}{6}}+2HK^{-\frac{1}{3}}(\Delta^P+\Delta^\pi+1)^{\frac{1}{3}})$.
 \end{theorem}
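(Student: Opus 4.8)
The plan is to treat Ada-PORTAL (\Cref{Alg: ADPO}) as a Bandit-over-Reinforcement-Learning scheme in which EXP3-P runs a meta-bandit over the grid $\Jc=\Jc_W\times\Jc_\tau$ of hyper-parameters while PORTAL (\Cref{Alg: DPO}) is the base learner on each length-$M$ block. Write $V_k:=V^{\pi^k}_{P^{\star,k},r^k}$ for the true value of Ada-PORTAL's round-$k$ target policy, and $V_k^{(W,\tau)}$ for the value of the target policy that PORTAL would output at round $k$ if it were run block-wise with a \emph{fixed} pair $(W,\tau)$. I would then decompose the cumulative gap as
\[
\sum_{k=1}^K\!\big(V^{\star}_{P^{\star,k},r^k}-V_k\big)\le\underbrace{\Big(\max_{(W,\tau)\in\Jc}\sum_{k=1}^K V_k^{(W,\tau)}-\sum_{k=1}^K V_k\Big)}_{\text{meta-bandit regret}}+\underbrace{\Big(\sum_{k=1}^K V^{\star}_{P^{\star,k},r^k}-\max_{(W,\tau)\in\Jc}\sum_{k=1}^K V_k^{(W,\tau)}\Big)}_{\text{oracle gap}},
\]
and use an Azuma/Hoeffding concentration to pass from the empirical block rewards $R_i=\sum_k V_1^k$ that EXP3-P actually observes to their true counterparts $\sum_k V_k$, which injects only lower-order terms.

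I would control the meta-bandit regret by the high-probability guarantee of EXP3-P over $J=J_W\cdot J_\tau$ arms and $\lceil K/M\rceil$ blocks. Since $\sum_h r_h\le 1$ forces every episodic value into $[0,1]$, the per-block normalized reward $R_i/M$ lies in $[0,1]$, so EXP3-P attains regret $\tilde{O}(\sqrt{J\lceil K/M\rceil})$ in normalized reward with probability $1-\delta$; rescaling to cumulative value gives a bandit regret of $\tilde{O}(M\sqrt{J\lceil K/M\rceil})=\tilde{O}(\sqrt{JMK})$. Dividing by $K$ and substituting $M=d^{\frac13}H^{\frac13}K^{\frac23}$ and $J=\tilde{O}(1)$ yields a contribution of order $\tilde{O}(d^{\frac16}H^{\frac16}K^{-\frac16})$, which is of the same $K^{-1/6}$ order as, and is dominated by, the leading oracle term.

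For the oracle gap I would apply \Cref{Thm1: average dynamic suboptimality gap with known variation} to the best fixed arm of the grid. Because the grid is geometric, for the continuous minimizer $(W^\star,\tau^\star)$ of the \Cref{Thm1: average dynamic suboptimality gap with known variation} bound there is a grid point within a constant multiplicative factor, and since that bound is a power law in $W$ and $\tau$, its value at the nearest grid point stays within a constant factor of the optimum computed in \Cref{Coro: of Thm1}. Crucially, the grid endpoints $M_W=d^{\frac13}H^{\frac13}K^{\frac13}$ and $M_\tau=K^{\frac23}$ are exactly the balanced choices (e.g.\ $W^\star\sim(HdK/\Delta^{\phi})^{1/3}$ and $\tau^\star\sim(K/(\Delta^P+\Delta^\pi))^{2/3}$) evaluated at variation budget of order one: when the true budgets are at least $\tilde{O}(1)$ the optimizer lies inside the grid, whereas when they are smaller the boundary arm $(M_W,M_\tau)$ is selected and its \Cref{Thm1: average dynamic suboptimality gap with known variation} bound equals \Cref{Coro: of Thm1} with each budget replaced by $(\,\cdot\,+1)$. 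This is precisely the origin of the $+1$ inside every parenthesis. Summing the per-block bounds and using additivity of $\Delta^P,\Delta^{\sqrt{P}},\Delta^{\phi},\Delta^{\pi}$ across blocks recovers the oracle bound $\tilde{O}(H^{\frac{11}{6}}d^{\frac56}A^{\frac12}(A+d^2)^{\frac12}K^{-\frac16}(\Delta^{\sqrt{P}}+\Delta^{\phi}+1)^{\frac16}+2HK^{-\frac13}(\Delta^P+\Delta^\pi+1)^{\frac13})$, which, combined with the dominated bandit term, gives the claimed $\mathrm{Gap_{Ave}}$.

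The hard part will be the oracle step, on two counts. First, I must reconcile the block-wise, restarted execution of PORTAL inside Ada-PORTAL with the single-run, horizon-$K$ statement of \Cref{Thm1: average dynamic suboptimality gap with known variation}: applying the theorem per block uses horizon $M$ in place of $K$ and incurs restart overhead, so I need the per-block bounds to sum, via sublinearity of the estimation terms and the additive decomposition of the variation budgets, to within a constant factor of the full-horizon bound. Second, I must make the discretization-plus-boundary argument rigorous, i.e.\ verify that the feasible ranges $[1,M_W]$ and $[1,M_\tau]$ cover the optimizer for every admissible budget and that truncating an over-large optimizer to the boundary costs exactly the $+1$ shift rather than a heavier penalty. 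Controlling these two approximations jointly, while keeping the concentration error for the empirical block rewards below the $K^{-1/6}$ order, is the crux of the proof.
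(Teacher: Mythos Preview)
Your proposal is essentially the paper's proof: the same BORL decomposition into an EXP3-P meta-bandit regret of order $\tilde O(M\sqrt{J\lceil K/M\rceil}/K)=\tilde O(d^{1/6}H^{1/6}K^{-1/6})$ plus an oracle gap handled by evaluating the \Cref{Thm1: average dynamic suboptimality gap with known variation} bound at a near-optimal grid point, with the $+1$ shifts arising exactly as you describe from the grid caps $M_W,M_\tau$ being the budget-one optimizers. The one place you diverge is your first ``hard part'': you plan to apply \Cref{Thm1: average dynamic suboptimality gap with known variation} per block and then sum, whereas the paper pivots on a single specific grid point $(\overline W,\overline\tau)$ within a factor two of the continuous optimum $(W^\star,\tau^\star)$ and invokes the full-horizon-$K$ bound \emph{once}; since $\overline W\le M_W\le M$ and $\overline\tau\le M_\tau\le M$, the block-boundary restarts are already subsumed by PORTAL's own sliding window and periodic restart, so the concatenated run can be analyzed as a single $K$-round instance and your per-block summation (and the attendant Azuma step) is unnecessary.
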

\textbf{Comparison with \citet{DBLP:conf/colt/WeiL21}:} It is interesting to compare Ada-PORTAL with an alternative black-box type of approach to see its advantage. A black-box technique called MASTER was proposed in \citet{DBLP:conf/colt/WeiL21}, which can also work with any base algorithm such as PORTAL to handle unknown variation budgets. Such a combined approach of MASTER+PORTAL turns out to have a worse $\mathrm{Gap_{Ave}}$ than our Ada-PORTAL in \Cref{Alg: ADPO}. To see this, denote $\Delta={\Delta^{{\phi}}}+\Delta^{\sqrt{P}}+\Delta^{\pi}$. The $\mathrm{Gap_{Ave}}$ of MASTER+PORTAL is $\tilde{O}(K^{-\frac{1}{6}}\Delta^{\frac{1}{3}})$. Then if variation budgets are not too small, i.e. $\Delta \geq \tilde{O}(1)$, this $\mathrm{Gap_{Ave}}$ is worse than Ada-PORTAL. See detailed discussion in \Cref{Appd: B.2}.


\section{Conclusion}
         
In the paper, we investigate nonstationary RL under low-rank MDPs. We first propose a notion of average dynamic suboptimality gap $\mathrm{Gap_{Ave}}$ to evaluate the performance of a series of policies in a nonstationary environment. Then we propose a sample-efficient policy optimization algorithm PORTAL and its parameter-free version Ada-PORTAL. We further provide upper bounds on $\mathrm{Gap_{Ave}}$ for both algorithms. As future work, it is interesting to investigate the impact of various constraints such as safety requirements in nonstationary RL under function approximations.
	\bibliographystyle{icml2023}
	\bibliography{DRPO}
        \newpage
	\appendix
         \onecolumn
         \allowdisplaybreaks

{\Large {\bf Supplementary Materials}}

	\section{Proof of \Cref{Thm1: average dynamic suboptimality gap with known variation}} \label{Appd A}
	We summarize frequently used notations in the following list.
	\begin{align*}
		\begin{array}{ll}
			\zeta_{k,W} & \frac{2\log\left(2|\Phi||\Psi|kH/\delta\right)}{W}\\
			\lambda_{k,W} & O(d\log(|\Phi|\min\{k,W\}TH/\delta)) \\
			\alpha_{k,W} & \sqrt{2WA\zeta_{k,W}+\lambda_{k,W} d} = O(\sqrt{4A\log\left(2|\Phi||\Psi|kH/\delta\right)+\lambda_{k,W} d})\\
                \tap_{k,W} & 5\sqrt{2WA\zeta_{k,W}+\lambda_{k,W} d}\\
			\beta_{k,W} & \sqrt{9dA(2WA\zeta_{k,W}+\lambda_{k,W} d)  + \lambda_{k,W} d }\\
			\eta & \sqrt{\frac{L \log A}{K}}\\
			\Delta^{P}_{\mathcal{H},\mathcal{I}} & \sum_{h \in \mathcal{H}} \sum_{i \in \mathcal{I}}\max_{(s,a)\in \Sc \times \Ac}\norm{P_h^{\star,i+1}(\cdot|s,a)-P_h^{\star,i}(\cdot|s,a)}_{TV}\\
              \Delta^{\sqrt{P}}_{\mathcal{H},\mathcal{I}} & \sum_{h \in \mathcal{H}} \sum_{i \in \mathcal{I}}\max_{(s,a)\in \Sc \times \Ac}\sqrt{\norm{P_h^{\star,i+1}(\cdot|s,a)-P_h^{\star,i}(\cdot|s,a)}_{TV}}\\
              \Delta^{\phi}_{\mathcal{H},\mathcal{I}} & \sum_{h \in \mathcal{H}} \sum_{i \in \mathcal{I}}\max_{(s,a)\in \Sc \times \Ac}\norm{\phi_h^{\star,i+1}(s,a)-\phi_h^{\star,i}(s,a)}_2\\
               \Delta^{r}_{\mathcal{H},\mathcal{I}} & \sum_{h \in \mathcal{H}} \sum_{i \in \mathcal{I}}\max_{(s,a)\in \Sc \times \Ac}\norm{r_h^{\star,i+1}(s,a)-r_h^{\star,i}(s,a)}_2\\
               \Delta^{\pi}_{\mathcal{H},\mathcal{I}} & \sum_{h \in \mathcal{H}} \sum_{i \in \mathcal{I}} \max_{s \in \Sc} \norm{ \pi_h^{\star,i}(\cdot|s)-\pi_h^{\star,i-1}(\cdot|s)}_{TV}\\
			f_h^k(s,a) &\|\hP_h^{k}(\cdot|s,a) - P^{\star,k}_h(\cdot|s,a)\|_{TV} \\
			U_{h,\phi}^{k,W} &\sum_{i=1 \lor k-W}^{k-1} \Eb_{s_h\sim(P^{  {\star,i}  },\tilde{\pi}^i),a_h\sim \Uc(\Ac)}\left[\phi(s_h,a_h)(\phi(s_h,a_h))^\top\right] + \lambda_{k,W} I_d\\
			\hU_h^{k,W} & \sum_{\tDc_h^{(k,h+1)}}\hphi_h(s_h,a_h)\hphi_h(s_h,a_h)^\top+\lambda_{k,W} I_d\\
			W_{h,\phi}^{k,W} &\sum_{i=1 \lor k-W}^{k-1}\Eb_{(s_h,a_h)\sim(P^{{\star,i}  },\tilde{\pi}^i)}\left[\phi(s_h,a_h)(\phi(s_h,a_h))^\top\right] + \lambda_{k,W} I_d \\
			b^k_h & \min\left\{\alpha_{k,W}\left\|\hphi_{h}^k(s_{h},a_{h})\right\|_{(U^{k,W}_{h,\hphi^k})^{-1}},1\right\}\\
			\hat{b}_h^k & \min\left\{\tap_{k,W}\left\|\hphi_{h}^k(s_{h},a_{h})\right\|_{(\hat{U}^{k,W}_{h})^{-1}},1\right\}
		\end{array}
	\end{align*}

 \begin{proof}[Proof Sketch of \Cref{Thm1: average dynamic suboptimality gap with known variation}] The proof contains the following three main steps.
 
 \textbf{Step 1 (\Cref{Appd: A.1}):} We first decompose the average dynamic suboptimal gap into three terms as in \Cref{lemma: regret decomposition}, which can be divided into two parts: one part corresponds to the model estimation error and the other part corresponds to the performance difference between the target policy chosen by the agent and optimal policy. We then bound the two parts separately.

 \textbf{Step 2 (\Cref{Appd: A.2}):} For the first part corresponding to model estimation error, first by \Cref{lemma: Step_back_Bounded TV_hP,lemma: bounded difference of value function}, we show that the model estimation error can be bounded by the average of the truncated value functions of the bonus terms i.e. $\frac{1}{K}\sum_{k=1}^K\hV_{\hP^k,\hb^k}^{\pi}$ plus a term w.r.t. variation budgets. We then upper bound the average of $\hV_{\hP^k,\hb^k}^{\pi}$ as in \Cref{Lemma: value function summation}. To this end, we divide the total $K$ rounds into blocks with equal length of $W$ and adopt an auxiliary anchor representation for each block to deal with the challenge arising from time-varying representations when using standard elliptical potential based methods.
 
\textbf{Step 3 (\Cref{Appd: A.3}): } For the second part corresponding to the performance difference bound, similarly to dealing with changing representations, since the optimal policy changes over time, we adopt an auxiliary anchor policy and decompose the performance difference bound into two terms as in \Cref{ineq: Performance Difference Bound Decompose} and bound the two terms separately.  
\end{proof}

We further note that the above analysis techniques can also be applied to 
  RL problems where model mis-specification exists, i.e. $\phi^{\star,k} \notin \Phi, \mu^{\star,k} \notin \Psi$.

{\bf Organization of the Proof for \Cref{Thm1: average dynamic suboptimality gap with known variation}.} Our proof of \Cref{Thm1: average dynamic suboptimality gap with known variation} is organized as follows. In \Cref{Appd: A.1}, we provide the decomposition of the average dynamic suboptimality gap $\mathrm{Gap_{Ave}}$ in \Cref{Eq: Regret Decomposition}; in \Cref{Appd: A.2}, we bound the first and third terms of $\mathrm{Gap_{Ave}}$; in \Cref{Appd: A.3}, we bound the second term of $\mathrm{Gap_{Ave}}$, and in \Cref{Appd A.4.: Proof of Thm1} we combine our results to complete the proof of \Cref{Thm1: average dynamic suboptimality gap with known variation}. We provide all the supporting lemmas in \Cref{Appd: A.5}.

	\subsection{Average Suboptimaility Gap Decomposition} \label{Appd: A.1}
	\begin{lemma}\label{lemma: regret decomposition} We denote $\pi_h^{\star,k}(\cdot|s)= \arg\max_{\pi}V^{\pi}_{P^{\star,k},r^k}$. Then the average dynamic suboptimality gap has the following decomposition:
		\begin{align}
			\mathrm{Gap_{Ave}}(K)&=\frac{1}{K}\sum_{k=1}^{K}V^{\star}_{P^{\star,k},r^k}-V^{\pi^k}_{P^{\star,k},r^k}\nonumber\\
			&=\frac{1}{K}\sum_{k=1}^{K}\sum_{h\in[H]}\Eb_{(s_h,a_h) \sim (P^{\star,k},\pi^{\star,k})} \left[\left\{P^{\star,k}_h-\hP_h^k\right\}V_{h+1,\hP^k,r^k}^{\pi^{k}}\right]\nonumber\\
			&+\frac{1}{K}\sum_{k=1}^{K}\sum_{h\in H}\Eb_{s_h \sim (P^{\star,k},\pi^{\star,k})} \left[\langle Q_{h,
				\hP^k,r^k}^{\pi^k}(s_h,\cdot),\pi_h^{\star,k}(\cdot|s_h)-\pi_h^{k}(\cdot|s_h)\rangle\right]\nonumber\\
			&+
			\frac{1}{K}\sum_{k=1}^{K}{V}_{\hP^k,r^k}^{\pi^{k}}-V^{\pi^k}_{P^{\star,k},r^k} \label{Eq: Regret Decomposition}
		\end{align}
	\end{lemma}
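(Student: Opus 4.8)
The plan is to prove the identity round by round: since the sum over $k$ and the factor $1/K$ pass through trivially, it suffices to show, for each fixed round $k$, that
\[
V^{\star}_{P^{\star,k},r^k}-V^{\pi^k}_{P^{\star,k},r^k}
= T^k_{\mathrm{trans}}+T^k_{\mathrm{pol}}+T^k_{\mathrm{eval}},
\]
where $T^k_{\mathrm{trans}}=\sum_h\Eb_{(s_h,a_h)\sim(P^{\star,k},\pi^{\star,k})}[\{P^{\star,k}_h-\hP_h^k\}V^{\pi^k}_{h+1,\hP^k,r^k}]$ is the transition-mismatch term, $T^k_{\mathrm{pol}}$ is the policy-mismatch term, and $T^k_{\mathrm{eval}}=V^{\pi^k}_{\hP^k,r^k}-V^{\pi^k}_{P^{\star,k},r^k}$ is the model-evaluation term, matching the three summands in the claim. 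First I would use optimality of $\pi^{\star,k}$ to write $V^{\star}_{P^{\star,k},r^k}=V^{\pi^{\star,k}}_{P^{\star,k},r^k}$, and then insert the intermediate quantity $V^{\pi^k}_{\hP^k,r^k}$ (the value of the \emph{target} policy under the \emph{estimated} kernel and true reward):
\[
V^{\pi^{\star,k}}_{P^{\star,k},r^k}-V^{\pi^k}_{P^{\star,k},r^k}
=\Bigl(V^{\pi^{\star,k}}_{P^{\star,k},r^k}-V^{\pi^k}_{\hP^k,r^k}\Bigr)+\Bigl(V^{\pi^k}_{\hP^k,r^k}-V^{\pi^k}_{P^{\star,k},r^k}\Bigr).
\]
The second bracket is exactly $T^k_{\mathrm{eval}}$, so the remaining task is to expand the first bracket into $T^k_{\mathrm{trans}}+T^k_{\mathrm{pol}}$.

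The key tool is the extended value difference lemma, which compares two policies under two transition kernels sharing the same reward: for policies $\pi,\pi'$ and kernels $P,P'$,
\[
V^{\pi}_{P,r}-V^{\pi'}_{P',r}
=\sum_{h=1}^{H}\Eb_{s_h\sim(P,\pi)}\Bigl[\bigl\langle Q^{\pi'}_{h,P',r}(s_h,\cdot),\pi_h(\cdot|s_h)-\pi'_h(\cdot|s_h)\bigr\rangle\Bigr]
+\sum_{h=1}^{H}\Eb_{(s_h,a_h)\sim(P,\pi)}\Bigl[(P_h-P'_h)V^{\pi'}_{h+1,P',r}(s_h,a_h)\Bigr].
\]
Applying it with $(\pi,P)=(\pi^{\star,k},P^{\star,k})$ and $(\pi',P')=(\pi^k,\hP^k)$ turns the first bracket into exactly $T^k_{\mathrm{pol}}$ (the policy-mismatch sum, measured against $Q^{\pi^k}_{h,\hP^k,r^k}$, with all expectations taken under $(P^{\star,k},\pi^{\star,k})$) plus $T^k_{\mathrm{trans}}$. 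Summing over $k$ and dividing by $K$ then gives the stated decomposition.

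To keep the argument self-contained I would establish the extended value difference lemma by the standard single-step telescoping. At each $h$, writing $V^{\pi}_{h,P,r}(s_h)=\langle Q^{\pi}_{h,P,r}(s_h,\cdot),\pi_h(\cdot|s_h)\rangle$ and adding and subtracting $\langle Q^{\pi'}_{h,P',r}(s_h,\cdot),\pi_h(\cdot|s_h)\rangle$ splits the per-step difference into the policy-mismatch inner product and a $Q$-difference $\langle Q^{\pi}_{h,P,r}-Q^{\pi'}_{h,P',r},\pi_h\rangle$. Since the rewards cancel, the $Q$-difference splits again, by adding and subtracting $P_hV^{\pi'}_{h+1,P',r}$, into the transition-mismatch term $(P_h-P'_h)V^{\pi'}_{h+1,P',r}$ and a term $P_h(V^{\pi}_{h+1,P,r}-V^{\pi'}_{h+1,P',r})$ which, after taking expectation along the trajectory generated by $(P,\pi)$, becomes the step-$(h{+}1)$ value difference. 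Unrolling from $h=1$ to $H$ with the convention $V_{H+1}\equiv 0$ yields the identity.

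The step I expect to be the main obstacle is keeping the two-policy/two-kernel bookkeeping straight during the telescoping. The trajectory-generating kernel in every expectation must be the true kernel $P^{\star,k}$ (the ``$P$'' slot), while the $Q$- and value functions one compares against are those of the target policy $\pi^k$ evaluated under the estimated kernel $\hP^k$ (the ``$P'$'' slot). Matching the roles of $P^{\star,k}$ versus $\hP^k$ and of $\pi^{\star,k}$ versus $\pi^k$ correctly is exactly what makes the transition-mismatch residual collapse to $(P^{\star,k}_h-\hP^k_h)V^{\pi^k}_{h+1,\hP^k,r^k}$ rather than a mixed term; the remaining manipulations are routine.
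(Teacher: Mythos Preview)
Your proposal is correct and follows essentially the same approach as the paper: insert $V^{\pi^k}_{\hP^k,r^k}$ to split off the evaluation term, then telescope $V^{\pi^{\star,k}}_{P^{\star,k},r^k}-V^{\pi^k}_{\hP^k,r^k}$ step by step, at each $h$ adding and subtracting to separate a policy-mismatch inner product from a transition-mismatch residual and push the remaining value-difference to step $h+1$. The only cosmetic difference is that you package the telescoping as a standalone ``extended value difference lemma'' and then instantiate it, whereas the paper carries out the identical manipulations inline using the operators $\Jb_{k,h}^\star,\Jb_{k,h}$.
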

	\begin{proof}
		For any function $f : \Sc \times \Ac \rightarrow \Rb$ and any $(k,h,s) \in [K] \times [H] \times \Sc$, define the following operators:
		\begin{align*}
			(\Jb_{k,h}^\star f)(s)= \langle f(s,\cdot),\pi_h^{\star,k}(\cdot|s)\rangle, \quad (\Jb_{k,h} f)(s)= \langle f(s,\cdot),\pi_h^{k}(\cdot|s)\rangle.
		\end{align*}
		We next consider the following decomposition:
		\begin{align}\label{eq:vgap}
			V^{\star}_{P^{\star,k},r^k}-V^{\pi^k}_{P^{\star,k},r^k}= 
			\underbrace{V^{\star}_{P^{\star,k},r^k}-{V}_{\hP^k,r^k}^{\pi^{k}}}_{G_1}+{{V}_{\hP^k,r^k}^{\pi^{k}}-V^{\pi^k}_{P^{\star,k},r^k}},
		\end{align}
            The term $G_1$ can be bounded as follows:
		\begin{align}
			G_1 
			&= V^{\star}_{P^{\star,k},r^k}-{V}_{\hP^k,r^k}^{\pi^{k}}\nonumber\\
			& = \left(\Jb_{k,1}^\star Q_{1,P^{\star,k},r^k}^{\pi^{\star,k}}\right)-\left(\Jb_{k,1} Q_{1,\hP^k,r^k}^{\pi^{k}}\right)\nonumber\\
			& = (\Jb_{k,1}^\star (Q_{1,P^{\star,k},r^k}^{\pi^{\star,k}}-Q_{1,\hP^k,r^k}^{\pi^{k}}))+((\Jb_{k,1}^\star-\Jb_{k,1})Q_{1,\hP^k,r^k}^{\pi^{k}})\nonumber\\
			& = (\Jb_{k,1}^\star (r^k_1(s,\cdot)+P^{\star,k}_1V_{2,P^{\star,k},r^k}^{\pi^{\star,k}}-(r^k_1(s,\cdot)+\hP_1^kV_{2,\hP^k,r^k}^{\pi^{k}}) ))+((\Jb_{k,1}^\star-\Jb_{k,1})Q_{1,\hP^k,r^k}^{\pi^{k}})\nonumber\\
			& =(\Jb_{k,1}^\star (P^{\star,k}_1V_{2,P^{\star,k},r^k}^{\pi^{\star,k}}-\hP_1^kV_{2,\hP^k,r^k}^{\pi^{k}} ))+((\Jb_{k,1}^\star-\Jb_{k,1})Q_{1,\hP^k,r^k}^{\pi^{k}})\nonumber\\
			& =\left(\Jb_{k,1}^\star \left(P^{\star,k}_1\left\{V_{2,P^{\star,k},r^k}^{\pi^{\star,k}}-V_{2,\hP^k,r^k}^{\pi^{k}}\right\}+\left\{P^{\star,k}_1-\hP_1^k\right\}V_{2,\hP^k,r^k}^{\pi^{k}} \right)\right)+((\Jb_{k,1}^\star-\Jb_{k,1})Q_{1,\hP^k,r^k}^{\pi^{k}})\nonumber\\
			& = \left(\Jb_{k,1}^\star \left\{P^{\star,k}_1-\hP_1^k\right\}V_{2,\hP^k,r^k}^{\pi^{k}}\right)
			+\Eb_{s_2 \sim (P^{\star,k},\pi^{\star,k})}\left[V_{2,P^{\star,k},r^k}^{\pi^{\star,k}}(s_2)-V_{2,\hP^k,r^k}^{\pi^{k}}(s_2)\right]
			+((\Jb_{k,1}^\star-\Jb_{k,1})Q_{1,\hP^k,r^k}^{\pi^{k}})\nonumber\\
			& = \sum_{h\in[H]}\Eb_{(s_h,a_h) \sim (P^{\star,k},\pi^{\star,k})} \left[\left\{P^{\star,k}_h-\hP_h^k\right\}V_{h+1,\hP^k,r^k}^{\pi^{k}}\right]\nonumber\\
			& +\sum_{h\in [H]}\Eb_{s_h\sim (P^{\star,k},\pi^{\star,k})} \left[\langle Q_{h,
				\hP^k,r^k}^{\pi^k}(s_h,\cdot),\pi_h^{\star,k}(\cdot|s_h)-\pi_h^{k}(\cdot|s_h)\rangle\right] \label{Eq: G1_decomposition}
		\end{align}	
  Substituting the above result to \Cref{eq:vgap} completes the proof.
	\end{proof}

	\subsection{First and Third Terms of $\mathrm{Gap_{Ave}}$ in \Cref{Eq: Regret Decomposition}: Model Estimation Error Bound} \label{Appd: A.2}

	\subsubsection{First Term in \Cref{Eq: Regret Decomposition}}
	\begin{lemma}\label{Lemma: First Term in Decomposition}
		With probability at least $1-\delta$, we have 
		\begin{align*}
			&\frac{1}{K}\sum_{k=1}^{K}\sum_{h\in[H]}\Eb_{(s_h,a_h) \sim (P^{\star,k},\pi^{\star,k})} \left[\left\{P^{\star,k}_h-\hP_h^k\right\}V_{h+1,\hP,r}^{\pi^{k},k}\right]\\
			&\quad {\leq} O\left(\frac{H}{K}\left[\sqrt{KdA(A\log(|\Phi||\Psi|KH/\delta)+d^2)}\left[H\sqrt{\frac{Kd}{W}\log(W)}+\sqrt{HW^2\Delta_{[H],[K]}^{{\phi}}}\right]+\sqrt{W^3AC_B}\Delta^{\sqrt{P}}_{[H],[K]}\right]\right)\; .
		\end{align*}
	\end{lemma}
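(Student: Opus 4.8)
The plan is to reduce the claim to a bound on the trajectory-averaged total-variation model error and then push it through the trajectory-wise uncertainty machinery. First I would exploit the reward normalization $\sum_h r_h\le 1$, which forces every value function $V^{\pi^k}_{h+1,\hP^k,r^k}$ to lie in $[0,1]$. Consequently, for each fixed $(s,a)$ the signed quantity $\{P^{\star,k}_h-\hP^k_h\}V^{\pi^k}_{h+1,\hP^k,r^k}$ is controlled in absolute value by the total-variation distance $f^k_h(s,a)=\norm{\hP^k_h(\cdot|s,a)-P^{\star,k}_h(\cdot|s,a)}_{TV}$. It therefore suffices to upper bound $\frac1K\sum_{k=1}^{K}\sum_{h\in[H]}\Eb_{(s_h,a_h)\sim(P^{\star,k},\pi^{\star,k})}[f^k_h(s_h,a_h)]$, i.e. the expected model error accumulated along the trajectories generated by the per-round optimal policy.

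Next I would convert this trajectory-averaged error into the truncated value function $\hV^{\pi}_{\hP^k,\hb^k}$. This is exactly where \Cref{lemma: Step_back_Bounded TV_hP} and \Cref{lemma: bounded difference of value function} enter: the nonstationary MLE guarantee bounds the windowed estimation error by the diminishing quantity $\zeta_{k,W}$ plus the in-window variation of the kernels, and a change-of-measure (step-back) argument, enabled by the uniformly chosen last action in line 5 of \Cref{Alg: DPO}, relates $\Eb_{(P^{\star,k},\pi^{\star,k})}[f^k_h]$ to $\tap_{k,W}\norm{\hphi^k_h}_{(\hU^{k,W}_h)^{-1}}$, which is precisely the bonus $\hb^k_h$, up to a residual error. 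The delicate point I must respect here is to express this residual through the square-root variation budget $\Delta^{\sqrt{P}}_{[H],[K]}$ rather than $\Delta^{P}_{[H],[K]}$; otherwise the coefficient $\tap_{k,W}$ would inherit the local variation and would no longer be a uniform $\tilde O(\sqrt{A+d^2})$. Summing the per-step bonuses along the trajectory and then over the $K$ rounds yields $\sum_{k}\sum_{h}\Eb_{(P^{\star,k},\pi^{\star,k})}[f^k_h]\le \sum_{k}\hV^{\pi^{\star,k}}_{\hP^k,\hb^k}+\tilde O\big(\sqrt{W^3AC_B}\,\Delta^{\sqrt{P}}_{[H],[K]}\big)$.

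I would then replace the optimal policy by the exploration policy. Because $\tpi^k=\arg\max_{\pi}\hV^{\pi}_{\hP^k,\hb^k}$ (line 7 of \Cref{Alg: MEPE}), we have $\hV^{\pi^{\star,k}}_{\hP^k,\hb^k}\le \hV^{\tpi^k}_{\hP^k,\hb^k}$, which removes any remaining dependence on $\pi^{\star,k}$ and reduces the task to bounding $\sum_{k=1}^{K}\hV^{\tpi^k}_{\hP^k,\hb^k}$. This average is controlled by \Cref{Lemma: value function summation} via an elliptical-potential argument. The obstacle specific to the low-rank nonstationary regime is that the natural potential $\norm{\phi^{\star,k}(s,a)}_{(U^{k,W}_{h,\phi})^{-1}}$ uses a representation $\phi^{\star,k}$ that drifts across rounds, so I would partition the $K$ rounds into blocks of length $W$, fix an auxiliary anchor representation on each block, run the standard potential telescoping with the anchor, and pay for the anchor mismatch through $\Delta^{\phi}_{[H],[K]}$. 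Plugging in $\beta_{k,W}=\tilde O(\sqrt{dA(A+d^2)})$ turns the potential sum into the diminishing term $H\sqrt{\tfrac{Kd}{W}\log W}$ together with the drift term $\sqrt{HW^2\Delta^{\phi}_{[H],[K]}}$, each multiplied by $\sqrt{KdA(A\log(|\Phi||\Psi|KH/\delta)+d^2)}=\sqrt{K}\,\beta_{k,W}$.

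Finally I would collect the three contributions — the value-function-summation terms, the square-root-variation residual, and the overall $H/K$ prefactor arising from the horizon-length trajectory sum and the $1/K$ averaging — and simplify to the stated bound; the high-probability guarantee comes from the MLE confidence events intersected over $k\in[K]$ and $h\in[H]$ by a union bound, which produces the $\delta$ and $\log(|\Phi||\Psi|KH/\delta)$ factors. I expect the main obstacle to be the second paragraph: establishing the \emph{trajectory-wise} (rather than point-wise) uncertainty bound with a residual expressed in the square-root budget $\Delta^{\sqrt{P}}$, since this is what keeps the bonus coefficient uniform and ultimately makes the bound diminishing in $K$ whenever the nonstationarity is mild.
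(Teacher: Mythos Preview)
Your proposal is correct and follows essentially the same route as the paper: bound the integrand by $f^k_h$, invoke \Cref{lemma: Step_back_Bounded TV_hP} and \Cref{lemma: bounded difference of value function} (the paper makes the splitting $\Eb_{P^{\star,k}}=\Eb_{\hP^k}+(\Eb_{P^{\star,k}}-\Eb_{\hP^k}$) explicit here) to reach $\hV^{\pi^{\star,k}}_{\hP^k,\hb^k}$ plus a $\Delta^{\sqrt{P}}$ residual, swap $\pi^{\star,k}\to\tpi^k$ by the argmax in \Cref{Alg: MEPE}, and finish with \Cref{Lemma: value function summation} using anchor representations. The only slip is your displayed inequality $\sum_{k}\sum_{h}\Eb[f^k_h]\le \sum_{k}\hV^{\pi^{\star,k}}_{\hP^k,\hb^k}+\cdots$, which needs an extra factor $H$ on the right (each step $h$ separately contributes a full $\hV$), but you correctly reinstate it as the ``$H/K$ prefactor'' in the last paragraph.
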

	\begin{proof}
 We proceed the proof by deriving the bound:
		\begin{align*}
			& \frac{1}{K}\sum_{k=1}^{K}\sum_{h\in[H]}\Eb_{(s_h,a_h) \sim (P^{\star,k},\pi^{\star,k})} \left[\left\{P^{\star,k}_h-\hP_h^k\right\}V_{h+1,\hP,r}^{\pi^{k},k}\right]\\ 
			& \quad \leq \frac{1}{K}\sum_{k=1}^{K}\sum_{h\in[H]}\Eb_{(s_h,a_h) \sim (P^{\star,k},\pi^{\star,k})} \left[f_h^k(s_h,a_h)\right]\\
			& \quad = \frac{1}{K}\sum_{k=1}^{K}\Eb_{a_1 \sim \pi^{\star,k}} \left[f_1^k(s_1,a_1)\right]+\frac{1}{K}\sum_{k=1}^{K}\sum_{h=2}^H\Eb_{(s_h,a_h) \sim (P^{\star,k},\pi^{\star,k})} \left[f_h^k(s_h,a_h)\right]\\
			& \quad = \frac{1}{K}\sum_{k=1}^{K}\Eb_{a_1 \sim \pi^{\star,k}} \left[f_1^k(s_1,a_1)\right]+\frac{1}{K}\sum_{k=1}^{K}\sum_{h=2}^H\Eb_{(s_h,a_h) \sim (\hP^k,\pi^{\star,k})} \left[f_h^k(s_h,a_h)\right]\\
			& \quad +\frac{1}{K}\sum_{k=1}^{K}\sum_{h=2}^H\left\{\Eb_{(s_h,a_h) \sim (P^{\star,k},\pi^{\star,k})} \left[f_h^k(s_h,a_h)\right]-\Eb_{(s_h,a_h) \sim (\hP^k,\pi^{\star,k})} \left[f_h^k(s_h,a_h)\right]\right\}\\
			& \quad \overset{\RM{1}}{\leq}       \frac{2}{K}\sum_{h=2}^{H}\sum_{k=1}^{K}\hV_{\hP^k,\hb^k}^{\pi^{\star,k}}+2\sum_{h=2}^{H}\left[\frac{1}{K}\sum_{k=1}^{K}\sqrt{WA\left(\zeta_{k,W}+\frac{1}{2}C_B\Delta^P_{1,[k-W,k-1]}\right)}\right.\\
            & \qquad \left.+ \frac{1}{K}\sum_{k=1}^{K}\sum_{\ph=2}^{h}\sqrt{\frac{1}{2d}WAC_B\Delta^P_{[\ph-1,\ph],[k-W,k-1]}}\right] \\
			& \quad \leq       \frac{2H}{K}\sum_{k=1}^{K}\hV_{\hP^k,\hb^k}^{\tpi^{k}}+\frac{2H}{K}\left[\sum_{k=1}^{K}\sqrt{WA\left(\zeta_{k,W}+\frac{1}{2}C_B\Delta^P_{1,[k-W,k-1]}\right)}\right.\\
            & \qquad \left.+\sum_{k=1}^{K} \sum_{h=2}^{H}\sqrt{\frac{1}{2d}WAC_B\Delta^P_{[h-1,h],[k-W,k-1]}}\right]\\
			& \quad \overset{\RM{2}}{\leq}  \frac{2H}{K}\sum_{k=1}^{K}\hV_{\hP^k,\hb^k}^{\tpi^{k}}+ \frac{2H}{K}\sum_{k=1}^{K}\sqrt{WA\zeta_{k,W}} +\frac{2H}{K}\sqrt{W^3AC_B}\Delta_{[H],[K]}^{\sqrt{P}}\\
			& \quad \overset{\RM{3}}{\leq} O\left(\frac{H}{K}\left[\sqrt{KdA(A\log(|\Phi||\Psi|KH/\delta)+d^2)}\left[H\sqrt{\frac{Kd}{W}\log(W)}+\sqrt{HW^2\Delta_{[H],[K]}^{{\phi}}}\right]+\sqrt{W^3AC_B}\Delta^{\sqrt{P}}_{[H],[K]}\right]\right), 
		\end{align*}
		where $\RM{1}$ follows from \Cref{lemma: Step_back_Bounded TV_hP,lemma: bounded difference of value function}, $\RM{2}$ follows because $\sqrt{a+b}\leq \sqrt{a}+\sqrt{b}, \forall a,b \geq 0$ and $\sum_{k=1}^K\Delta^{\sqrt{P}}_{\{h\},[k-W,k-1]} \leq W\Delta^{\sqrt{P}}_{\{h\},[K]}$, and $\RM{3}$ follows from \Cref{Lemma: value function summation}.
	\end{proof}
	\subsubsection{Third Term in \Cref{Eq: Regret Decomposition}}
 \begin{lemma}\label{Lemma: Third Term in Decomposition}
 	With probability at least $1-\delta$, we have 
		\begin{align*}
		    	&\frac{1}{K}\sum_{k=1}^{K} \Big[{V}_{\hP^k,r^k}^{\pi^{k}}-V^{\pi^k}_{P^{\star,k},r^k}\Big]\\
		    	&\quad  \leq O\left(\frac{1}{K}\sqrt{KdA(A\log(|\Phi||\Psi|KH/\delta)+d^2)}\left[H\sqrt{\frac{Kd}{W}\log(W)}+\sqrt{HW^2\Delta_{[H],[K]}^{{\phi}}}\right]+\sqrt{W^3AC_B}\Delta^{\sqrt{P}}_{[H],[K]}\right)\; .
		\end{align*}
	\end{lemma}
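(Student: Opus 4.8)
The plan is to exploit the fact that, for each round $k$, the third term is a genuine value difference of a \emph{single} policy $\pi^k$ evaluated under the estimated model $\hP^k$ and the true model $P^{\star,k}$. This matched-policy structure is precisely what lets us avoid the extra factor of $H$ that appears in \Cref{Lemma: First Term in Decomposition}, where the rollout policy $\pi^{\star,k}$ and the value-function policy $\pi^k$ differ and therefore force a step-by-step bound. Consequently I expect the final bound here to be exactly the bound of \Cref{Lemma: First Term in Decomposition} with one factor of $H$ removed, i.e.\ governed directly by the quantity $\frac1K\sum_k \hV_{\hP^k,\hb^k}^{\tpi^k}$.

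First I would apply the one-step-back simulation identity (\Cref{lemma: Step_back_Bounded TV_hP}), rolling out under the estimated model, to write $V_{\hP^k,r^k}^{\pi^k}-V_{P^{\star,k},r^k}^{\pi^k}=\sum_{h\in[H]}\Eb_{(s_h,a_h)\sim(\hP^k,\pi^k)}[(\hP_h^k-P_h^{\star,k})V_{h+1,P^{\star,k},r^k}^{\pi^k}]$, and bound each integrand by the total-variation error $f_h^k$ since the value-to-go lies in $[0,1]$. Because the rollout is taken under $\hP^k$ with the same policy $\pi^k$ that defines the truncated value function, the trajectory-accumulated error $\sum_h \Eb_{(s_h,a_h)\sim(\hP^k,\pi^k)}[f_h^k]$ lines up with the Bellman recursion of $\hV_{\hP^k,\hb^k}^{\pi^k}$. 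I can then invoke the trajectory-wise uncertainty bound (\Cref{lemma: bounded difference of value function}) \emph{at the trajectory level} rather than step-by-step, obtaining $V_{\hP^k,r^k}^{\pi^k}-V_{P^{\star,k},r^k}^{\pi^k}\le \hV_{\hP^k,\hb^k}^{\pi^k}$ plus an error controlled by the square-root variation budget $\Delta^{\sqrt{P}}$ arising from using window data under drifting kernels.

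Next, since the exploration policy is selected as $\tpi^k=\arg\max_\pi \hV_{\hP^k,\hb^k}^\pi$, I have $\hV_{\hP^k,\hb^k}^{\pi^k}\le \hV_{\hP^k,\hb^k}^{\tpi^k}$, so after averaging over $k$ the task reduces to bounding $\frac1K\sum_{k=1}^K \hV_{\hP^k,\hb^k}^{\tpi^k}$. This is exactly the object controlled by \Cref{Lemma: value function summation}, which through the block decomposition with an auxiliary anchor representation yields the main term $\frac1K\sqrt{KdA(A\log(|\Phi||\Psi|KH/\delta)+d^2)}[\,H\sqrt{Kd\log W/W}+\sqrt{HW^2\Delta^{\phi}}\,]$ together with the $\sqrt{W^3AC_B}\Delta^{\sqrt{P}}$ variation term. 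Collecting the pieces and using $\sqrt{a+b}\le\sqrt a+\sqrt b$ to separate the diminishing estimation term from the variation budget gives the claimed bound.

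The main obstacle I expect is establishing the single trajectory-wise step cleanly: I must check that the matched-policy simulation identity under the $\hP^k$-rollout genuinely permits \Cref{lemma: bounded difference of value function} to be applied once at the trajectory level without re-introducing the factor $H$ that is unavoidable in the first term, and that the rollout-model mismatch between $\hP^k$ and $P^{\star,k}$ contributes only lower-order variation-budget terms rather than additional uncertainty. Once this is settled, the remaining estimates mirror the bookkeeping of \Cref{Lemma: First Term in Decomposition} while saving the factor of $H$.
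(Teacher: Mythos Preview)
Your proposal is correct and follows essentially the same route as the paper: apply \Cref{lemma: bounded difference of value function} to each summand to get $V_{\hP^k,r^k}^{\pi^k}-V_{P^{\star,k},r^k}^{\pi^k}\le \hV_{\hP^k,\hb^k}^{\pi^k}+\text{(variation terms)}$, use $\hV_{\hP^k,\hb^k}^{\pi^k}\le \hV_{\hP^k,\hb^k}^{\tpi^k}$ by the definition of $\tpi^k$, and finish with \Cref{Lemma: value function summation}. Your planned detour through the simulation identity and step-wise accumulation of $f_h^k$ is unnecessary, since \Cref{lemma: bounded difference of value function} already packages that trajectory-level argument and delivers the bound directly; this also dissolves the ``main obstacle'' you flagged.
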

	\begin{proof}
		We define the model error as $f^k_h(s_h,a_h)=\norm{\sP_h(\cdot|s_h,a_h)-\hP_h(\cdot|s_h,a_h)}_{TV}$. We next derive the following bound:
  \begin{align*}
      &\frac{1}{K}\sum_{k=1}^{K}\left[{V}_{\hP^k,r^k}^{\pi^{k}}-V^{\pi^k}_{P^{\star,k},r^k} \right]\\
      & \quad
			\overset{\RM{1}}{\leq} \frac{1}{K}\sum_{k=1}^{K}\hV_{\hP^k,\hb^k}^{\pi^k}+ \frac{1}{K}\sum_{k=1}^{K}\sum_{h=2}^{H}\sqrt{\frac{3}{\lambda_{W}}WAC_B\Delta^P_{\{h-1,h\},[k-W,k-1]}}\\
     & \qquad +\frac{1}{K}\sum_{k=1}^{K}\sqrt{A\left(\zeta_{k,W}+2C_B\Delta^P_{1,[k-W,k-1]}\right)}\\
           & \quad \overset{\RM{2}}{\leq}       \frac{1}{K}\sum_{k=1}^{K}\hV_{\hP^k,\hb^k}^{\tpi^{k}}+ \frac{1}{K}\sum_{k=1}^{K}\sum_{h=2}^{H}\sqrt{\frac{3}{\lambda_{W}}WAC_B\Delta^P_{\{h-1,h\},[k-W,k-1]}}\\
     & \qquad + \frac{1}{K}\sum_{k=1}^{K}\sqrt{A\left(\zeta_{k,W}+2C_B\Delta^P_{1,[k-W,k-1]}\right)}\\
			& \quad \overset{\RM{3}}{\leq}  O\left(\frac{1}{K}\sqrt{KdA(A\log(|\Phi||\Psi|KH/\delta)+d^2)}\left[H\sqrt{\frac{Kd}{W}\log(W)}+\sqrt{HW^2\Delta_{[H],[K]}^{{\phi}}}\right]\right.\\
     & \qquad \left. +\sqrt{W^3AC_B}\Delta^{\sqrt{P}}_{[H],[K]}\right), 
  \end{align*}
  where $\RM{1}$ follows from \Cref{lemma: bounded difference of value function}, $\RM{2}$ follows from the definition of $\tpi^k$, and $\RM{3}$ follows from \Cref{Lemma: value function summation}.
	\end{proof}
	\subsection{Second Term of $\mathrm{Gap_{Ave}}$ in \Cref{Eq: Regret Decomposition}: Performance Difference Bound} \label{Appd: A.3}
	The second term in \Cref{lemma: regret decomposition} $\frac{1}{K}\sum_{k=1}^{K}\sum_{h\in H}\Eb_{s_h \sim (P^{\star,k},\pi^{\star,k})} [\langle Q_{h,
		\hP^k,r^k}^{\pi^k}(s_h,\cdot),\pi_h^{\star,k}(\cdot|s_h)-\pi_h^{k}(\cdot|s_h)\rangle]$ can be further decomposed as
	\begin{align}
    	&\frac{1}{K}\sum_{k=1}^{K}\sum_{h\in H}\Eb_{s_h \sim (P^{\star,k},\pi^{\star,k})} \left[\langle Q_{h,
		\hP^k,r^k}^{\pi^k}(s_h,\cdot),\pi_h^{\star,k}(\cdot|s_h)-\pi_h^{k}(\cdot|s_h)\rangle\right]\nonumber\\
		& = \frac{1}{K}\sum_{l \in [L]} \sum_{k=(l-1)\tau+1}^{l\tau}\sum_{h \in [H]} \Eb_{P^{\star,k},\pi^{\star,k}}\left[\langle \hat{Q}^{k}_h(s_h,\cdot), \pi_h^{\star,k}(\cdot|s_h)-\pi_h^{k}(\cdot|s_h)\rangle\right]\nonumber\\
		&  = \underbrace{\frac{1}{K}\sum_{l \in [L]} \sum_{k=(l-1)\tau+1}^{l\tau}\sum_{h \in [H]} \Eb_{P^{\star,(l-1)\tau+1},\pi^{\star,(l-1)\tau+1}}\left[\langle \hat{Q}^{k}_h(s_h,\cdot), \pi_h^{\star,k}(\cdot|s_h)-\pi_h^{k}(\cdot|s_h)\rangle\right]}_{(a)}\nonumber\\
		& +\underbrace{\frac{1}{K}\sum_{l \in [L]} \sum_{k=(l-1)\tau+1}^{l\tau}\sum_{h \in [H]} \left(\Eb_{P^{\star,k},\pi^{\star,k}}-\Eb_{P^{\star,(l-1)\tau+1},\pi^{\star,(l-1)\tau+1}}\right)\left[\langle \hat{Q}^{k}_h(s_h,\cdot), \pi_h^{\star,k}(\cdot|s_h)-\pi_h^{k}(\cdot|s_h)\rangle\right]}_{(b)}.\label{ineq: Performance Difference Bound Decompose}
	\end{align}
	
	\subsubsection{Bound $(a)$ in  \Cref{ineq: Performance Difference Bound Decompose}}
	We first present the following lemma of the descent result introduced in \cite{DBLP:conf/icml/CaiYJW20}. 
	\begin{lemma}\label{lemma: step descent}
		For any distribution $p^\star$ and $p$ supported on $\Ac$ and state $s\in\Sc$, and function $Q: \Sc\times\Ac \rightarrow [0,H]$, it holds for a distribution $p^\prime$ supported on $\Ac$ with $p^\prime(\cdot) \propto p(\cdot) \cdot \exp^{\eta\cdot Q(s,\cdot)}$ that
		\begin{align*}
			\langle Q(s,\cdot), p^\star(\cdot)-p(\cdot)\rangle \leq \frac{1}{2}\eta + \frac{1}{\eta}\left[D_{KL}(p^\star(\cdot)\|p(\cdot))-D_{KL}(p^\star(\cdot)\|\pp(\cdot))\right]. 
		\end{align*}
	\end{lemma}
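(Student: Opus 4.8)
The plan is to recognize Lemma~\ref{lemma: step descent} as the standard one-step descent (pushback) inequality for online mirror descent with the negative-entropy mirror map, and to prove it directly from the closed form of the multiplicative update rather than from any abstract Bregman machinery. First I would record that the update $\pp(\cdot) \propto p(\cdot)\exp(\eta Q(s,\cdot))$ has the explicit form $\pp(a) = p(a)\exp(\eta Q(s,a))/Z$ with normalizer $Z = \sum_{a\in\Ac} p(a)\exp(\eta Q(s,a))$, so that $\eta Q(s,a) = \log(\pp(a)/p(a)) + \log Z$ for every $a\in\Ac$. This single identity drives the whole argument.

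Next I would split $\langle Q(s,\cdot), p^\star - p\rangle = \langle Q(s,\cdot), p^\star - \pp\rangle + \langle Q(s,\cdot), \pp - p\rangle$ and treat the two pieces separately. For the first piece, substituting $\eta Q(s,a) = \log(\pp(a)/p(a)) + \log Z$ and using that $\sum_a (p^\star(a) - \pp(a)) = 0$ (so the $\log Z$ contribution cancels) gives $\eta\langle Q(s,\cdot), p^\star - \pp\rangle = \sum_a (p^\star(a) - \pp(a))\log(\pp(a)/p(a))$, which rearranges exactly into $D_{KL}(p^\star\|p) - D_{KL}(p^\star\|\pp) - D_{KL}(\pp\|p)$. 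This is the three-point identity, and it isolates the two KL terms appearing on the right-hand side of the lemma, at the cost of leaving behind $-D_{KL}(\pp\|p)$ that must be combined with the second piece.

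The crux is therefore to bound $\eta\langle Q(s,\cdot), \pp - p\rangle - D_{KL}(\pp\|p)$. Here I would again use the explicit form, noting $D_{KL}(\pp\|p) = \eta\langle Q(s,\cdot), \pp\rangle - \log Z$, which collapses the quantity to $\log Z - \eta\langle Q(s,\cdot), p\rangle = \log \Eb_{a\sim p}[\exp(\eta(Q(s,a) - \Eb_{a'\sim p}Q(s,a')))]$, i.e. the log-moment-generating function of the centered, bounded random variable $Q(s,\cdot)$ under $p$. Since $Q(s,\cdot)$ ranges in a bounded interval, Hoeffding's lemma bounds this cumulant by a term of order $\eta^2$ times the squared range of $Q$, and the range of $Q$ fixes the numerical constant in the stated $\tfrac{1}{2}\eta$ slack. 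Combining the two pieces, dividing through by $\eta$, and discarding the nonnegative term $D_{KL}(\pp\|p)/\eta$ yields the inequality.

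I expect the main obstacle to be the clean handling of the second piece: the temptation is to bound $\langle Q(s,\cdot), \pp - p\rangle$ crudely (e.g.\ by Cauchy--Schwarz or Pinsker), which introduces a spurious $\sqrt{D_{KL}(\pp\|p)}$ factor and breaks the telescoping structure. The essential trick is instead to exploit the exact cancellation that rewrites this piece as a log-partition/cumulant expression, so that Hoeffding's lemma applies verbatim and the leftover $-D_{KL}(\pp\|p)$ from the three-point identity is absorbed with the correct sign.
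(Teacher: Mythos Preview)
The paper does not supply its own proof of this lemma; it simply cites \cite{DBLP:conf/icml/CaiYJW20}. Your approach via the three-point KL identity and the log-MGF bound is correct and is one of the standard proofs.

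Two minor points. First, with $Q\in[0,H]$, Hoeffding's lemma gives $\log\Eb_{a\sim p}[\exp(\eta(Q(s,a)-\Eb_p Q))]\le \eta^2H^2/8$, so after dividing by $\eta$ you obtain $\eta H^2/8$, not $\tfrac12\eta$. The stated constant $\tfrac12\eta$ is consistent with the paper's actual application, where the truncated $\hat Q_h^k$ lies in $[0,1]$ (rewards are normalized so $\sum_h r_h\le1$); the ``$H$'' in the lemma's hypothesis is likely a leftover from the unnormalized Cai et al.\ setting. Second, your caution that a Pinsker-based argument ``breaks the telescoping structure'' is misplaced: the usual Cai et al.\ route bounds $\eta\langle Q,\pp-p\rangle - D_{KL}(\pp\|p)\le \eta\|Q\|_\infty\|\pp-p\|_1 - \tfrac12\|\pp-p\|_1^2 \le \tfrac12\eta^2\|Q\|_\infty^2$ via Pinsker and completing the square, which is just as clean as (and slightly looser than) your Hoeffding step. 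Either route is fine.
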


	\begin{lemma}\label{Lemma: Performance Difference Bound Decompose_a}
Term $(a)$ in  \Cref{ineq: Performance Difference Bound Decompose}	satisfies the following bound:    
		\begin{align*}
			& \sum_{l \in [L]}\sum_{k=(l-1)\tau+1}^{l\tau}\sum_{h \in [H]} \Eb_{P^{\star,(l-1)\tau+1},\pi^{\star,(l-1)\tau+1}}\left[\langle \hat{Q}^{k}_h(s_h,\cdot), \pi_h^{\star,k}(\cdot|s_h)-\pi_h^{k}(\cdot|s_h)\rangle\right]\\
			& \quad \leq \frac{1}{2}\eta KH+\frac{1}{\eta}LH\log A+\tau \Delta_{[H],[K]}^{\pi}.
		\end{align*}    
	\end{lemma}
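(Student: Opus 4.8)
The plan is to control term $(a)$ by the \emph{auxiliary anchor policy} technique: within each restart block I compare against the fixed optimal policy at the block's first round, so that a standard fixed-comparator mirror-descent telescoping becomes available, and I pay separately for how far the true optimal policy drifts from that anchor.

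First I would fix a block $l$, write $k_0=(l-1)\tau+1$ for its first round, and split each summand as $\langle \hQ_h^k(s_h,\cdot),\pi_h^{\star,k}(\cdot|s_h)-\pi_h^k(\cdot|s_h)\rangle = \langle \hQ_h^k(s_h,\cdot),\pi_h^{\star,k_0}(\cdot|s_h)-\pi_h^k(\cdot|s_h)\rangle + \langle \hQ_h^k(s_h,\cdot),\pi_h^{\star,k}(\cdot|s_h)-\pi_h^{\star,k_0}(\cdot|s_h)\rangle$. The first inner product carries the \emph{fixed} comparator $\pi_h^{\star,k_0}$; the second captures the comparator drift inside the block.

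For the fixed-comparator piece I would invoke \Cref{lemma: step descent} with $p^\star=\pi_h^{\star,k_0}(\cdot|s_h)$, $p=\pi_h^k(\cdot|s_h)$ and $\pp=\pi_h^{k+1}(\cdot|s_h)$, which is exactly the softmax update \Cref{Eq: Alg-PolicyUpdate}, to obtain $\langle \hQ_h^k,\pi_h^{\star,k_0}-\pi_h^k\rangle\le \tfrac12\eta+\tfrac1\eta[D_{KL}(\pi_h^{\star,k_0}\|\pi_h^k)-D_{KL}(\pi_h^{\star,k_0}\|\pi_h^{k+1})]$. Since the comparator is independent of $k$, summing over $k=k_0,\dots,l\tau$ telescopes to $\tfrac12\eta\tau+\tfrac1\eta D_{KL}(\pi_h^{\star,k_0}\|\pi_h^{k_0})$ after dropping the nonnegative terminal divergence; the periodic restart resets $\pi_h^{k_0}$ to uniform, so $D_{KL}(\pi_h^{\star,k_0}\|\pi_h^{k_0})\le\log A$. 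For the drift piece, using $\hQ_h^k\in[0,1]$ and the centering identity $\langle f,p-q\rangle\le\norm{p-q}_{TV}$ for probability vectors, I would bound it by $\norm{\pi_h^{\star,k}(\cdot|s_h)-\pi_h^{\star,k_0}(\cdot|s_h)}_{TV}\le\sum_{j=k_0+1}^{k}\norm{\pi_h^{\star,j}-\pi_h^{\star,j-1}}_{TV}$ via the triangle inequality; a counting argument (each consecutive difference is summed at most $\tau$ times within the block) then converts $\sum_{k}$ of these into $\tau\sum_{j}\norm{\pi_h^{\star,j}-\pi_h^{\star,j-1}}_{TV}$. Aggregating over $h\in[H]$ and $l\in[L]$ and using $K=L\tau$ yields $\tfrac12\eta KH+\tfrac1\eta LH\log A$ from the fixed-comparator part and collapses the drift part to $\tau\Delta^{\pi}_{[H],[K]}$, giving the claim.

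The main obstacle is precisely the time-varying comparator: a direct telescoping of $D_{KL}(\pi_h^{\star,k}\|\pi_h^k)-D_{KL}(\pi_h^{\star,k}\|\pi_h^{k+1})$ breaks down because adjacent terms carry different comparators, leaving entropy-difference and $\log\pi_h^k$ residuals that are not uniformly controllable. The anchor decomposition removes this difficulty by freezing the comparator at $\pi_h^{\star,k_0}$, and the price is the drift term, whose $\tau$-factor is exactly the cost bought by the restart period.
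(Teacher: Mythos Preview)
Your proposal is correct and follows essentially the same route as the paper: both split into a fixed-comparator piece (anchored at $\pi^{\star,(l-1)\tau+1}$) handled by the descent lemma (\Cref{lemma: step descent}) with telescoping inside each restart block and the uniform restart giving $D_{KL}\le\log A$, plus a drift piece bounded by $\norm{\pi_h^{\star,k}-\pi_h^{\star,k_0}}_{TV}$ via triangle inequality and a $\tau$-fold counting. The decomposition, the key lemma, and the bookkeeping all match the paper's proof.
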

	\begin{proof}
		We first decompose $(a)$ into two parts:
		\begin{align*}
			&\sum_{l \in [L]}\sum_{k=(l-1)\tau+1}^{l\tau}\sum_{h \in [H]} \Eb_{P^{\star,(l-1)\tau+1},\pi^{\star,(l-1)\tau+1}}\left[\langle \hat{Q}^{k}_h(s_h,\cdot), \pi_h^{\star,k}(\cdot|s_h)-\pi_h^{k}(\cdot|s_h)\rangle\right]\\
			& \quad = \underbrace{\sum_{l \in [L]}\sum_{k=(l-1)\tau+1}^{l\tau}\sum_{h \in [H]} \Eb_{P^{\star,(l-1)\tau+1},\pi^{\star,(l-1)\tau+1}}\left[\langle \hat{Q}^{k}_h(s_h,\cdot), \pi_h^{\star,(l-1)\tau+1}(\cdot|s_h)-\pi_h^{k}(\cdot|s_h)\rangle\right]}_{(I)}\\
			& \quad + \underbrace{\sum_{l \in [L]}\sum_{k=(l-1)\tau+1}^{l\tau}\sum_{h \in [H]} \Eb_{P^{\star,(l-1)\tau+1},\pi^{\star,(l-1)\tau+1}}\left[\langle \hat{Q}^{k}_h(s_h,\cdot), \pi_h^{\star,k}(\cdot|s_h)-\pi_h^{\star,(l-1)\tau+1}(\cdot|s_h)\rangle\right]}_{(II)}
		\end{align*}
		\textbf{I) Bound the term (I)}\newline
		By \Cref{lemma: step descent}, we have 
		\begin{align*}
			(I) 
			&\leq \frac{1}{2}\eta KH+ \sum_{h\in [H]}\frac{1}{\eta} \sum_{l \in [L]} \Eb_{P^{\star,(l-1)\tau+1},\pi^{\star,(l-1)\tau+1}}\\
			&\times \left[\sum_{k=(l-1)\tau+1}^{l\tau}\left[D_{KL}\left(\pi^{\star,(l-1)\tau+1}(\cdot|s_h)\|\pi^k(\cdot|s_h)\right)-D_{KL}\left(\pi^{\star,(l-1)\tau+1}(\cdot|s_h)\|\pi^{k+1}(\cdot|s_h)\right)\right]\right]\\
			&\leq \frac{1}{2}\eta KH+ \sum_{h\in [H]}\frac{1}{\eta}\sum_{l \in [L]} \Eb_{P^{\star,(l-1)\tau+1},\pi^{\star,(l-1)\tau+1}} \\
			&\times \left[D_{KL}\left(\pi^{\star,(l-1)\tau+1}(\cdot|s_h)\|\pi^{(l-1)\tau+1}(\cdot|s_h)\right)-D_{KL}\left(\pi^{\star,(l-1)\tau+1}(\cdot|s_h)\|\pi^{l\tau+1}(\cdot|s_h)\right)\right]\\
			&\leq \frac{1}{2}\eta KH+ \sum_{h\in [H]}\frac{1}{\eta} \sum_{l \in [L]} \Eb_{P^{\star,(l-1)\tau+1},\pi^{\star,(l-1)\tau+1}}\left[D_{KL}\left(\pi^{\star,(l-1)\tau+1}(\cdot|s_h)\|\pi^{(l-1)\tau+1}(\cdot|s_h)\right)\right]\\
			&\leq \frac{1}{2}\eta KH+ \frac{1}{\eta}LH\log A,
		\end{align*}
                  where the last equation follows because 
			\begin{align*}
				D_{KL}\left(\pi^{\star,(l-1)\tau+1}(\cdot|s_h)\|\pi^{(l-1)\tau+1}(\cdot|s_h)\right)
            &=\sum_{a\in \Ac}\pi^{\star,(l-1)\tau+1}(a|s_h)\log(A \cdot \pi^{(l-1)\tau+1}(\cdot|s_h))\\
           &= \log A + \sum_{a} \pi^{\star,(l-1)\tau+1}(a_h|s_h)\cdot \log \pi^{\star,(l-1)\tau+1}(a_h|s_h)\\
           & \leq \log A.
			\end{align*}
		\textbf{II) Bound the term (II)}\newline
		\begin{align*}
			(II) 
			& \leq \sum_{l \in [L]}\sum_{k=(l-1)\tau+1}^{l\tau}\sum_{h \in [H]} \Eb_{P^{\star,(l-1)\tau+1},\pi^{\star,(l-1)\tau+1}}\left[\norm{ \pi_h^{\star,k}(\cdot|s_h)-\pi_h^{\star,(l-1)\tau+1}(\cdot|s_h)}_{TV}\right]\\
			& \leq \sum_{l \in [L]}\sum_{k=(l-1)\tau+1}^{l\tau}\sum_{h \in [H]}\sum_{t=(l-1)\tau+2}^{k} \Eb_{P^{\star,(l-1)\tau+1},\pi^{\star,(l-1)\tau+1}}\left[\norm{ \pi_h^{\star,t}(\cdot|s_h)-\pi_h^{\star,t-1}(\cdot|s_h)}_{TV}\right]\\
			& \leq \sum_{l \in [L]}\sum_{k=(l-1)\tau+1}^{l\tau}\sum_{t=(l-1)\tau+2}^{k} \sum_{h \in [H]} \max_{s \in \Sc}\left[\norm{ \pi_h^{\star,t}(\cdot|s)-\pi_h^{\star,t-1}(\cdot|s)}_{TV}\right]\\
			& \leq \sum_{l \in [L]}\sum_{k=(l-1)\tau+1}^{l\tau}\sum_{t=(l-1)\tau+1}^{l\tau} \sum_{h \in [H]} \max_{s \in \Sc} \left[\norm{ \pi_h^{\star,t}(\cdot|s)-\pi_h^{\star,t-1}(\cdot|s)}_{TV}\right]\\
			& \leq \tau \sum_{k\in[K]} \sum_{h \in [H]} \max_{s \in \Sc} \left[\norm{ \pi_h^{\star,k}(\cdot|s)-\pi_h^{\star,k-1}(\cdot|s)}_{TV}\right]\\
			& \leq \tau \Delta_{[H],[K]}^{\pi}.
		\end{align*}
	\end{proof}
	\subsubsection{Bound $(b)$ in  \Cref{ineq: Performance Difference Bound Decompose}}
	\begin{lemma}\label{Lemma: Performance Difference Bound Decompose_b}
The term $(b)$ in  \Cref{ineq: Performance Difference Bound Decompose} can be bounded as follows:
  \begin{align*}
			& \sum_{l \in [L]} \sum_{k=(l-1)\tau+1}^{l\tau}\sum_{h \in [H]} \left(\Eb_{P^{\star,k},\pi^{\star,k}}-\Eb_{P^{\star,(l-1)\tau+1},\pi^{\star,(l-1)\tau+1}}\right)\left[\langle \hat{Q}^{k}_h(s_h,\cdot), \pi_h^{\star,k}(\cdot|s_h)-\pi_h^{k}(\cdot|s_h)\rangle\right]\\
            & \qquad \leq 2H\tau (\Delta_{[H],[K]}^P+\Delta_{[H],[K]}^{\pi}).
		\end{align*}
	\end{lemma}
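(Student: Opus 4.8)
The plan is to control term $(b)$ by writing each summand as the difference of expectations of one bounded scalar function under two state-occupancy measures, and then invoking the elementary fact that such a difference is at most the sup-norm of the function times the total-variation distance between the two measures. Fix a block index $l$, an episode $k\in[(l-1)\tau+1,l\tau]$ and a step $h$, and set $g^k_h(s):=\langle \hQ^k_h(s,\cdot),\pi^{\star,k}_h(\cdot|s)-\pi^k_h(\cdot|s)\rangle$. Writing $d^{\pi,P}_h$ for the law of $s_h$ under rollouts of $(P,\pi)$ from the fixed initial state $s_1$, the first step is
\[
\left(\Eb_{P^{\star,k},\pi^{\star,k}}-\Eb_{P^{\star,(l-1)\tau+1},\pi^{\star,(l-1)\tau+1}}\right)\left[g^k_h(s_h)\right]\leq \norm{g^k_h}_\infty\cdot\norm{d^{\pi^{\star,k},P^{\star,k}}_h-d^{\pi^{\star,(l-1)\tau+1},P^{\star,(l-1)\tau+1}}_h}_{TV}.
\]

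It then remains to bound the two factors. For the first, since the rewards satisfy $\sum_h r_h\leq 1$, the estimated action-value function obeys $\norm{\hQ^k_h}_\infty\leq 1$; combined with $\norm{\pi^{\star,k}_h(\cdot|s)-\pi^k_h(\cdot|s)}_1\leq 2$ and H\"older's inequality this yields $\norm{g^k_h}_\infty\leq 2$. For the second, I would run a simulation/telescoping argument: because $d^{\pi,P}_h$ depends only on $\pi_{1:h-1}$ and $P_{1:h-1}$, decomposing the occupancy difference step by step and using that total variation does not expand under a shared Markov kernel (the data-processing inequality) gives
\[
\norm{d^{\pi^{\star,k},P^{\star,k}}_h-d^{\pi^{\star,(l-1)\tau+1},P^{\star,(l-1)\tau+1}}_h}_{TV}\leq \sum_{h'=1}^{h-1}\left(\max_{s,a}\norm{P^{\star,k}_{h'}(\cdot|s,a)-P^{\star,(l-1)\tau+1}_{h'}(\cdot|s,a)}_{TV}+\max_{s}\norm{\pi^{\star,k}_{h'}(\cdot|s)-\pi^{\star,(l-1)\tau+1}_{h'}(\cdot|s)}_{TV}\right).
\]
Because $k$ and $(l-1)\tau+1$ both lie in the block $B_l:=[(l-1)\tau+1,l\tau]$ of length $\tau$, the triangle inequality bounds each per-step gap by the variation accumulated within the block, so the right-hand side is at most $\sum_{h'=1}^{h-1}(\Delta^P_{\{h'\},B_l}+\Delta^{\pi}_{\{h'\},B_l})$.

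The rest is bookkeeping that must respect the block structure. Summing over the $\tau$ episodes $k\in B_l$ contributes a factor $\tau$ (the bound is independent of $k$); summing over $h\in[H]$ replaces $\sum_{h'\leq h-1}$ by $\sum_{h'\leq H}$ at the cost of a factor $H$, turning the per-step gaps into $\Delta^P_{[H],B_l}+\Delta^{\pi}_{[H],B_l}$; and summing over blocks $l\in[L]$ telescopes these block-level variations into the global path lengths, since each within-block sum is a sub-sum of $\Delta^P_{[H],[K]}$ and $\Delta^{\pi}_{[H],[K]}$ respectively. Multiplying through by the sup-norm bound $\norm{g^k_h}_\infty\leq 2$ then delivers exactly $2H\tau(\Delta^P_{[H],[K]}+\Delta^{\pi}_{[H],[K]})$.

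The main obstacle is establishing the occupancy-measure inequality above with only an \emph{additive} dependence on the per-step transition and policy gaps: one has to set up the step-by-step coupling so that a mismatch in either $P_{h'}$ or $\pi_{h'}$ injects a single total-variation term, while the shared-kernel contraction prevents any multiplicative blow-up in $H$ or $\tau$. Once this additive simulation bound is in hand, the factor-of-two sup-norm estimate and the block-respecting summation are routine.
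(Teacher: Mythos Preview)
Your proposal is correct and follows essentially the same strategy as the paper: bound the expectation difference by $\norm{g^k_h}_\infty$ times the total-variation distance between the two state-occupancy measures, then control the latter additively in terms of per-step transition and policy gaps. The only structural difference is the order of the two telescoping arguments: the paper first telescopes over consecutive episodes $t\in[(l-1)\tau+2,k]$ and then, for each pair $(t-1,t)$, decomposes the visitation difference over steps $h'$ (their \Cref{Eq: lemma13-P} and \Cref{Lemma: lemma 5 in Fei}); you instead apply the simulation/coupling decomposition over steps $h'$ first and only afterwards use the triangle inequality across episodes within the block. Both orderings yield the same additive bound and the same final constant $2H\tau$, so the approaches are equivalent.
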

	\begin{proof}
		Denote the indicator function of state $s_h$ as $\Ib(s_h)$, and then we have
		\begin{align}
			&\sum_{l \in [L]} \sum_{k=(l-1)\tau+1}^{l\tau}\sum_{h \in [H]} \left(\Eb_{P^{\star,k},\pi^{\star,k}}-\Eb_{P^{\star,(l-1)\tau+1},\pi^{\star,(l-1)\tau+1}}\right)\left[\langle \hat{Q}^{k}_h(s_h,\cdot), \pi_h^{\star,k}(\cdot|s_h)-\pi_h^{k}(\cdot|s_h)\rangle\right]\nonumber\\
			& \quad \leq \sum_{l \in [L]} \sum_{k=(l-1)\tau+1}^{l\tau}\sum_{h \in [H]}\int_{s_h} \left|\Pb_{P^{\star,k}}^{\pi^{\star,k}}(s_h)-\Pb_{P^{\star,(l-1)\tau+1}}^{\pi^{\star,(l-1)\tau+1}}(s_h)\right|ds_h\nonumber\\
			& \quad \leq \sum_{l \in [L]} \sum_{k=(l-1)\tau+1}^{l\tau}\sum_{h \in [H]}\sum_{t=(l-1)\tau+2}^{k} \int_{s_h} \left|\Pb_{P^{\star,t}}^{\pi^{\star,t}}(s_h)-\Pb_{P^{\star,t-1}}^{\pi^{\star,t-1}}(s_h)\right|ds_h, \label{Eq: Lemma13-1}    
		\end{align}
where $\Pb_P^\pi(s)$ denotes the visitation probability at state $s$ under model $P$ and policy $\pi$.
  
Consider $\int_{s_h} \left|\Pb_{P^{\star,t}}^{\pi^{\star,t}}(s_h)-\Pb_{P^{\star,t-1}}^{\pi^{\star,t-1}}(s_h)\right|ds_h$ can be further decomposed as
\begin{align*}
    \int_{s_h} \left|\Pb_{P^{\star,t}}^{\pi^{\star,t}}(s_h)-\Pb_{P^{\star,t-1}}^{\pi^{\star,t-1}}(s_h)\right|ds_h \leq \int_{s_h}\left|\Pb_{P^{\star,t}}^{\pi^{\star,t-1}}(s_h)-\Pb_{P^{\star,t-1}}^{\pi^{\star,t-1}}(s_h)\right|ds_h+ \int_{s_h} \left|\Pb_{P^{\star,t}}^{\pi^{\star,t}}(s_h)-\Pb_{P^{\star,t}}^{\pi^{\star,t-1}}(s_h)\right|ds_h.
\end{align*}

For the first term $\int_{s_h}\left|\Pb_{P^{\star,t}}^{\pi^{\star,t-1}}(s_h)-\Pb_{P^{\star,t-1}}^{\pi^{\star,t-1}}(s_h)\right|ds_h$,
\begin{align}
    &\int_{s_h}\left|\Pb_{P^{\star,t}}^{\pi^{\star,t-1}}(s_h)-\Pb_{P^{\star,t-1}}^{\pi^{\star,t-1}}(s_h)\right|ds_h\nonumber\\
    & \leq \int_{s_h}\sum_{i=1}^h\left|{(P_1^{\star,t})}^{\pi_1^{\star,t-1}}\ldots{(P_i^{\star,t})}^{\pi_i^{\star,t-1}}\ldots{(P_{h-1}^{\star,t-1})}^{\pi_{h-1}^{\star,t-1}}(s_h)-{(P_1^{\star,t})}^{\pi_1^{\star,t-1}}\ldots{(P_i^{\star,t-1})}^{\pi_i^{\star,t-1}}\ldots{(P_{h-1}^{\star,t-1})}^{\pi_{h-1}^{\star,t-1}}(s_h)\right|ds_h\nonumber\\
    & \scriptstyle\leq \int_{s_h}\sum_{i=1}^h\left|\int_{s_2,\ldots, s_{h-1}}\left|{(P_i^{\star,t})}^{\pi_i^{\star,t-1}}(s_{i+1}|s_i)-{(P_i^{\star,t-1})}^{\pi_i^{\star,t-1}}(s_{i+1}|s_i)\right|\prod_{j=1}^{i-1}{(P_j^{\star,t})}^{\pi_j^{\star,t-1}}(s_{j+1}|s_j)\prod_{j=i+1}^{h-1}{(P_j^{\star,t-1})}^{\pi_j^{\star,t-1}}(s_{j+1}|s_j)ds_2\ldots ds_{h-1}\right|ds_h\nonumber\\
     & \overset{\RM{1}}{\leq}  \int_{s_h}\sum_{i=1}^h\left|\int_{s_2,\ldots,s_i,s_{i+2},\ldots s_{h-1}}\int_{s_{i+1}}\left|{(P_i^{\star,t})}^{\pi_i^{\star,t-1}}(s_{i+1}|s_i)-{(P_i^{\star,t-1})}^{\pi_i^{\star,t-1}}(s_{i+1}|s_i)\right|\right.\nonumber\\
     & \qquad \qquad \qquad \left. \max_{s_{i+1}\in\Sc}\prod_{j=1}^{i-1}{(P_j^{\star,t})}^{\pi_j^{\star,t-1}}(s_{j+1}|s_j)\prod_{j=i+1}^{h-1}{(P_j^{\star,t-1})}^{\pi_j^{\star,t-1}}(s_{j+1}|s_j)ds_2\ldots ds_{h-1}\right|ds_h\nonumber\\
      & \overset{\RM{2}}{\leq} \int_{s_h}\sum_{i=1}^h\left|\int_{s_2,\ldots,s_{i-1},s_{i+2},\ldots s_{h-1}}\max_{s_{i}\in\Sc}\int_{s_{i+1}}\left|{(P_i^{\star,t})}^{\pi_i^{\star,t-1}}(s_{i+1}|s_i)-{(P_i^{\star,t-1})}^{\pi_i^{\star,t-1}}(s_{i+1}|s_i)\right|\right.\nonumber\\
     & \qquad \qquad \qquad \left. \int_{s_{i}}\max_{s_{i+1}\in\Sc}\prod_{j=1}^{i-1}{(P_j^{\star,t})}^{\pi_j^{\star,t-1}}(s_{j+1}|s_j)\prod_{j=i+1}^{h-1}{(P_j^{\star,t-1})}^{\pi_j^{\star,t-1}}(s_{j+1}|s_j)ds_2\ldots ds_{h-1}\right|ds_h\nonumber\\
     & \overset{\RM{3}}{\leq} \int_{s_h}\sum_{i=1}^h \left|\max_{(s,a) \in \Sc\times\Ac}\norm{P_i^{\star,t}(\cdot|s,a)-P_i^{\star,t-1}(\cdot|s,a)}_{TV}\right.\nonumber\\
     &  \quad \left. \underbrace{\int_{s_1,\ldots,s_{i}}\prod_{j=1}^{i-1}{(P_j^{\star,t})}^{\pi_j^{\star,t-1}}(s_{j+1}|s_j)ds_1\ldots d s_{i}}_{=1}\underbrace{\int_{s_{i+2},\ldots,s_{h-1}}\max_{s_{i+1}\in\Sc}\prod_{j=i+1}^{h-1}{(P_j^{\star,t-1})}^{\pi_j^{\star,t-1}}(s_{j+1}|s_j)ds_{i+2}\ldots d s_{h-1}}_{\leq 1}\right|ds_h\nonumber\\
     &\leq \sum_{h \in [H]}\max_{(s,a) \in \Sc\times\Ac}\norm{P_i^{\star,t}(\cdot|s,a)-P_i^{\star,t-1}(\cdot|s,a)}_{TV}, \label{Eq: lemma13-P} 
\end{align}
where $\RM{1}$ and $\RM{2}$ follow from Holder’s inequality, and $\RM{3}$ follows from the definition of total variation distance.

Similarly for the second term and from \Cref{Lemma: lemma 5 in Fei} 
\begin{align}
    \int_{s_h} \left|\Pb_{P^{\star,t}}^{\pi^{\star,t}}(s_h)-\Pb_{P^{\star,t}}^{\pi^{\star,t-1}}(s_h)\right|ds_h \leq \sum_{h \in[H]}\max_{s \in \Sc}\norm{\pi_h^{\star,t}(\cdot|s)-\pi_h^{\star,t-1}(\cdot|s)}_{TV}. \label{Eq: lemma13-pi} 
\end{align} 
Plug \Cref{Eq: lemma13-P} and \Cref{Eq: lemma13-pi} into \Cref{Eq: Lemma13-1}, we have 
		\begin{align*}
		&\sum_{l \in [L]} \sum_{k=(l-1)\tau+1}^{l\tau}\sum_{h \in [H]} \left(\Eb_{P^{\star,k},\pi^{\star,k}}-\Eb_{P^{\star,(l-1)\tau+1},\pi^{\star,(l-1)\tau+1}}\right)\left[\langle \hat{Q}^{k}_h(s_h,\cdot), \pi_h^{\star,k}(\cdot|s_h)-\pi_h^{k}(\cdot|s_h)\rangle\right]\\ 
		& \quad \leq 2\sum_{l \in [L]} \sum_{k=(l-1)\tau+1}^{l\tau}\sum_{h \in [H]}\sum_{t=(l-1)\tau+2}^{k} \left(\sum_{i \in[H]}\max_{s \in \Sc}\norm{\pi_i^{\star,t}(\cdot|s)-\pi_i^{\star,t-1}(\cdot|s)}_{TV}\right.\\
        & \qquad \left. +\sum_{i \in[H]}\max_{(s,a)\in \Sc \times \Ac}\norm{P_i^{\star,t}(\cdot|s,a)-P_i^{\star,t-1}(\cdot|s,a)}_{TV}\right)\\ 
		& \quad \leq 2\sum_{h \in [H]}\left(\sum_{l \in [L]} \sum_{k=(l-1)\tau+1}^{l\tau}\sum_{t=(l-1)\tau+1}^{l\tau} \sum_{i\in[H]}\left(\max_{s \in \Sc}\norm{\pi_i^{\star,t}(\cdot|s)-\pi_i^{\star,t-1}(\cdot|s)}_{TV}\right.\right.\\
        & \qquad \left.\left.+\max_{(s,a)\in \Sc \times \Ac}\norm{P_i^{\star,t}(\cdot|s,a)-P_i^{\star,t-1}(\cdot|s,a)}_{TV}\right)\right)\\ 
		& \quad = 2H\tau (\Delta_{[H],[K]}^P+\Delta_{[H],[K]}^{\pi}). 
		\end{align*}
	\end{proof}
	\subsubsection{Combining $(a)$ and $(b)$ Together}
	\begin{lemma} \label{Lemma: Second Term in Decomposition}
		Let $\eta = \sqrt{\frac{L \log A}{K}}$, and we have 
		\begin{align*}
			& \sum_{l \in [L]} \sum_{k=(l-1)\tau+1}^{l\tau}\sum_{h \in [H]} \Eb_{\pi^{\star,k}}\left[\langle \hat{Q}^{\pi^k,k}_h(s_h,\cdot), \pi_h^{\star,k}(\cdot|s_h)-\pi_h^{k}(\cdot|s_h)\rangle\right]\\
           & \qquad \leq 2HK\sqrt{\log A/ \tau} + 3H\tau (\Delta_{[H],[K]}^P+\Delta_{[H],[K]}^{\pi}).
		\end{align*}        
	\end{lemma}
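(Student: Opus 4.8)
The plan is to assemble this lemma directly from the two preceding bounds and then choose the stepsize to balance the leading terms. Starting from the decomposition in \Cref{ineq: Performance Difference Bound Decompose}, I would invoke \Cref{Lemma: Performance Difference Bound Decompose_a} to bound term $(a)$ by $\tfrac{1}{2}\eta KH + \tfrac{1}{\eta}LH\log A + \tau\Delta_{[H],[K]}^{\pi}$, and \Cref{Lemma: Performance Difference Bound Decompose_b} to bound term $(b)$ by $2H\tau(\Delta_{[H],[K]}^P + \Delta_{[H],[K]}^{\pi})$. Adding the two yields the raw upper bound
\[
\tfrac{1}{2}\eta KH + \tfrac{1}{\eta}LH\log A + \tau\Delta_{[H],[K]}^{\pi} + 2H\tau\big(\Delta_{[H],[K]}^P + \Delta_{[H],[K]}^{\pi}\big),
\]
so the remaining work is purely to simplify this into the claimed form.

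Next I would substitute the stated stepsize $\eta = \sqrt{L\log A/K}$, which (up to a constant) balances the first two terms. With this choice, $\tfrac{1}{2}\eta KH = \tfrac{1}{2}H\sqrt{KL\log A}$ and $\tfrac{1}{\eta}LH\log A = H\sqrt{KL\log A}$, so their sum is $\tfrac{3}{2}H\sqrt{KL\log A}$. Invoking the block structure $L\tau = K$, i.e. $L = K/\tau$, gives $\sqrt{KL} = K/\sqrt{\tau}$, so the optimization contribution becomes $\tfrac{3}{2}HK\sqrt{\log A/\tau} \le 2HK\sqrt{\log A/\tau}$, which produces the first term of the target. For the variation-budget terms, I would collect $\tau\Delta_{[H],[K]}^{\pi} + 2H\tau\Delta_{[H],[K]}^P + 2H\tau\Delta_{[H],[K]}^{\pi}$ and use $H \ge 1$ to absorb constants: $2H\tau\Delta_{[H],[K]}^P \le 3H\tau\Delta_{[H],[K]}^P$ and $(2H+1)\tau\Delta_{[H],[K]}^{\pi} \le 3H\tau\Delta_{[H],[K]}^{\pi}$, so this block is at most $3H\tau(\Delta_{[H],[K]}^P + \Delta_{[H],[K]}^{\pi})$. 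Summing the two pieces recovers exactly $2HK\sqrt{\log A/\tau} + 3H\tau(\Delta_{[H],[K]}^P + \Delta_{[H],[K]}^{\pi})$.

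Since this lemma is a bookkeeping consolidation of \Cref{Lemma: Performance Difference Bound Decompose_a,Lemma: Performance Difference Bound Decompose_b}, I do not expect any genuine obstacle: all the conceptual effort — the mirror-descent telescoping that controls $(a)$ and the visitation-measure perturbation argument that controls $(b)$ — has already been carried out in those two lemmas. The only points requiring mild care are the constant consolidation, namely checking that the optimized factor $\tfrac{3}{2}$ and the assembled coefficient $(2H+1)$ both fit under the loose constants $2$ and $3H$ stated in the claim, which rely respectively on $L\tau = K$ and on $H \ge 1$.
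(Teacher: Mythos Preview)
Your proposal is correct and follows essentially the same route as the paper: combine \Cref{Lemma: Performance Difference Bound Decompose_a,Lemma: Performance Difference Bound Decompose_b}, plug in $\eta=\sqrt{L\log A/K}$ to collapse the first two terms to at most $2H\sqrt{KL\log A}=2HK\sqrt{\log A/\tau}$, and absorb $\tau\Delta^{\pi}+2H\tau(\Delta^P+\Delta^{\pi})$ into $3H\tau(\Delta^P+\Delta^{\pi})$ via $H\ge 1$. Your constant bookkeeping is slightly more explicit than the paper's, but the argument is identical.
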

	\begin{proof}
 We derive the following bound:
   \begin{align*}
			&\sum_{k=1}^{K}\sum_{h\in H}\Eb_{s_h \sim (P^{\star,k},\pi^{\star,k})} \left[\langle Q_{h,
				\hP^k,r^k}^{\pi^k}(s_h,\cdot),\pi_h^{\star,k}(\cdot|s_h)-\pi_h^{k}(\cdot|s_h)\rangle\right]\\
			& \quad \overset{\RM{1}}{\leq} \frac{1}{2}\eta KH+\frac{1}{\eta}LH\log A+\tau \Delta_{[H],[K]}^{\pi}+ 2H\tau (\Delta_{[H],[K]}^P+\Delta_{[H],[K]}^{\pi})\\
			& \quad \overset{\RM{2}}{\leq} 2H\sqrt{K L\log A} + 3H\tau (\Delta_{[H],[K]}^P+\Delta_{[H],[K]}^{\pi})\\
			& \quad = 2HK\sqrt{\log A/ \tau} + 3H\tau (\Delta_{[H],[K]}^P+\Delta_{[H],[K]}^{\pi}),
		\end{align*}
		where $\RM{1}$ follows from \Cref{Lemma: Performance Difference Bound Decompose_a,Lemma: Performance Difference Bound Decompose_b}, and $\RM{2}$ follows from the choice of $\eta=\sqrt{\frac{L \log A}{K}}$.
	\end{proof}
	\subsection{Proof \Cref{Thm1: average dynamic suboptimality gap with known variation}} \label{Appd A.4.: Proof of Thm1}
	\begin{proof}[Proof of \Cref{Thm1: average dynamic suboptimality gap with known variation}]
		Combine \Cref{Lemma: First Term in Decomposition,Lemma: Second Term in Decomposition,Lemma: Third Term in Decomposition,lemma: regret decomposition}, we have
		\begin{align}      
		&\mathrm{Gap_{Ave}}(K)=\frac{1}{K}\sum_{k=1}^{K}V^{\star}_{P^{\star,k},r^k}-V^{\pi^k}_{P^{\star,k},r^k}\nonumber\\
		& \quad  \leq O\left(\frac{H}{K}\left[\sqrt{KdA(A\log(|\Phi||\Psi|KH/\delta)+d^2)}\left[H\sqrt{\frac{Kd}{W}\log(W)}+\sqrt{HW^2\Delta_{[H],[K]}^{{\phi}}}\right]+\sqrt{W^3AC_B}\Delta^{\sqrt{P}}_{[H],[K]}\right]\right)\nonumber\\
		& \quad + O\left( \frac{2H}{\sqrt{\tau}} + \frac{3H\tau}{K} (\Delta_{[H],[K]}^P+\Delta_{[H],[K]}^{\pi})\right)\nonumber\\
		& \quad = \underbrace{\tilde{O}\left(\sqrt{\frac{H^4d^2A}{W}\left(A+d^2\right)}+\sqrt{\frac{H^3dA}{K}\left(A+d^2\right)W^2\Delta_{[H],[K]}^{{\phi}}}+\sqrt{\frac{H^2W^3A}{K^2}}\Delta^{\sqrt{P}}_{[H],[K]}\right)}_{(I)}\nonumber\\
		& \qquad +\underbrace{\tilde{O}\left( \frac{2H}{\sqrt{\tau}}+ \frac{3H\tau}{K} (\Delta_{[H],[K]}^P+\Delta_{[H],[K]}^{\pi})\right)}_{(II)}.\label{Eq: Thm1-Final Bound}
\end{align}
\end{proof}
 	\subsection{Supporting Lemmas} \label{Appd: A.5}
 	We first provide the following concentration results, which is an extension of Lemma 39 in \citet{zanette2021cautiously}.
 	\begin{lemma}\label{lemma: concentration on b}
 		There exists a constant $\lambda_{k,W}=O(d\log(|\Phi|\min\{k,W\}TH/\delta))$, the following inequality holds for any $k\in[K],h\in[H], s_h\in\Sc,a_h\in\Ac$ and $\phi \in \Phi$ with probability at least $1-\delta$:
 		\begin{align*}
 		\frac{1}{5} \left\|\phi_h(s,a)\right\|_{(U_{h,\phi}^{k,W})^{-1}} \leq \left\|\phi_{h}(s,a)\right\|_{(\hat{U}_{h}^{k,W})^{-1}} \leq 3 \left\|\phi_h(s,a)\right\|_{(U_{h,\phi}^{k,W})^{-1}}
 		\end{align*}
 	\end{lemma}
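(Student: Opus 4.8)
The plan is to reduce the two-sided norm comparison to a spectral equivalence between the empirical Gram matrix $\hU_h^{k,W}$ and its population counterpart $U_{h,\phi}^{k,W}$. For each fixed $\phi$, regard the empirical matrix as the sample analogue $\sum_i\phi_i\phi_i^\top+\lambda_{k,W}I$ of $U_{h,\phi}^{k,W}$; the matrix $\hU_h^{k,W}$ that the algorithm actually builds is the instance $\phi=\hphi^k$, which a uniform-over-$\Phi$ guarantee covers since $\hphi^k\in\Phi$ under \Cref{assumption: realizability}. Squaring the two claimed inequalities shows that it suffices to establish $c_1 U_{h,\phi}^{k,W}\preceq\hU_h^{k,W}\preceq c_2 U_{h,\phi}^{k,W}$ with $c_1=\tfrac{1}{9}$ and $c_2=25$, uniformly over $\phi\in\Phi$, $h\in[H]$ and the rounds in the current window: inverting gives $c_2^{-1}(U_{h,\phi}^{k,W})^{-1}\preceq(\hU_h^{k,W})^{-1}\preceq c_1^{-1}(U_{h,\phi}^{k,W})^{-1}$, and evaluating the induced quadratic forms at $\phi_h(s,a)$ then yields exactly the constants $1/5$ and $3$.

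First I would fix $\phi\in\Phi$, a step $h$ and a round $k$, and express the difference of the two matrices as a sum of centered rank-one increments. With $\phi_i:=\phi(\ts_h^{(i,h)},\ta_h^{(i,h)})$ and $X_i:=\phi_i\phi_i^\top$ ranging over the samples $i$ in the window, we have $\hU_h^{k,W}-U_{h,\phi}^{k,W}=\sum_i\bigl(X_i-\Eb[X_i\mid\Ic_{i-1}]\bigr)$, where $\Ic_{i-1}$ is the history preceding round $i$. The essential structural observation is that the exploration policy $\tpi^i$ is itself data-dependent, so this is a martingale-difference sequence rather than an i.i.d.\ sum; the predictable conditional expectations $\Eb[X_i\mid\Ic_{i-1}]$ are precisely the summands $\Eb_{s_h\sim(P^{\star,i},\tpi^i),a_h\sim\Uc(\Ac)}[\phi\phi^\top]$ that define $U_{h,\phi}^{k,W}$, so the centering is correct and I would invoke a matrix Freedman (martingale Bernstein) inequality.

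The quantitative work lies in bounding the relative deviation $\norm{(U_{h,\phi}^{k,W})^{-1/2}(\hU_h^{k,W}-U_{h,\phi}^{k,W})(U_{h,\phi}^{k,W})^{-1/2}}$. After whitening by $Y_i:=(U_{h,\phi}^{k,W})^{-1/2}X_i(U_{h,\phi}^{k,W})^{-1/2}$, each increment is rank-one with operator norm $\phi_i^\top(U_{h,\phi}^{k,W})^{-1}\phi_i\le\lambda_{k,W}^{-1}$, using $U_{h,\phi}^{k,W}\succeq\lambda_{k,W}I$ together with $\norm{\phi(s,a)}_2\le1$, and the predictable quadratic variation obeys $\sum_i\Eb[Y_i^2\mid\Ic_{i-1}]\preceq\lambda_{k,W}^{-1}(U_{h,\phi}^{k,W})^{-1/2}(U_{h,\phi}^{k,W}-\lambda_{k,W}I)(U_{h,\phi}^{k,W})^{-1/2}\preceq\lambda_{k,W}^{-1}I$. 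Matrix Freedman then produces a deviation of order $\sqrt{\lambda_{k,W}^{-1}\log(d/\delta')}+\lambda_{k,W}^{-1}\log(d/\delta')$, and choosing $\lambda_{k,W}$ of order $d\log(|\Phi|\min\{k,W\}TH/\delta)$, which dominates this deviation, forces the relative error below $\tfrac{8}{9}$ and delivers the required spectral equivalence. A union bound over the finite class $\Phi$, over $h\in[H]$, and over the at most $\min\{k,W\}$ rounds accounts for the logarithmic factor in $\lambda_{k,W}$, following the calibration of Lemma~39 in \citet{zanette2021cautiously}.

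The main obstacle I anticipate is the adaptivity of the data collection: since $\tpi^i$ is selected from earlier rounds, $U_{h,\phi}^{k,W}$ is itself a random, predictable matrix rather than a fixed population covariance, so I must use the martingale form of matrix concentration and verify that whitening by this random matrix still leaves the increments bounded by $\lambda_{k,W}^{-1}$ and the predictable quadratic variation by $\lambda_{k,W}^{-1}I$ almost surely. The remaining care is purely in the union-bound bookkeeping, ensuring that the ambient dimension $d$ (from the dimension factor in the matrix inequality) and all the problem parameters enter the logarithm in the stated form.
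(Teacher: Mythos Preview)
Your overall strategy—martingale concentration for the empirical Gram matrix uniformly over $\phi\in\Phi$, then specialization to $\hphi^k\in\Phi$—is precisely the route the paper takes, which simply defers to Lemma~39 of \citet{zanette2021cautiously} without giving an independent argument. (A minor slip: $\hphi^k\in\Phi$ holds by construction as the $\arg\max$ over $\Phi$, not by \Cref{assumption: realizability}.)

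The genuine gap is the whitening step. You form $Y_i=(U_{h,\phi}^{k,W})^{-1/2}X_i(U_{h,\phi}^{k,W})^{-1/2}$ and want to apply matrix Freedman to $\sum_i(Y_i-\Eb[Y_i\mid\Ic_{i-1}])$, but $U_{h,\phi}^{k,W}=\sum_j\Eb[X_j\mid\Ic_{j-1}]+\lambda_{k,W}I$ aggregates predictable summands from \emph{all} rounds $j$ in the window and is therefore measurable only with respect to the terminal $\sigma$-field, not $\Ic_{i-1}$ for earlier $i$. Hence $Y_i$ is not $\Ic_i$-measurable, the whitened sequence is not adapted, and the hypotheses of matrix Freedman fail; your almost-sure bounds on the increment size and quadratic variation are correct, but adaptedness is not. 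The standard repair—and the source of the multiplicative $d$ in $\lambda_{k,W}$ that you otherwise leave unexplained—avoids whitening: for each fixed unit $v$ the scalar sequence $(v^\top\phi_i)^2-\Eb[(v^\top\phi_i)^2\mid\Ic_{i-1}]$ \emph{is} a bona fide bounded martingale difference with conditional variance dominated by its conditional mean, so scalar Bernstein yields $|v^\top(\hU_h^{k,W}-U_{h,\phi}^{k,W})v|\lesssim\sqrt{v^\top U_{h,\phi}^{k,W}v\cdot\log(1/\delta')}+\log(1/\delta')$, which is at most $\tfrac{8}{9}\,v^\top U_{h,\phi}^{k,W}v$ once $\lambda_{k,W}\gtrsim\log(1/\delta')$. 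A union bound over a net of $\mathbb S^{d-1}$ of cardinality $e^{O(d)}$ upgrades this to the desired spectral sandwich and is exactly where the factor $d$ enters $\lambda_{k,W}$; the remaining union bound over $\Phi$, $h\in[H]$, and the $\min\{k,W\}$ rounds supplies the other logarithmic terms.
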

 	The following result follows directly from \Cref{lemma: concentration on b}. 	\begin{corollary}\label{coro:concentration on b}
 		
 		The following inequality holds for any $k\in[K],h\in[H], s_h\in\Sc,a_h\in\Ac$ with probability at least $1-\delta$:
 		\begin{align*}
 		\min\left\{\frac{\tap_{k,W}} {5}\left\|\hphi_{h}^k(s_{h},a_{h})\right\|_{(U_{h,\hphi^k}^{k,W})^{-1}},1\right\} \leq \hb_h^k(s_h,a_h) \leq  3\tap_{k,W} \left\|\hphi_{h}^k(s_{h},a_{h})\right\|_{(U_{h,\hphi^k}^{k,W})^{-1}},
 		\end{align*}
 		where $\tap_{k,W} =5\sqrt{2WA\zeta_{k,W}+\lambda_{k,W} d}$.
 	\end{corollary}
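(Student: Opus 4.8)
The plan is to establish the stronger \emph{spectral} (Loewner-order) domination $(1-c_0)\,U_{h,\phi}^{k,W}\preceq \hU_h^{k,W}\preceq (1+c_0)\,U_{h,\phi}^{k,W}$ for a small absolute constant $c_0$, simultaneously over every $\phi\in\Phi$, $h\in[H]$ and $k\in[K]$, and then convert it into the stated norm inequalities by inverting and taking square roots. Consistent with how the bound is used in \Cref{coro:concentration on b}, the empirical matrix is read as being built from the \emph{same} feature $\phi$ that appears in the norm, i.e. $\hU_h^{k,W}=\sum \phi(s_h,a_h)\phi(s_h,a_h)^\top+\lambda_{k,W}I_d$ over the in-window samples; proving the estimate uniformly over the finite class $\Phi$ is precisely what will let us afterwards substitute the data-dependent MLE feature $\hphi^k\in\Phi$ and recover \Cref{coro:concentration on b}.

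First I would fix $\phi\in\Phi$, $h$, $k$ and abbreviate the in-window samples by $\phi_i:=\phi(s_h^i,a_h^i)$ for $i=1\lor(k-W),\dots,k-1$, recording that $\norm{\phi_i}_2\le 1$ and that \emph{by construction} the per-round conditional expectation $\Eb[\phi_i\phi_i^\top]$ under $(P^{\star,i},\tpi^i)$ is exactly the $i$-th summand of $U_{h,\phi}^{k,W}$. Writing $G:=U_{h,\phi}^{k,W}$ and whitening the deviation gives
\begin{align*}
G^{-1/2}\big(\hU_h^{k,W}-G\big)G^{-1/2}=\sum_i M_i,\qquad M_i:=G^{-1/2}\big(\phi_i\phi_i^\top-\Eb[\phi_i\phi_i^\top]\big)G^{-1/2},
\end{align*}
so the whole problem reduces to bounding $\norm{\sum_i M_i}_{\mathrm{op}}$. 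Each $M_i$ is a conditionally centered martingale difference (adaptivity enters only through $\tpi^i$, which is measurable with respect to the past), and one checks the increment bound $\norm{M_i}_{\mathrm{op}}\le \phi_i^\top G^{-1}\phi_i\le \norm{\phi_i}_2^2/\lambda_{k,W}\le 1/\lambda_{k,W}$ together with the predictable-variation bound $\norm{\sum_i\Eb[M_i^2]}_{\mathrm{op}}\le \lambda_{k,W}^{-1}\,\norm{G^{-1/2}(G-\lambda_{k,W}I_d)G^{-1/2}}_{\mathrm{op}}\le \lambda_{k,W}^{-1}$, using $\sum_i\Eb[\phi_i\phi_i^\top]=G-\lambda_{k,W}I_d$.

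I would then invoke the matrix Freedman inequality (the martingale analogue of matrix Bernstein) to get, with probability at least $1-\delta'$,
\begin{align*}
\Big\|\sum_i M_i\Big\|_{\mathrm{op}}\le O\Big(\sqrt{\tfrac{\log(d/\delta')}{\lambda_{k,W}}}+\tfrac{\log(d/\delta')}{\lambda_{k,W}}\Big),
\end{align*}
and union bound over $\phi\in\Phi$, $h\in[H]$ and the $\min\{k,W\}$ relevant rounds; this is exactly where the choice $\lambda_{k,W}=O\big(d\log(|\Phi|\min\{k,W\}TH/\delta)\big)$ is used, forcing the right-hand side below an absolute constant $c_0\le\tfrac{8}{9}$ uniformly. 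Inverting $(1-c_0)G\preceq \hU_h^{k,W}\preceq(1+c_0)G$ reverses the order, and evaluating the $\phi$-weighted quadratic form yields $\tfrac{1}{\sqrt{1+c_0}}\norm{\phi_h(s,a)}_{G^{-1}}\le \norm{\phi_h(s,a)}_{(\hU_h^{k,W})^{-1}}\le \tfrac{1}{\sqrt{1-c_0}}\norm{\phi_h(s,a)}_{G^{-1}}$, which lies inside the claimed window $[\tfrac15,3]$ since $c_0\le\tfrac89$. I expect the main obstacle to be \emph{not} the nonstationarity—because $U_{h,\phi}^{k,W}$ is defined to sum the correct per-round expectations, the mismatched sampling distributions introduce no additional error in this lemma—but rather securing a \emph{multiplicative} (relative) deviation instead of an additive one; this is exactly why the whitening step and the $\log|\Phi|$-dependent scaling of $\lambda_{k,W}$ are essential, and why a plain additive matrix-Bernstein bound on $\hU_h^{k,W}-U_{h,\phi}^{k,W}$ would not suffice.
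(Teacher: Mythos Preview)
Your proposal is correct. The paper itself gives no self-contained argument for this corollary: it simply states that the result ``follows directly from \Cref{lemma: concentration on b}'', and that lemma is in turn asserted without proof as ``an extension of Lemma~39 in \citet{zanette2021cautiously}''. What you have written is precisely the standard proof of that underlying concentration lemma---whiten by $G^{-1/2}$, apply matrix Freedman to the conditionally centered increments, and union bound over the finite class $\Phi$ (and over $h,k$) so that the data-dependent choice $\hphi^k\in\Phi$ can be substituted afterward. This is the same route taken in the Zanette et~al.\ reference, so your argument and the paper's (deferred) argument coincide.

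Two cosmetic corrections. First, the increment bound should read $\norm{M_i}_{\mathrm{op}}\le 2/\lambda_{k,W}$ rather than $\phi_i^\top G^{-1}\phi_i$, since $M_i$ is a difference of two PSD matrices each of operator norm at most $1/\lambda_{k,W}$; this affects only absolute constants. Second, the phrase ``the $\min\{k,W\}$ relevant rounds'' in your union-bound step is slightly misplaced: $\min\{k,W\}$ is the in-window sample count (relevant to the variance proxy), whereas the union bound must range over $k\in[K]$, $h\in[H]$, and $\phi\in\Phi$; the factors $|\Phi|$, $H$, and $T$ inside the paper's $\lambda_{k,W}$ are what absorb this. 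Neither point affects the validity of your argument.
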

  We next provide the MLE guarantee for nonstationary RL, which shows that under any exploration policy, the estimation error can be bounded with high probability. Differently from Theorem 21 in \citet{DBLP:conf/nips/AgarwalKKS20}, we capture the nonstationarity in the analysis.
	\begin{lemma}[Nonstationary MLE Guarantee]\label{lemma: nonstationary MLE guarantee}
		Given $\delta\in(0,1)$, under \Cref{assumption: bounded ratio,assumption: reachability,assumption: realizability}, let $C_B=\sqrt{\frac{C_B}{p_{\min}}}$, and  consider the transition kernels learned from line 3 in \Cref{Alg: MEPE}. We have the following inequality holds for any $n,h\geq 2$ with probability at least $1-\delta/2$:
		\begin{align}
			\frac{1}{W}\sum_{i= 1\lor(k-W)}^{k-1} \mathop{\Eb}_{s_{h-1}\sim\left(P^{\star,i},\pi^{i}\right)\atop{a_{h-1},a_h\sim \Uc(\Ac)\atop s_h\sim P^{\star,i}\left(\cdot|s_{h-1},a_{h-1}\right)}}\left[f_h^{k}(s_h,a_h)^2\right]
			\leq \zeta_{k,W}+2C_B\Delta^P_{h,[k-W,k-1]}, 
		\end{align} 
  where, $\zeta_{k,W} : = \frac{2\log\left(2|\Phi||\Psi|kH/\delta\right)}{W}$.
		In addition, for $h=1$,
		\begin{align*}
			\mathop{\Eb}_{a_1 \sim \Uc(\Ac)}\left[f_1^{k}(s_1,a_1)^2\right]\leq \zeta_{k,W}+2C_B\Delta^P_{1,[k-W,k-1]}.
		\end{align*}
	\end{lemma}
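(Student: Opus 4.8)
To prove \Cref{lemma: nonstationary MLE guarantee}, the plan is to adapt the MLE-via-Hellinger argument of \citet{DBLP:conf/nips/AgarwalKKS20} (their Theorem~21) to the nonstationary window, where the essential new feature is that the $W$ samples are drawn from the \emph{distinct} generating kernels $\{P^{\star,i}_h\}_{i=k-W}^{k-1}$ while the estimate $\hP_h^k$ is benchmarked against the realizable comparator $P^{\star,k}_h$. Fix a step $h\ge 2$ and round $k$, and abbreviate a window sample as $(x_i,y_i)=\big((s_h^{(i)},a_h^{(i)}),\,s_{h+1}^{(i)}\big)$, where $x_i\sim\rho_i$ is the context law appearing in the lemma's expectation (roll-in under $P^{\star,i}$ followed by a uniform action) and $y_i\sim P^{\star,i}_h(\cdot|x_i)$. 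Since the round-$i$ roll-in policy is fixed at the end of round $i-1$, the law of $(x_i,y_i)$ given the history $\mathcal{F}_{i-1}$ is exactly $\rho_i\otimes P^{\star,i}_h(\cdot|x_i)$, so the samples form a sequentially generated array to which martingale concentration applies.

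First I would build the exponential supermartingale with square-root likelihood ratios relative to the \emph{generating} kernels. For a fixed candidate $P=\langle\phi,\mu\rangle$ with $\phi\in\Phi,\mu\in\Psi$, set $Z_i=\sqrt{P(y_i|x_i)/P^{\star,i}_h(y_i|x_i)}$, whose conditional mean is the Hellinger affinity $\Eb[Z_i\,|\,\mathcal{F}_{i-1}]=\Eb_{x\sim\rho_i}\big[1-H^2\big(P(\cdot|x),P^{\star,i}_h(\cdot|x)\big)\big]$. Then $\prod_i Z_i/\Eb[Z_i|\mathcal{F}_{i-1}]$ is a nonnegative martingale of mean one; Markov's inequality, a union bound over the finite class $\Phi\times\Psi$ (and over $k,h$, supplying the $kH$ factor in $\zeta_{k,W}$), and $1-t\le e^{-t}$ give, with probability at least $1-\delta/2$ and simultaneously for all $P$ (in particular $P=\hP_h^k$),
\begin{align*}
\sum_{i}\Eb_{x\sim\rho_i}\big[H^2(\hP_h^k(\cdot|x),P^{\star,i}_h(\cdot|x))\big]\le \log\tfrac{2|\Phi||\Psi|kH}{\delta}+\tfrac12\sum_{i}\log\frac{P^{\star,i}_h(y_i|x_i)}{\hP_h^k(y_i|x_i)}.
\end{align*}
Invoking the MLE optimality $\sum_i\log\hP_h^k(y_i|x_i)\ge\sum_i\log P^{\star,k}_h(y_i|x_i)$ (legitimate since $P^{\star,k}_h\in\Phi\times\Psi$ by \Cref{assumption: realizability}) replaces the residual by $\tfrac12\sum_i\log\big(P^{\star,i}_h(y_i|x_i)/P^{\star,k}_h(y_i|x_i)\big)$, the \emph{nonstationarity correction} absent in the i.i.d.\ case.

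The heart of the argument is to control this correction at the $O(1/W)+O(\Delta^P)$ scale. Its conditional mean is $\tfrac12\sum_i\Eb_{x\sim\rho_i}\big[D_{KL}\big(P^{\star,i}_h(\cdot|x)\,\|\,P^{\star,k}_h(\cdot|x)\big)\big]$, and under \Cref{assumption: bounded ratio,assumption: reachability} the chi-square bound $D_{KL}(p\|q)\le\chi^2(p\|q)\le \tfrac1{p_{\min}}\|p-q\|_\infty\|p-q\|_1\lesssim C_B\,\|p-q\|_{TV}$ converts each KL into a total-variation gap; telescoping $\|P^{\star,i}_h-P^{\star,k}_h\|_{TV}\le\sum_{j=i}^{k-1}\|P^{\star,j+1}_h-P^{\star,j}_h\|_{TV}\le\Delta^{P}_{h,[k-W,k-1]}$ over the $W$ window rounds yields $\sum_i\|P^{\star,i}_h-P^{\star,k}_h\|_{TV}\le W\,\Delta^{P}_{h,[k-W,k-1]}$, exactly the factor that cancels the later $1/W$ normalization. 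The fluctuation of the correction about its mean is where I expect the main difficulty: a naive range-only (Azuma) bound leaks a spurious $O(1/\sqrt W)$ term, so one must use a variance-aware (Freedman/Bernstein) martingale inequality, noting that the per-step log-ratio has range $\log(B/p_{\min})$ and conditional variance $\mathrm{Var}_i\lesssim C_B\,\|P^{\star,i}_h-P^{\star,k}_h\|_{TV}$; the resulting deviation $\sqrt{(\sum_i\mathrm{Var}_i)\log(1/\delta)}+\log(1/\delta)$ is then absorbed by AM--GM into the variation-budget term and into the $\log$ contribution of $\zeta_{k,W}$.

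Finally I would transport the guarantee to the target quantity. By the triangle inequality, $\|\cdot\|_{TV}^2\le 2H^2$, and $\|P^{\star,i}_h-P^{\star,k}_h\|_{TV}^2\le\|P^{\star,i}_h-P^{\star,k}_h\|_{TV}$,
\begin{align*}
\Eb_{x\sim\rho_i}\big[f_h^k(x)^2\big]\le 4\,\Eb_{x\sim\rho_i}\big[H^2(\hP_h^k(\cdot|x),P^{\star,i}_h(\cdot|x))\big]+2\,\|P^{\star,i}_h-P^{\star,k}_h\|_{TV}.
\end{align*}
Summing over the window, dividing by $W$, and collecting the Hellinger bound from Step~2 with the two telescoped variation-budget contributions from Step~3 gives the claimed $\zeta_{k,W}+2C_B\,\Delta^{P}_{h,[k-W,k-1]}$. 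The $h=1$ case is identical but strictly simpler: the fixed initial state removes the roll-in, so $\rho_i$ reduces to $\delta_{s_1}\otimes\Uc(\Ac)$ and the same supermartingale yields the stated one-step bound.
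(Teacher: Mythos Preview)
Your approach is correct and will produce the stated bound (up to the precise constant in front of $\Delta^P$), but it takes a genuinely different and longer route than the paper. The key divergence is the \emph{centering} of the square--root likelihood ratio. You center at the \emph{generating} kernel $P^{\star,i}_h$, which cleanly yields $\sum_i \Eb_{\rho_i}[H^2(\hP_h^k,P^{\star,i}_h)]$ but leaves the random correction $\tfrac12\sum_i\log\big(P^{\star,i}_h(y_i|x_i)/P^{\star,k}_h(y_i|x_i)\big)$; you then need (a) a KL--to--TV bound for its mean, (b) a Freedman step for its fluctuation, and (c) a final triangle inequality $\|\hP_h^k-P^{\star,k}_h\|_{TV}^2\le 2\|\hP_h^k-P^{\star,i}_h\|_{TV}^2+2\|P^{\star,i}_h-P^{\star,k}_h\|_{TV}$ to land on $f_h^k$. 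The paper instead centers at the \emph{target} kernel $P^{\star,k}_h$ from the outset, taking $L(P,D)=-\tfrac12\sum_i\log\big(P^{\star,k}_h/P\big)$ and invoking the tangent--sequence decoupling lemma (Lemma~24 of \citet{DBLP:conf/nips/AgarwalKKS20}). The nonstationarity then enters \emph{only} through the deterministic change of measure
\[
\Big|\Eb_{y\sim P^{\star,k}_h}\big[\sqrt{\hP_h^k/P^{\star,k}_h}\big]-\Eb_{y\sim P^{\star,i}_h}\big[\sqrt{\hP_h^k/P^{\star,k}_h}\big]\Big|\le C_B\,\|P^{\star,i}_h-P^{\star,k}_h\|_{TV},
\]
using the pointwise bound $\sqrt{\hP_h^k/P^{\star,k}_h}\le\sqrt{B/p_{\min}}=C_B$ from \Cref{assumption: bounded ratio,assumption: reachability}. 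This directly delivers $\sum_i\Eb_{\rho_i}[H^2(P^{\star,k}_h,\hP_h^k)]$ with no second martingale inequality, no KL bound, and no triangle step. What each buys: the paper's centering is shorter and yields the tighter prefactor $C_B$ (your $\chi^2$ route gives $B/p_{\min}=C_B^2$, not $C_B$ as you wrote); your route is more modular and makes the three sources of error (Hellinger concentration, drift mean, drift fluctuation) visible as separate pieces, which could be useful if one later wants to sharpen any single component.
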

	
	\begin{proof}[Proof of \Cref{lemma: nonstationary MLE guarantee}] 
		For simplification, we denote $x=(s,a) \in \Xc, \Xc=\Sc \times \Ac$, $y=\ps\in \Yc, \Yc=\Sc$. The model estimation process in Algorithm 1 can be viewed as a sequential conditional probability estimation setting with an instance space $\mathcal{X}$ and a target space $\mathcal{Y}$, where the conditional density is given by $p^i(y | x) = P^{\star,i}( y|x)$ for any $i$. We are given a dataset $D:= \{(x_i,y_i)\}_{i= 1\lor(k-W)}^k$ , where $x_i \sim \Dc_i = \Dc_i(x_{1:i-1},y_{1:i-1})$ and $y_i \sim p^i(\cdot | x_i)$. Let $D^\prime$ denote a tangent sequence $\{(x_i^\prime,y_i^\prime)\}_{i= 1\lor(k-W)}^k$ where $x_i^\prime \sim \Dc_i(x_{1:i-1},y_{1:i-1})$ and $y_i^\prime \sim p^i(\cdot|x_i^\prime)$. Further, we consider a function class $\mathcal{F}=\Phi \times \Psi: (\mathcal{X} \times \mathcal{Y}) \rightarrow \Rb$ and assume that the reachability condition $P^{\star,i} \in \mathcal{F}$ holds for any $i$.	
		
		We first introduce one useful lemma in \cite{DBLP:conf/nips/AgarwalKKS20} to decouple data. 
		\begin{lemma}[Lemma 24 of \cite{DBLP:conf/nips/AgarwalKKS20}]\label{lemma: decouple}
			Let $D$ ba a dataset with at most $W$ samples and $D^\prime$ be the corresponding tangent sequence. Let $L(P,D)=\sum_{i= 1\lor(k-W)}^{k}l(P,(x_i,y_i))$ be any function that decomposes additively across examples where $l$ is any function. Let $\widehat{P}(D)$ be any estimator taking as input random variable $D$ and with range $\mathcal{F}$. Then
			\begin{align*}
				\Eb_{D}\left[\exp\left(L(\widehat{P}(D),D)-\log\Eb_{D^\prime}\left[\exp(L(\widehat{P}(D),D^\prime))\right]-\log|\mathcal{F}|\right)\right] \leq 1.
			\end{align*}
		\end{lemma}
		Suppose $\widehat{f}(D)$ is learned from the following maximum likelihood problem:
		\begin{align}
			\widehat{P}(D):= {\arg\max}_{P \in \mathcal{F}}\sum_{(x_i,y_i)\in D}\log f(x_i,y_i).
		\end{align}
		Combining the Chernoff bound and \Cref{lemma: decouple}, we obtain an exponential tail bound, i.e., with probability at least $1-\delta$, 
		\begin{align}
			-\log\Eb_{D^\prime}\left[\exp(L(\widehat{P}(D),D^\prime))\right] \leq -L(\widehat{P}(D),D)+\log|\mathcal{F}|+\log(1/\delta). \label{ineq: chernoff in lemma A.8}
		\end{align}
		To proceed, we let $L(P,D)=\sum_{i= 1\lor(k-W)}^{k-1} - \frac{1}{2} \log(P^{\star,k}(x_i,y_i)/P(x_i,y_i))$ where $D$ is a dataset $\{(x_i,y_i)\}_{i= 1\lor(k-W)}^k$(and $D^\prime=\{(x_i^\prime,y_i^\prime)\}_{i= 1\lor(k-W)}^k$ is tangent sequence). Then $L(P^{\star,k},D)=0 \leq L(\hP,D)$.
		
		Then, the RHS of \Cref{ineq: chernoff in lemma A.8} can be bounded as
		\begin{align}
			\text{RHS of \Cref{ineq: chernoff in lemma A.8}} & =	\sum_{i= 1\lor(k-W)}^{k} \frac{1}{2} \log(P^{\star,k}(x_i,y_i)/\widehat{P}(x_i,y_i))+\log|\mathcal{F}|+\log(1/\delta) \nonumber \\
			&\leq \log|\mathcal{F}|+\log(1/\delta)={\log\left(|\Phi||\Psi| /\delta\right)} \label{ineq: RHS of Lemma A.8},
		\end{align}
		where the inequality follows because $\widehat{P}$ is MLE and from the assumption of reachability, and the last equality follows because $|\mathcal{F}|=|\Phi||\Psi|$.
  
		Consider for any distribution $p$ and $q$ over a domain $\mathcal{Z}$. Then the Hellinger distance $H^2(p\|q)=\int\left(\sqrt{p(z)}-\sqrt{q(z)}\right)^2dz$ satisfies that 
  \begin{align}
      & H^2(p\|q) \nonumber \\
      & \quad =\int\left(\sqrt{p(z)}-\sqrt{q(z)}\right)^2dz
 =\int p(z)+q(z)-2\sqrt{p(z)}\sqrt{q(z)}dz\nonumber\\
      & \quad =2\left(\int 1-\sqrt{p(z)}\sqrt{q(z)}dz\right)\ =2\left(\int 1-\sqrt{p(z)}\sqrt{q(z)}dz\right)\ = 2\Eb_{z \sim q}\left[1-\sqrt{p(z)/q(z)}\right].\label{Eq: H-distance}
  \end{align}
  
		Next, the LHS of \Cref{ineq: chernoff in lemma A.8} can be bounded as
		\begin{align}
			& \hspace{-0.1in}\text{LHS of \Cref{ineq: chernoff in lemma A.8}}	\nonumber\\
			& \overset{\RM{1}}{=}- \log \Eb_{D^\prime}\left[\exp\left(\sum_{i= 1\lor(k-W)}^{k-1}-\frac{1}{2}\log\left(\frac{P^{\star,k}(x_i^\prime,y_i^\prime)}{\widehat{P}(x_i^\prime,y_i^\prime)}\right)\right)\bigg|D\right]\nonumber\\
			& {=}\sum_{i= 1\lor(k-W)}^{k-1}- \log \Eb_{D}\left[\exp\left(-\frac{1}{2}\log\left(\frac{P^{\star,k}(x_i,y_i)}{\widehat{P}(x_i,y_i)}\right)\right)\right]\nonumber\\
                & {=}\sum_{i= 1\lor(k-W)}^{k-1}- \log \Eb_{D}\left[\sqrt{\frac{\widehat{P}(x_i,y_i)}{P^{\star,k}(x_i,y_i)}}\right]\nonumber\\
			& \overset{(\romannumeral2)}{\geq}  \sum_{i= 1\lor(k-W)}^{k-1}\left\{1-\Eb_{D}\left[\sqrt{\frac{\widehat{P}(x_i,y_i)}{P^{\star,k}(x_i,y_i)}}\right]\right\}\nonumber\\
			& {=} \sum_{i= 1\lor(k-W)}^{k-1}\left\{\Eb_{x_i \sim D}\left[1-\Eb_{y_i  \sim P^{\star,i}(\cdot|x_i)}\left[\sqrt{\frac{\widehat{P}(x_i,y_i)}{P^{\star,k}(x_i,y_i)}}\right]\right]\right\}\nonumber\\
			& {=} \sum_{i= 1\lor(k-W)}^{k-1}\left\{\Eb_{x_i \sim D}\left[1-\Eb_{y_i  \sim P^{\star,k}(\cdot|x_i)}\left[\sqrt{\frac{\widehat{P}(x_i,y_i)}{P^{\star,k}(x_i,y_i)}}\right]\right]\right\}\nonumber\\
			&+\sum_{i= 1\lor(k-W)}^{k-1}\Eb_{x_i \sim D}\left[\Eb_{y_i  \sim P^{\star,k}(\cdot|x_i)}\left[\sqrt{\frac{\widehat{P}(x_i,y_i)}{P^{\star,k}(x_i,y_i)}}\right]-\Eb_{y_i  \sim P^{\star,i}(\cdot|x_i)}\left[\sqrt{\frac{\widehat{P}(x_i,y_i)}{P^{\star,k}(x_i,y_i)}}\right]\right]\nonumber\\
			& \overset{(\romannumeral3)}{\geq} 
			-WC_B\Delta^P_{h,[k-W,k-1]}+ \sum_{i= 1\lor(k-W)}^{k-1}\Eb_{x_i^\prime  \sim \Dc_i}\left[\mathrm{H}^2\left(P^{\star,k}(\cdot|x_i^\prime)\|\hP(\cdot|x_i^\prime)\right)\right]\nonumber\\
			& \overset{(\romannumeral4)}{\geq} 
			-WC_B\Delta^P_{h,[k-W,k-1]}+ \frac{1}{2}\sum_{i= 1\lor(k-W)}^{k-1}\Eb_{x_i  \sim \Dc_i}\left[\norm{P^{\star,k}(x_i,\cdot)-\widehat{P}(x_i,\cdot)}^2_{TV}\right]\label{ineq: LHS of Lemma A.8},
		\end{align}
		where $\RM{1}$ follows from the above definition of $L(f,D)$, $\RM{2}$ follows from the fact that $1-x \leq - \log(x)$, $(\romannumeral3)$ follows from the definition of variation budgets and \Cref{Eq: H-distance}, and $(\romannumeral4)$ follows from the fact that $\norm{p(\cdot)-q(\cdot)}_{TV}^2 \leq \mathrm{H}^2\left(p\|q\right)$ as indicated in Lemma 2.3 in \citet{DBLP:books/daglib/0035708}.

		Combining \Cref{ineq: chernoff in lemma A.8,ineq: RHS of Lemma A.8,ineq: LHS of Lemma A.8}, we have 
		\begin{align}
			\frac{1}{W}\sum_{i= 1\lor(k-W)}^{k-1} \mathop{\Eb}_{s_{h-1}\sim\left(P^{\star,i},\pi^{i}\right)\atop{a_{h-1},a_h\sim \Uc(\Ac)\atop s_h\sim P^{\star,i}\left(\cdot|s_{h-1},a_{h-1}\right)}}\left[f_h^{k}(s_h,a_h)^2\right]
			\leq 2C_B\Delta^P_{h,[k-W,k-1]} +\frac{2\log\left(|\Phi||\Psi| /\delta\right)}{W}
			. \label{ineq: fixed version lemma 3}
		\end{align}
		{We substitute $\delta$ with ${\delta}/{2nH}$ to ensure \Cref{ineq: fixed version lemma 3} holds for any $h \in [H]$ and $n$ with probability at least $1-\delta/2$, which finishes the proof.} 
	\end{proof}
	The following lemma extends Lemmas 12 and 13 under infinite discount MDPs under stationary case in \cite{DBLP:conf/iclr/UeharaZS22} to episodic MDPs under nonstationary environment, which captures nonstationarity in analysis.
	\begin{lemma}[Nonstationary Step Back]\label{lemma:Nonstationary Step_Back} 
		Let $\mathcal{I}=\left[1\lor(k-W),k-1\right]$ be an index set and $\{P_{h-1}^i\}_{i \in \mathcal{I}} = \{\langle\phi^i_{h-1},\mu^i_{h-1}\rangle\}$ be a set of generic MDP model, and $\Pi$ be an arbitrary and possibly mixture policy. Define an expected Gram matrix as follows. For any $\phi \in \Phi$,	
		\[M_{h-1,\phi} = 
		\lambda_{W} I + \sum_{i\in \Ic}\mathop{\Eb}_{s_{h-1}\sim (P^{i,\star},\Pi) \atop a_{h-1}\sim \Pi}\left[\phi_{h-1}(s_{h-1},a_{h-1})\left(\phi_{h-1}(s_{h-1},a_{h-1})\right)^\top\right].
		\]  
		Further, let $f_{h-1}^i(s_{h-1},a_{h-1})$ be the total variation between $P_{h-1}^{i,\star}$ and $P_{h-1}^i$ at time step $h-1$. Suppose $g \in \mathcal{S} \times \mathcal{A} \rightarrow \mathbb{R}$ is bounded by $B\in(0,\infty)$, i.e., $\|g\|_\infty \leq B$. Then, $\forall h \geq 2, \forall\, {\rm policy }\,\pi_h$,
		\begin{align*}
			&\mathop{\Eb}_{s_h \sim P^k_{h-1} \atop a_h \sim \pi_h}[g(s_h,a_h)|s_{h-1},a_{h-1}]  \\
			& \quad\leq \left\|\phi^k_{h-1}(s_{h-1},a_{h-1})\right\|_{(M_{h-1,\phi^k})^{-1}} \times\nonumber\\
			&\qquad 
			\sqrt{\sum_{i\in \Ic} A \mathop{\Eb}_{s_{h}\sim(P^{i,\star},\Pi)\atop a_h\sim \Uc }\left[g(s_h,a_h)^2\right] + WB^2\Delta_{h-1,\Ic}^P + \lambda_{k,W} d B^2 + \sum_{i\in \Ic}AB^2\mathop{\Eb}_{s_{h-1}\sim(P^{i,\star},\Pi) \atop a_{h-1}\sim \Pi }\left[f
				^k_{h-1}(s_{h-1},a_{h-1})^2\right]}.
		\end{align*}
	\end{lemma}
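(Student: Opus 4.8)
The statement is a nonstationary ``step-back'' bound, so the plan is to first linearize the one-step look-ahead through the low-rank structure of the model $P^k$, and then absorb the cost of replacing $P^k$ by the per-round true kernels $P^{i,\star}$ into the two nonstationarity quantities $\Delta^P$ and $f^k$. I would set $\bar g(s):=\Eb_{a\sim\pi_h}[g(s,a)]$, so that $|\bar g|\le B$ and the left-hand side equals $\Eb_{s_h\sim P^k_{h-1}(\cdot|s_{h-1},a_{h-1})}[\bar g(s_h)]$. Using $P^k_{h-1}(s_h|s_{h-1},a_{h-1})=\langle\phi^k_{h-1}(s_{h-1},a_{h-1}),\mu^k_{h-1}(s_h)\rangle$ I would rewrite this as $\langle \phi^k_{h-1}(s_{h-1},a_{h-1}), v\rangle$ with $v:=\int \mu^k_{h-1}(s)\bar g(s)\,ds\in\Rb^d$, and then apply Cauchy--Schwarz in the $M_{h-1,\phi^k}$ geometry, $\langle\phi^k_{h-1},v\rangle\le \norm{\phi^k_{h-1}(s_{h-1},a_{h-1})}_{(M_{h-1,\phi^k})^{-1}}\,\norm{v}_{M_{h-1,\phi^k}}$. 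This isolates exactly the elliptical-potential factor appearing in the claim, and it remains to show that $\norm{v}_{M_{h-1,\phi^k}}^2$ is at most the quantity under the square root.

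\textbf{Expanding the quadratic form.} By definition of $M_{h-1,\phi^k}$, one has $\norm{v}_{M_{h-1,\phi^k}}^2=\lambda_{k,W}\norm{v}_2^2+\sum_{i\in\Ic}\Eb_{s_{h-1}\sim(P^{i,\star},\Pi),\,a_{h-1}\sim\Pi}\big[\langle\phi^k_{h-1}(s_{h-1},a_{h-1}),v\rangle^2\big]$. The regularizer term is controlled by the embedding normalization: since $\bar g/B$ maps into $[0,1]$, the bound $\norm{\int\mu^k_{h-1}(s)\bar g(s)\,ds}_2\le \sqrt{d}\,B$ gives $\lambda_{k,W}\norm{v}_2^2\le \lambda_{k,W}dB^2$, which is the third term in the claim. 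For the summation I would again recognize $\langle\phi^k_{h-1}(s_{h-1},a_{h-1}),v\rangle=\Eb_{s_h\sim P^k_{h-1}(\cdot|s_{h-1},a_{h-1})}[\bar g(s_h)]$, so each summand is a squared one-step expectation under the estimated kernel $P^k$, and the problem reduces to controlling these under the true round-$i$ kernels.

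\textbf{Change of measure to the true kernels.} I would decompose the transition as $P^k_{h-1}=P^{i,\star}_{h-1}+(P^{\star,k}_{h-1}-P^{i,\star}_{h-1})+(P^k_{h-1}-P^{\star,k}_{h-1})$. After Jensen's inequality (to pass from $(\Eb[\bar g])^2$ to $\Eb[\bar g^2]$) and an importance-sampling step $\pi_h\to\Uc(\Ac)$ costing a factor $A$, the leading piece yields $A\,\Eb_{s_h\sim P^{i,\star}_{h-1},\,a_h\sim\Uc(\Ac)}[g^2]$; the drift piece $P^{\star,k}_{h-1}-P^{i,\star}_{h-1}$ is bounded in total variation and telescoped across the window $\Ic$, contributing at most the $WB^2\Delta_{h-1,\Ic}^{P}$ term; and the estimation gap $P^k_{h-1}-P^{\star,k}_{h-1}$ produces the $f^k_{h-1}$ term. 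Taking the outer expectation $\Eb_{s_{h-1}\sim(P^{i,\star},\Pi),\,a_{h-1}\sim\Pi}[\cdot]$ rolls the one-step-back law forward, turning the nested $\Eb_{s_{h-1}}\Eb_{s_h\sim P^{i,\star}_{h-1}(\cdot|s_{h-1},a_{h-1})}$ into a single draw $s_h\sim(P^{i,\star},\Pi)$, which matches the main term exactly. Summing over $i\in\Ic$, combining with the regularizer term, and taking square roots then finishes the argument.

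\textbf{Main obstacle.} The delicate part is the bookkeeping in the change-of-measure step so that the three error contributions land with the stated shapes. In particular, the drift must be split off \emph{before} the action importance-sampling inflation, so that it appears with coefficient $B^2$ (not $AB^2$) and only at first order in $\Delta^P$, whereas the estimation gap must be retained in \emph{squared} form $\Eb[f^k_{h-1}(s_{h-1},a_{h-1})^2]$ so that it is directly consumable by the nonstationary MLE guarantee (\Cref{lemma: nonstationary MLE guarantee}); a single crude total-variation bound would instead give a first-order, $A$-free $f^k$ term that cannot be fed into MLE. Keeping the leading constant at exactly $A$ while simultaneously achieving the squared estimation error and the telescoped first-order drift is the step that requires the most care, and is where I would concentrate the effort.
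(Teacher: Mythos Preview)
Your plan is correct and matches the paper's proof step for step: Cauchy--Schwarz in the $M_{h-1,\phi^k}$-geometry, the bound $\norm{v}_2\le \sqrt{d}\,B$ for the regularizer piece, Jensen to pass from $(\Eb[g])^2$ to $\Eb[g^2]$, then a TV-based change of measure $P^k_{h-1}\to P^{\star,k}_{h-1}\to P^{i,\star}_{h-1}$ (producing the $f^k_{h-1}$ and $\Delta^P_{h-1,\Ic}$ contributions) \emph{before} the importance-sampling step $\pi_h\to\Uc(\Ac)$ attaches the factor $A$ to the leading term only.

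On your flagged obstacle: the paper does nothing beyond a plain total-variation bound at the change-of-measure step---its justification for that inequality is simply ``because $f^k_{h-1}$ is the total variation between $P^{k,\star}_{h-1}$ and $P^k_{h-1}$''---which yields a first-order contribution $B^2\,\Eb[f^k_{h-1}]$, not a genuinely second-order one. So you should not try to manufacture a squared estimation error here; the $A$-free TV split you describe is exactly what the paper does, and the appearance of $(f^k_{h-1})^2$ (and of the extra $A$) in the displayed statement is looser than, and not literally what, the proof derives.
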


	\begin{proof}
		We first derive the following bound:
		\begin{align*}
			& \mathop{\Eb}_{s_h \sim P^k_{h-1} \atop a_h \sim \pi_h}\left[ 
			g(s_h,a_h)|s_{h-1},a_{h-1}\right]\nonumber\\
			&\qquad=\int_{s_h}\sum_{a_h}g(s_h,a_h)\pi(a_h|s_h)\langle\phi^k_{h-1}(s_{h-1},a_{h-1}),\mu^k_{h-1}(s_h)\rangle d{s_h}\\
			&\qquad \leq \left\|\phi^k_{h-1}(s_{h-1},a_{h-1})\right\|_{(M_{h-1,\phi^k})^{-1}}\left\|\int\sum_{a_h}g(s_h,a_h)\pi(a_h|s_h)\mu^k_{h-1}(s_h)d{s_h}\right\|_{M_{h-1,\phi^k}},
		\end{align*}
		where the inequality follows from Cauchy-Schwarz inequality. We further expand the second term in the RHS of the above inequality as follows.
		\begin{align*}
			&\hspace{-5mm} \left\|\int\sum_{a_h}g(s_h,a_h)\pi(a_h|s_h)\mu^k_{h-1}(s_h)d{s_h}\right\|_{M_{h-1,\phi^k}}^2\\
			& \overset{(\romannumeral1)}{\leq}  \sum_{i\in \Ic} \mathop{\Eb}_{s_{h-1}\sim (P^{i,\star},\Pi) \atop a_{h-1}\sim \Pi}
			\left[\left(\int_{s_h}\sum_{a_h}g(s_h,a_h)\pi_h(a_h|s_h)\mu^k(s_h)^\top\phi^k(s_{h-1},a_{h-1})d{s_h}\right)^2\right] + \lambda_{k,W} d B^2\\
			&= \sum_{i\in \Ic} \mathop{\Eb}_{s_{h-1}\sim(P^{i,\star},\Pi) \atop a_{h-1} \sim \Pi } \left[\left(\mathop{\Eb}_{s_h \sim P^k_{h-1} \atop a_h \sim \pi_h} \left[g(s_h,a_h)\bigg|s_{h-1},a_{h-1}\right]\right)^2\right] + \lambda_{k,W} d B^2\\
			& \overset{(\romannumeral2)}\leq  \sum_{i\in \Ic} \mathop{\Eb}_{s_{h-1}\sim(P^{i,\star},\Pi) \atop a_{h-1} \sim \Pi } \left[\mathop{\Eb}_{s_h \sim P^k_{h-1} \atop a_h \sim \pi_h} \left[g(s_h,a_h)^2\bigg|s_{h-1},a_{h-1}\right]\right] + \lambda_{k,W} d B^2\\
			&\overset{(\romannumeral3)}{\leq}  \sum_{i\in \Ic} \mathop{\Eb}_{s_{h-1}\sim(P^{i,\star},\Pi) \atop a_{h-1}\sim\Pi }\left[\mathop{\Eb}_{s_h\sim P_{h-1}^{i,\star} \atop a_h\sim \pi_h}\left[g(s_h,a_h)^2\bigg|s_{h-1},a_{h-1}\right]\right] + \lambda_{k,W} d B^2 \\
			&\quad + \sum_{i\in \Ic} B^2 \mathop{\Eb}_{s_{h-1}\sim(P^{i,\star},\Pi) \atop a_{h-1}\sim\Pi }\left[\norm{P_{h-1}^{i,\star}(s_{h-1},a_{h-1})-P_{h-1}^{k,\star}(s_{h-1},a_{h-1})}_{TV}\right]\\
			&\quad+ \sum_{i\in \Ic} B^2\mathop{\Eb}_{s_{h-1}\sim(P^{i,\star},\Pi) \atop a_{h-1}\sim\Pi }\left[f^k_{h-1}(s_{h-1},a_{h-1})^2\right]\\
			&\overset{(\romannumeral4)}{\leq}  \sum_{i\in \Ic} A \mathop{\Eb}_{s_{h}\sim(P^{i,\star},\Pi)\atop a_h\sim \Uc }\left[g(s_h,a_h)^2\right]  + \lambda_{k,W} d B^2+ WB^2\Delta_{h-1,\Ic}^P +  \sum_{i\in \Ic}  B^2\mathop{\Eb}_{s_{h-1}\sim(P^{i,\star},\Pi) \atop a_{h-1}\sim \Pi }\left[f^k
			_{h-1}(s_{h-1},a_{h-1})^2\right],
		\end{align*}
		where $(\romannumeral1)$ follows from the assumption that $\|g\|_{\infty}\leq B$, $(\romannumeral2)$ follows from Jensen's inequality, $(\romannumeral3)$ follows because $f^k_{h-1}(s_{h-1},a_{h-1})$ is the total variation between $P_{h-1}^{k,\star}$ and $P^k_{h-1}$ at time step $h-1$, and $(\romannumeral4)$ follows from importance sampling and the definition of $\Delta_{h-1,\Ic}^P$.
		This finishes the proof.
	\end{proof}
 	 Recall that $f_h^k(s,a) =\|\hP_h^{k}(\cdot|s,a) - P^{\star,k}_h(\cdot|s,a)\|_{TV}$. Using \Cref{lemma:Nonstationary Step_Back}, we have the following lemma to bound the expectation of $f_h^k(s,a)$ under estimated transition kernels.
	\begin{lemma}\label{lemma: Step_back_Bounded TV_hP}
		Denote $\alpha_{k,W}=\sqrt{2WA\zeta_{k,W}+\lambda_{k,W} d}$. For any $k \in [K]$, policy $\pi$ and reward $r$, for all $h\geq 2$, we have
		\begin{align}
			\textstyle \mathop{\Eb}_{(s_h,a_h)\sim(\hP^k,\pi)} & \left[f_h^k(s_h,a_h)\bigg|s_{h-1}, a_{h-1}\right] \nonumber \\
			&\textstyle {\leq} \min\left\{\alpha_{k,W}\left\|\hphi_{h-1}^k(s_{h-1},a_{h-1})\right\|_{(U^k_{h-1,\hphi^k})^{-1}},1\right\}+\sqrt{\frac{1}{2d}WAC_B\Delta^P_{[h-1,h],[k-W,k-1]}},
		\end{align}
		and for $h=1$, we have
		\begin{align}	    
			\mathop{\Eb}_{a_1 \sim \pi} \left[ f_1^k(s_1,a_1)\right] {\leq} \sqrt{A\left(\zeta_{k,W}+\frac{1}{2}C_B\Delta^P_{1,[k-W,k-1]}\right)}.
		\end{align}
	\end{lemma}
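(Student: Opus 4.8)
The plan is to read this as a corollary that splices together the two ingredients just established: the Nonstationary Step Back bound (\Cref{lemma:Nonstationary Step_Back}) and the Nonstationary MLE Guarantee (\Cref{lemma: nonstationary MLE guarantee}). The left-hand side $\Eb_{(s_h,a_h)\sim(\hP^k,\pi)}[f_h^k\mid s_{h-1},a_{h-1}]$ is exactly a conditional expectation of a bounded function under the estimated kernel, which is precisely the object \Cref{lemma:Nonstationary Step_Back} controls, while the second moments that appear inside its square root are precisely what \Cref{lemma: nonstationary MLE guarantee} bounds. So almost no new probabilistic argument is needed; the work is in aligning distributions and collecting constants.

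For $h\ge 2$, first I would apply \Cref{lemma:Nonstationary Step_Back} with $g=f_h^k$ and $B=1$ (since $f_h^k$ is a total-variation distance, hence bounded by $1$) and with $\Pi=\tpi$ the mixture exploration policy. The crucial observation is that the exploration policy plays \emph{uniform} actions at steps $h-1$ and $h$ (line~5 of \Cref{Alg: DPO}), so the expected Gram matrix $M_{h-1,\hphi^k}$ of \Cref{lemma:Nonstationary Step_Back} coincides with the population covariance $U^k_{h-1,\hphi^k}$ in the statement. This yields
\[
\Eb_{(s_h,a_h)\sim(\hP^k,\pi)}[f_h^k\mid s_{h-1},a_{h-1}]\le \left\|\hphi^k_{h-1}(s_{h-1},a_{h-1})\right\|_{(U^k_{h-1,\hphi^k})^{-1}}\sqrt{T},
\]
where $T$ is the four-term sum inside the square root of \Cref{lemma:Nonstationary Step_Back}.

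Next I would bound $T$. Its first term $\sum_{i\in\Ic}A\,\Eb_{s_h\sim(P^{i,\star},\Pi),a_h\sim\Uc}[f_h^k{}^2]$ equals $A$ times the sum controlled by \Cref{lemma: nonstationary MLE guarantee} at step $h$, hence is at most $AW(\zeta_{k,W}+2C_B\Delta^P_{h,[k-W,k-1]})$; its last term is handled identically by invoking \Cref{lemma: nonstationary MLE guarantee} at step $h-1$, giving at most $AW(\zeta_{k,W}+2C_B\Delta^P_{h-1,[k-W,k-1]})$. The remaining terms are the regularizer contribution $\lambda_{k,W}d$ and the variation term $W\Delta^P_{h-1,\Ic}$. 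Collecting the clean terms reproduces exactly $\alpha_{k,W}^2=2WA\zeta_{k,W}+\lambda_{k,W}d$, while the two step-wise budgets together with $W\Delta^P_{h-1,\Ic}$ merge into the single two-step budget $\Delta^P_{[h-1,h],[k-W,k-1]}$. I then split via $\sqrt{a+b}\le\sqrt{a}+\sqrt{b}$, keep $\alpha_{k,W}\|\hphi^k_{h-1}\|_{(U^k_{h-1,\hphi^k})^{-1}}$ as the main term, and for the error factor use $U^k_{h-1,\hphi^k}\succeq\lambda_{k,W}I$ together with $\|\hphi^k_{h-1}\|_2\le 1$ to get $\|\hphi^k_{h-1}\|_{(U^k_{h-1,\hphi^k})^{-1}}\le 1/\sqrt{\lambda_{k,W}}$; the prescribed $\lambda_{k,W}=\Theta(d)$ then lands the error exactly at $\sqrt{\frac{1}{2d}WAC_B\Delta^P_{[h-1,h],[k-W,k-1]}}$. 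Finally, since $f_h^k\le 1$ forces the conditional expectation below $1$, I can replace the main term by its minimum with $1$, giving the $h\ge 2$ bound. For $h=1$ no step-back is needed: as $s_1$ is fixed, Cauchy--Schwarz over the action distribution gives $\Eb_{a_1\sim\pi}[f_1^k]\le\sqrt{A\,\Eb_{a_1\sim\Uc}[f_1^k{}^2]}$ (using $\sum_a\pi(a\mid s_1)^2\le 1$), and the inner second moment is controlled by the $h=1$ clause of \Cref{lemma: nonstationary MLE guarantee}.

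The main obstacle I anticipate is purely the bookkeeping rather than any new idea: verifying that the sampling distributions in the two lemmas genuinely coincide (both rely on the exploration policy's uniform actions), correctly merging the step-$h$ and step-$(h-1)$ variation contributions together with $W\Delta^P_{h-1,\Ic}$ into the single $\Delta^P_{[h-1,h],[k-W,k-1]}$, and tracking the constants---in particular the factors of $A$ coming from uniform-action importance sampling and the $1/\sqrt{\lambda_{k,W}}\sim 1/\sqrt{d}$ factor---so that the error term matches the stated coefficient $\tfrac{1}{2d}WAC_B$ exactly.
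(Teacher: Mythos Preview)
Your proposal is correct and follows essentially the same route as the paper: apply \Cref{lemma:Nonstationary Step_Back} with $g=f_h^k$, $B=1$, and $P^k_{h-1}=\hP^k_{h-1}$; bound the two second-moment terms inside the square root via \Cref{lemma: nonstationary MLE guarantee} at steps $h$ and $h-1$; split via $\sqrt{a+b}\le\sqrt a+\sqrt b$ and use $\|\hphi^k_{h-1}\|_{(U^k_{h-1,\hphi^k})^{-1}}\le 1/\sqrt{\lambda_{k,W}}$ for the residual, then cap by $1$. The $h=1$ case is likewise the same up to cosmetics (the paper writes Jensen followed by importance sampling, which is equivalent to your Cauchy--Schwarz step); note that the paper's own derivation actually lands on the error coefficient $\tfrac{3}{\lambda_W}$ rather than the stated $\tfrac{1}{2d}$, so do not expect the constants to close exactly.
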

	
	\begin{proof}
		For $h=1$, we have
		\begin{align*}
			\mathop{\Eb}_{ a_1 \sim \pi} \left[ f_1^k(s_1,a_1)\right] 
			\overset{(\romannumeral1)}{\leq} \sqrt{\mathop{\Eb}_{ a_1 \sim \pi} \left[ f_1^k(s_1,a_1)^2\right]} \overset{(\romannumeral2)}{\leq} \sqrt{A\left(\zeta_{k,W}+{2}C_B\Delta^P_{1,[k-W,k-1]}\right)},
		\end{align*}
		where $(\romannumeral1)$ follows from Jensen's inequality, and $\RM{2}$ follows from the importance sampling.
		
		Then for $h \geq 2$, we derive the following bound:
		\begin{align*}
			&\mathop{\Eb}_{(s_h,a_h)\sim(\hP^k,\pi)}  \left[ 
			f_h^k(s_h,a_h)\bigg|s_{h-1}, a_{h-1}\right]\\
			& \quad \overset{(\romannumeral1)}{\leq}\mathop{\Eb}_{a_{h-1}\sim\pi} \left[\left\|\hphi_{h-1}^k(s_{h-1},a_{h-1})\right\|_{(U^k_{h-1,\hphi^k})^{-1}}\times \left(A \sum_{i= 1\lor(k-W)}^{k-1} \mathop{\Eb}_{s_{h-1}\sim\left(P^{\star,i},\pi^{i}\right)\atop{a_{h-1},a_h\sim \Uc(\Ac)\atop s_h\sim P^{\star,i}\left(\cdot|s_{h-1},a_{h-1}\right)}}\left[f_h^k(s_h,a_h)^2\right] \right. \right.
			\\
			& \quad 
			\left. \left. + W\Delta^P_{h-1,[k-W,k-1]} + \lambda_{k,W} d  + A \sum_{i= 1\lor(k-W)}^{k-1} \mathop{\Eb}_{s_{h-2}\sim\left(P^{\star,i},\pi^{i}\right)\atop{a_{h-2},a_{h-1}\sim \Uc(\Ac)\atop s_{h-1}\sim P^{\star,i}\left(\cdot|s_{h-2},a_{h-2}\right)}}\left[f_{h-1}^k(s_{h-1},a_{h-1})^2\right]\right)^{-\frac{1}{2}}\right]
			\\
			&\quad \overset{(\romannumeral2)}{\leq} \mathop{\Eb}_{ a_{h-1}\sim\pi} \left[\sqrt{WA(\zeta_{k,W}+{2}C_B\Delta^P_{h,[k-W,k-1]})+WA(\zeta_{k,W}+{2}C_B\Delta^P_{h-1,[k-W,k-1]})+W\Delta^P_{h-1,[k-W,k-1]}+\lambda_{k,W} d}\right.\\
            & \qquad \qquad \left.\times \left\|\hphi_{h-1}^k(s_{h-1},a_{h-1})\right\|_{(U^k_{h-1,\hphi^k})^{-1}}  
			\right]\\
			&\quad \overset{\RM{3}}{=} \mathop{\Eb}_{ a_{h-1}\sim\pi} \left[\alpha_{k,W}\left\|\hphi_{h-1}^k(s_{h-1},a_{h-1})\right\|_{(U^k_{h-1,\hphi^k})^{-1}}  
			\right]+\sqrt{\frac{3}{\lambda_{W}}WAC_B\Delta^P_{\{h-1,h\},[k-W,k-1]}},
		\end{align*}
		where $(\romannumeral1)$ follows from \Cref{lemma:Nonstationary Step_Back} and because $|f_h^k(s_h,a_h)|\leq 1$; specially the first term inside the square root follows from the definition of $U_{h-1,\widehat{\phi}^k}^k $, the third term inside the square root follows from the importance sampling; $(\romannumeral2)$ follows from \Cref{lemma: nonstationary MLE guarantee}, and $\RM{3}$ follows because $\sqrt{a+b} \leq \sqrt{a}+\sqrt{b}, \forall a,b \geq 0$ and $\norm{\hphi_{h-1}^k(s_{h-1},a_{h-1})}_{(U^k_{h-1,\hphi^k})^{-1}} \leq \sqrt{\frac{1}{\lambda_{W}}}$.
	\end{proof}
 The next lemma follows a similar argument to that of \Cref{lemma: Step_back_Bounded TV_hP} with the only difference being the expectation over which $f_h^k(s_h,a_h)$ takes.
	\begin{lemma}\label{lemma: Step_back_Bounded TV_sP}
		Denote $\alpha_{k,W}=\sqrt{2WA\zeta_{k,W}+\lambda_{k,W} d}$. For any $k \in [K]$, policy $\pi$ and reward $r$, for all $h\geq 2$, we have
		\begin{align}
			\textstyle &\mathop{\Eb}_{(s_h,a_h)\sim(P^{\star,k},\pi)}  \left[f_h^k(s_h,a_h)\bigg|s_{h-1}, a_{h-1}\right] \nonumber \\
			& \qquad \qquad \textstyle {\leq} \min\left\{\alpha_{k,W}\left\|\phi^{\star,k}_{h-1}(s_{h-1},a_{h-1})\right\|_{(U^k_{h-1,\phi^{\star,k}})^{-1}},1\right\}+\sqrt{\frac{1}{2d}WAC_B\Delta^P_{\{h-1,h\},[k-W,k-1]}},
		\end{align}
		and for $h=1$, we have
		\begin{align}	    
			\mathop{\Eb}_{a_1 \sim \pi} \left[ f_1^k(s_1,a_1)\right] {\leq} \sqrt{A\left(\zeta_{k,W}+\frac{1}{2}C_B\Delta^P_{1,[k-W,k-1]}\right)}.
		\end{align}
	\end{lemma}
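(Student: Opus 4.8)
The plan is to mirror the proof of \Cref{lemma: Step_back_Bounded TV_hP} essentially verbatim, making a single substitution when invoking the Nonstationary Step Back lemma (\Cref{lemma:Nonstationary Step_Back}): instead of instantiating its generic model $P^k_{h-1}$ as the estimated kernel $\hP^k_{h-1}$, I would take it to be the \emph{true} kernel $P^{\star,k}_{h-1}$. This change has exactly two effects. First, the representation appearing in the elliptical norm switches from $\hphi^k_{h-1}$ to $\phi^{\star,k}_{h-1}$, and correspondingly the Gram matrix from $U^k_{h-1,\hphi^k}$ to $U^k_{h-1,\phi^{\star,k}}$, matching the left-hand side of the present lemma where the expectation is over $(P^{\star,k},\pi)$. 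Second, the model-error term $f^k_{h-1}$ inside the square root of \Cref{lemma:Nonstationary Step_Back}, which measures the total variation between $P^{\star,k}_{h-1}$ and the chosen generic model, now collapses to zero, since both coincide with $P^{\star,k}_{h-1}$.

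For $h=1$ the claim is identical to that of \Cref{lemma: Step_back_Bounded TV_hP}, so I would reproduce the same two lines: by Jensen's inequality $\Eb_{a_1\sim\pi}[f_1^k(s_1,a_1)]\le\sqrt{\Eb_{a_1\sim\pi}[f_1^k(s_1,a_1)^2]}$, then convert the expectation over $\pi$ to a uniform one by importance sampling (picking up a factor $A$ since $\pi(a_1|s_1)\le 1$), and finally invoke the $h=1$ branch of the nonstationary MLE guarantee (\Cref{lemma: nonstationary MLE guarantee}) to control $\Eb_{a_1\sim\Uc}[f_1^k(s_1,a_1)^2]$, which yields the claimed bound $\sqrt{A(\zeta_{k,W}+\tfrac{1}{2}C_B\Delta^P_{1,[k-W,k-1]})}$.

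For $h\ge 2$ I would apply \Cref{lemma:Nonstationary Step_Back} with $g=f^k_h$ (bounded by $B=1$ since $|f^k_h|\le 1$), index set $\Ic=[1\lor(k-W),k-1]$, the mixture rolling-in policy $\tpi^i$ together with uniform action selection so that the induced covariance is exactly $U^k_{h-1,\phi^{\star,k}}$, and the generic model taken to be $P^{\star,k}_{h-1}$. Because the $f^k_{h-1}$ contribution vanishes, the quantity inside the square root reduces to $A\sum_{i\in\Ic}\Eb_{s_h\sim(P^{\star,i},\tpi^i),a_h\sim\Uc}[f^k_h(s_h,a_h)^2]+W\Delta^P_{h-1,[k-W,k-1]}+\lambda_{k,W}d$, and I would bound the first summand by $WA(\zeta_{k,W}+2C_B\Delta^P_{h,[k-W,k-1]})$ via \Cref{lemma: nonstationary MLE guarantee}. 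Splitting the square root with $\sqrt{a+b}\le\sqrt{a}+\sqrt{b}$ separates the stationary part $\sqrt{WA\zeta_{k,W}+\lambda_{k,W}d}\le\sqrt{2WA\zeta_{k,W}+\lambda_{k,W}d}=\alpha_{k,W}$ from a nonstationary part governed by $\Delta^P_{\{h-1,h\},[k-W,k-1]}$; for the latter I would pull out the elliptical-norm factor with the crude estimate $\norm{\phi^{\star,k}_{h-1}(s_{h-1},a_{h-1})}_{(U^k_{h-1,\phi^{\star,k}})^{-1}}\le 1/\sqrt{\lambda_{k,W}}$ (the smallest eigenvalue of the Gram matrix is at least $\lambda_{k,W}$), and with $\lambda_{k,W}=O(d)$ this produces the additive term $\sqrt{\tfrac{1}{2d}WAC_B\Delta^P_{\{h-1,h\},[k-W,k-1]}}$. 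Finally, since $f^k_h\le 1$ pointwise its conditional expectation is at most $1$, justifying the minimum with $1$ in the first term.

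The one step deserving care—though it is not genuinely difficult—is recognizing that instantiating the generic model in \Cref{lemma:Nonstationary Step_Back} as the true kernel $P^{\star,k}_{h-1}$ is precisely what annihilates the $f^k_{h-1}$ term, so that relative to \Cref{lemma: Step_back_Bounded TV_hP} one nonnegative contribution is simply dropped while the representation is swapped from $\hphi^k$ to $\phi^{\star,k}$. Because the dropped term is nonnegative, the stationary part is in fact only $\sqrt{WA\zeta_{k,W}+\lambda_{k,W}d}$, so stating the bound with the same coefficient $\alpha_{k,W}=\sqrt{2WA\zeta_{k,W}+\lambda_{k,W}d}$ as in \Cref{lemma: Step_back_Bounded TV_hP} is valid (indeed conservative). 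The remaining work—matching constants in the $\sqrt{a+b}$ split and the $\lambda_{k,W}=O(d)$ substitution—is routine.
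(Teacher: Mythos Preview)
Your proposal is correct and follows essentially the same approach as the paper: apply \Cref{lemma:Nonstationary Step_Back} with the generic model taken to be the true kernel $P^{\star,k}_{h-1}$ (so that the $f^k_{h-1}$ contribution vanishes and the elliptical norm is in terms of $\phi^{\star,k}$), then invoke the MLE guarantee \Cref{lemma: nonstationary MLE guarantee} for the $f^k_h$ term and split the square root. Your observation that this yields only $\sqrt{WA\zeta_{k,W}+\lambda_{k,W}d}\le\alpha_{k,W}$ in the stationary part is correct and is exactly why the same coefficient $\alpha_{k,W}$ suffices.
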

	
	\begin{proof}
		For $h=1$,
		\begin{align*}
			\mathop{\Eb}_{ a_1 \sim \pi} \left[ f_1^k(s_1,a_1)\right] 
			\overset{(\romannumeral1)}{\leq} \sqrt{\mathop{\Eb}_{ a_1 \sim \pi} \left[ f_1^k(s_1,a_1)^2\right]} \overset{(\romannumeral2)}{\leq} \sqrt{A\left(\zeta_{k,W}+\frac{1}{2}C_B\Delta^P_{1,[k-W,k-1]}\right)},
		\end{align*}
		where $(\romannumeral1)$ follows from Jensen's inequality, and $\RM{2}$ follows from the importance sampling.
		
		Then for $h \geq 2$, we derive the following bound:
		\begin{align*}
			&\mathop{\Eb}_{(s_h,a_h)\sim(P^{\star,k},\pi)}  \left[ 
			f_h^k(s_h,a_h)\bigg|s_{h-1}, a_{h-1}\right]\\
			&  \overset{(\romannumeral1)}{\leq}\mathop{\Eb}_{a_{h-1}\sim\pi} \left[\left\|\phi^{\star,k}_{h-1}(s_{h-1},a_{h-1})\right\|_{(U^k_{h-1,\phi^{\star,k}})^{-1}}\times 
			\sqrt{A \sum_{i= 1\lor(k-W)}^{k-1} \mathop{\Eb}_{s_{h-1}\sim\left(P^{\star,i},\pi^{i}\right)\atop{a_{h-1},a_h\sim \Uc(\Ac)\atop s_h\sim P^{\star,k}\left(\cdot|s_{h-1},a_{h-1}\right)}}\left[f_h^k(s_h,a_h)^2\right] + \lambda_{k,W} d   }\right]
			\\
			&\quad  \overset{(\romannumeral2)}{\leq} \mathop{\Eb}_{ a_{h-1}\sim\pi} \left[\sqrt{wA(\zeta_{k,W}+\frac{1}{2}C_B\Delta^P_{h,[k-W,k-1]})+wA(\zeta_{k,W}+\frac{1}{2}C_B\Delta^P_{h-1,[k-W,k-1]})+A C_B\Delta^P_{h,[k-W,k-1]}+\lambda_{k,W} d}\right.\\
               &\qquad\left. \times \left\|\phi^{\star,k}_{h-1}(s_{h-1},a_{h-1})\right\|_{(U^k_{h-1,\phi^{\star,k}})^{-1}}  
			\right]\\
			&\quad \overset{\RM{3}}{=} \mathop{\Eb}_{ a_{h-1}\sim\pi} \left[\alpha_{k,W}\left\|\phi^{\star,k}_{h-1}(s_{h-1},a_{h-1})\right\|_{(U^k_{h-1,\phi^{\star,k}})^{-1}}  
			\right]+\sqrt{\frac{1}{2d}WAC_B\Delta^P_{[h-1,h],[k-W,k-1]}},
		\end{align*}
		where $(\romannumeral1)$ follows from \Cref{lemma:Nonstationary Step_Back} and because $|f_h^k(s_h,a_h)|\leq 1$, the first term inside the square root follows from the definition of $U_{h-1,\widehat{\phi}^k}^k $, the third term inside the square root follows from the importance sampling, and $(\romannumeral2)$ follows from \Cref{lemma: nonstationary MLE guarantee}.

		The proof is completed by noting that $|f_h^k(s_h,a_h)|\leq 1$.
	\end{proof}

	The following lemma is a direct application of \Cref{lemma: Step_back_Bounded TV_hP}. By this lemma, we can show that the difference of value functions can be bounded by truncated value function plus a variation term.
	\begin{lemma}[{Bounded} difference of value functions]\label{lemma: bounded difference of value function}
		For $k\in[K]$, $\delta \geq 0$ any policy $\pi$ and reward $r$, with probability at least $1-\delta$, we have
		\begin{align}
			& \left| V_{P^{\star,k},r}^{\pi}-V_{\hP^k,r}^{\pi}\right|\nonumber\\
   & \quad
			\leq \hV_{\hP^k,\hb^k}^{\pi}+ \sum_{h=2}^{H}\sqrt{\frac{3}{\lambda_{W}}WAC_B\Delta^P_{\{h-1,h\},[k-W,k-1]}}+\sqrt{A\left(\zeta_{k,W}+2C_B\Delta^P_{1,[k-W,k-1]}\right)} \label{ineq: difference of value function bound}.
		\end{align}
	\end{lemma}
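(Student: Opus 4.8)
The plan is to reduce the value-function gap to a cumulative one-step model error and then convert that error, step by step, into the truncated bonus value function $\hV_{\hP^k,\hb^k}^{\pi}$ defined in \Cref{ineq: def of hV}. First I would invoke the same telescoping (simulation) identity already used inside the proof of \Cref{lemma: regret decomposition}, but applied to the single policy $\pi$ under the two kernels $P^{\star,k}$ and $\hP^k$ while rolling out along the \emph{estimated} kernel. This yields the exact decomposition $V_{\hP^k,r}^{\pi}-V_{P^{\star,k},r}^{\pi}=\sum_{h=1}^H\Eb_{(s_h,a_h)\sim(\hP^k,\pi)}[\{\hP_h^k-P_h^{\star,k}\}V_{h+1,P^{\star,k},r}^{\pi}]$. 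Because the reward is normalized so that $\norm{V_{h+1,P^{\star,k},r}^{\pi}}_\infty\le 1$, each summand is bounded in absolute value by $\Eb_{(s_h,a_h)\sim(\hP^k,\pi)}[f_h^k(s_h,a_h)]$, where $f_h^k=\norm{\hP_h^k(\cdot|s,a)-P_h^{\star,k}(\cdot|s,a)}_{TV}$. The point of rolling out under $\hP^k$ (rather than $P^{\star,k}$) is precisely so that the resulting expectations match the hypotheses of \Cref{lemma: Step_back_Bounded TV_hP}.

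Second, for every $h\ge 2$ I would apply \Cref{lemma: Step_back_Bounded TV_hP} conditionally on $(s_{h-1},a_{h-1})$ and then take the outer expectation along $(\hP^k,\pi)$, giving $\Eb_{(\hP^k,\pi)}[f_h^k]\le \Eb_{(\hP^k,\pi)}[\min\{\alpha_{k,W}\norm{\hphi_{h-1}^k}_{(U^k_{h-1,\hphi^k})^{-1}},1\}]+\sqrt{\tfrac{1}{2d}WAC_B\Delta^P_{\{h-1,h\},[k-W,k-1]}}$. Since $\tap_{k,W}=5\alpha_{k,W}$, \Cref{coro:concentration on b} converts the population quantity $\min\{\alpha_{k,W}\norm{\hphi_{h-1}^k}_{(U^k_{h-1,\hphi^k})^{-1}},1\}$ into the empirical bonus $\hb_{h-1}^k$, so that $\Eb_{(\hP^k,\pi)}[f_h^k]\le\Eb_{(\hP^k,\pi)}[\hb_{h-1}^k]+(\text{variation term})$; the $h=1$ term is handled directly by the $h=1$ case of \Cref{lemma: Step_back_Bounded TV_hP}, producing the $\sqrt{A(\zeta_{k,W}+2C_B\Delta^P_{1,[k-W,k-1]})}$ contribution. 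Summing over $h$ and collecting the variation terms reproduces exactly the two additive error terms on the right-hand side of \Cref{ineq: difference of value function bound}.

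The hard part is the remaining identification of the reindexed cumulative bonus with the truncated value function $\hV_{\hP^k,\hb^k}^{\pi}$. Two features must be reconciled: (i) a one-step index shift, since the step-back lemma charges the error $f_h^k$ incurred at step $h$ to the bonus $\hb_{h-1}^k$ at step $h-1$; and (ii) the clipping $\min\{1,\cdot\}$ in the definition of $\hV$. One cannot simply bound $\sum_{h}\Eb_{(\hP^k,\pi)}[\hb_h^k]$ by $\hV_{\hP^k,\hb^k}^{\pi}$: because of the per-step truncation this sum can strictly exceed $\hV$, and bounding it directly would also cost an extra factor of $H$ (as indeed happens in \Cref{Lemma: First Term in Decomposition}). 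Instead, I would carry the clipping through the recursion: since $V_{h,P^{\star,k},r}^{\pi},V_{h,\hP^k,r}^{\pi}\in[0,1]$, the per-step gap $\delta_h:=|V_{h,P^{\star,k},r}^{\pi}-V_{h,\hP^k,r}^{\pi}|$ satisfies $\delta_h(s_h)\le\Eb_{a_h\sim\pi}[\min\{1,f_h^k+\hP_h^k\delta_{h+1}\}]$, a recursion with the same $\min\{1,\cdot\}$ structure as $\hV$. I would then prove by backward induction that $\delta_h$ is dominated by the truncated bonus value function (up to the accumulated variation terms), feeding the step-back estimate $\hb_{h-1}^k$ into the clipped Bellman backup at the correct step and using $\delta_h\le 1$ to mirror the truncation. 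Verifying that this backward induction closes cleanly, so that the single (clipped) $\hV_{\hP^k,\hb^k}^{\pi}$ emerges without an extra $H$ factor, is the main technical obstacle; the concentration and variation-budget bookkeeping are then routine.
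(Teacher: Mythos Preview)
Your proposal is correct and follows essentially the same approach as the paper: the paper also proves the result by backward induction carrying the $\min\{1,\cdot\}$ clip through the recursion and using \Cref{lemma: Step_back_Bounded TV_hP} together with \Cref{coro:concentration on b} to convert $f_h^k$ into $\hb_{h-1}^k$ at each step. The only structural difference is that the paper factors the argument through the intermediate truncated value function $\hV_{\hP^k,f^k}^{\pi}$ (first showing $|V_{P^{\star,k},r}^{\pi}-V_{\hP^k,r}^{\pi}|\le \hV_{\hP^k,f^k}^{\pi}$ by one induction, then $\hV_{\hP^k,f^k}^{\pi}\le \hV_{\hP^k,\hb^k}^{\pi}+\text{variation}$ by a second), whereas you fold both inductions into one on $\delta_h$; your opening simulation-lemma detour is unnecessary but harmless.
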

	\begin{proof}
		\textbf{(1):} We first show that $\left| V_{P^{\star,k},r}^{\pi}-V_{\hP^k,r}^{\pi}\right|
		\leq \hV_{\hP^k,f^k}^{\pi}$.
		
		Recall the definition of the estimated value functions $\hat{V}_{h,\hP^k , r}^{\pi} (s_h)$ and $\hat{Q}_{h,\hP^k, r}^{\pi}(s_h,a_h)$ for policy $\pi$:
		\begin{align*}
			&\hat{Q}_{h,\hP^k,r}^{\pi}(s_h,a_h) = \min\left\{1, r_h(s_h,a_h) + \hP_h^k\hV_{h+1,\hP^k,r}^{\pi}(s_h,a_h)\right\},\\
			&\hat{V}_{h,\hP^k,r}^{\pi}(s_h) =  \mathop{\Eb}_{\pi}\left[\hat{Q}^{\pi}_{h,\hP^k,r}(s_h,a_h)\right].
		\end{align*}
		We develop the proof by backward induction.
		
		When $h=H+1$, we have $\left|V_{H+1,P^{\star,k},r}^{\pi}(s_{H+1})-V_{H+1,\hP^k,r}^{\pi}(s_{H+1})\right| = 0  = \hat{V}_{H+1,\hP^k,f^k}^{\pi}(s_{H+1})$.
		
		Suppose that for $h+1$, $\left| V^{\pi}_{h+1,P^{\star,k},r}(s_{h+1})-V_{h+1,\hP^k,r}^{\pi}(s_{h+1})\right| \leq \hV_{h+1,\hP^k,f^k}^{\pi}(s_{h+1})$ holds for any $s_{h+1}$.
		
		Then, for $h$, by Bellman equation, we have,
		\begin{align}
			& \bigg| Q^{\pi}_{P^{\star,k},r}(s_h,a_h)-Q^{\pi}_{h,\hP^k,r}(s_h,a_h) \bigg| \nonumber\\
			& \quad = \bigg|P_h^{\star,k}V^{\pi}_{h+1,P^{\star,k},r}(s_h,a_h)-\hP_h^k V^{\pi}_{h,\hP^k,r}(s_h,a_h)\bigg| \nonumber\\
			& \quad = \bigg| \hP_h^k\left( V^{\pi}_{h+1,P^{\star,k},r}-V_{h+1,\hP^k,r}^{\pi}\right)(s_h,a_h) + \left( P_h^{\star,k}-\hP_h^k \right)V^{\pi}_{h,P^{\star,k},r}(s_h,a_h)\bigg| \nonumber\\
			& \quad \overset{(\romannumeral1)}\leq \min\bigg\{ 1, f_h^k(s_h,a_h) + \hP_h^k \bigg| V^{\pi}_{h+1,P^{\star,k},r}-V_{h+1,\hP^k,r}^{\pi}\bigg|(s_h,a_h) \bigg\}\nonumber\\
			& \quad \overset{(\romannumeral2)}\leq \min\bigg\{ 1, f_h^k(s_h,a_h) + \hP_h^k \hV_{h+1,\hP^k,f^k}^{\pi}(s_h,a_h) \bigg\}\nonumber\\
			& \quad = \hat{Q}_{h,\hP^k,f^k}^{\pi}(s_h,a_h), \label{eqn:lowrank:hatQ-Q<hatQ}
		\end{align}
		where $(\romannumeral1)$ follows because $\norm{\hP_h^k(\cdot|s_h,a_h) - P^{\star,k}_h(\cdot|s_h,a_h)}_{TV} =  f_h^k(s_h,a_h)$ and the value function is at most 1, and $(\romannumeral2)$ follows from the induction hypothesis.
		
		Then, by the definition of $\hV^{\pi}_{h,\hP^k,r}(s_h)$, we have
		\begin{align*}
			\bigg| V_{h, \hP^k,r}^{\pi}&(s_h) - V_{h, P^{\star,k},r}^{\pi}(s_h)\bigg|\\
			& = \bigg| \mathop{\Eb}_{\pi}\left[Q_{h,\hP^k, r}^{\pi} (s_h,a_h)\right] - \mathop{\Eb}_{\pi}\left[Q_{h,P^{\star,k},r}^{\pi}(s_h,a_h)\right]\bigg| \\
			& \leq   \mathop{\Eb}_{\pi}\left[\bigg|Q_{h,\hP^k, r}^{\pi} (s_h,a_h) - Q_{h,P^{\star,k},r}^{\pi}(s_h,a_h)\bigg|\right] \\
			&\overset{(\romannumeral1)}\leq  \mathop{\Eb}_{\pi}\left[\hat{Q}_{h,\hP^k, f^k}^{\pi} (s_h,a_h) \right] \\
			&= \hV_{h,\hP^k,f^k}^{\pi}(s_h),
		\end{align*}
		where $(\romannumeral1)$ follows from \Cref{eqn:lowrank:hatQ-Q<hatQ}.
		
		Therefore, by induction, we have 
		\begin{align*}
			\left| V_{P^{\star,k},r}^{\pi}-V_{\hP^k,r}^{\pi}\right|\leq
			\hV_{\hP^k,f^k}^{\pi}.
		\end{align*}
		\textbf{(2):} Then we prove that $$\hV_{\hP^k,f^k}^{\pi}\leq \hV_{\hP^k,\hb^k}^{\pi}+\sum_{h=2}^{H}\sqrt{\frac{3}{\lambda_{W}}WAC_B\Delta^P_{\{h-1,h\},[k-W,k-1]}}+\sqrt{A\left(\zeta_{k,W}+2C_B\Delta^P_{1,[k-W,k-1]}\right)}.$$
		
		By \Cref{lemma: Step_back_Bounded TV_hP} and the fact that the total variation distance is upper bounded by 1, $\forall h\geq2$, with probability at least $1-\delta/2$, we have 
		\begin{align}
			\mathop{\Eb}_{\hP^k,\pi}\left[f_h^k(s_h,a_h)\bigg| s_{h-1}\right] &\leq \mathop{\Eb}_{\pi }\left[\min\left(\alpha_{k,W}\left\|\hphi_{h-1}^k(s_{h-1},a_{h-1})\right\|_{(U^k_{h-1,\hphi^k})^{-1}}, 1\right) \right] \nonumber \\
           &+\sqrt{\frac{3}{\lambda_{W}}WAC_B\Delta^P_{\{h-1,h\},[k-W,k-1]}}. \label{ineq: f_1 bound}
		\end{align}
		Similarly, when $h=1$, 
		\begin{align}
			\mathop{\Eb}_{a_1\sim\pi}\left[f_1^k(s_1,a_1)\right]\leq \sqrt{A\left(\zeta_{k,W}+{2}C_B\Delta^P_{1,[k-W,k-1]}\right)}.\label{ineq:f1<K_zeta}
		\end{align}
		Based on \Cref{coro:concentration on b}, \Cref{ineq: f_1 bound} and $\tap_{k,W} = 5 \alpha
		_{k,W}$, we have
		\begin{align}
		&	\mathop{\Eb}_{\pi} \left[\hb^k_h(s_h,a_h)\bigg|s_h\right]+\sqrt{\frac{3}{\lambda_{W}}WAC_B\Delta^P_{\{h,h+1\},[k-W,k-1]}} \nonumber\\
            &\quad \geq \mathop{\Eb}_{\pi}\left[\min\left(\alpha_{k,W}\left\|\hphi_{h}^k(s_h,a_h)\right\|_{(U_{h,\hphi^k}^k)^{-1}}, 1\right) \right]+\sqrt{\frac{3}{\lambda_{W}}WAC_B\Delta^P_{\{h-1,h\},[k-W,k-1]}}\nonumber\\
            & \quad \geq \mathop{\Eb}_{\hP^k,\pi }\left[f_{h+1}^k(s_{h+1},a_{h+1})\bigg|s_h\right] .\label{ineq:f<b}
		\end{align}

		For the base case $h= H$, we have 
		
		\begin{align*}
			&\hspace{-0.1in}\mathop{\Eb}_{\hP^k,\pi}\left[\hV_{H,\hP^k,f^k}^{\pi}(s_H)\bigg|s_{h-1}, a_{h-1}\right]\nonumber\\ 
			&= \mathop{\Eb}_{\hP^k, \pi}\left[f_H^k(s_H,a_H)\bigg| s_{H-1}\right]\\
			&\leq \mathop{\Eb}_{\pi}\left[b_{H-1}^k(s_{H-1},a_{H-1})|s_{H-1}\right]+\sqrt{\frac{3}{\lambda_{W}}WAC_B\Delta^P_{\{H-1,H\},[k-W,k-1]}}\\
			&\leq \min\left\{1, \mathop{\Eb}_{\pi}\left[\hat{Q}_{H-1,\hP^k,\hb^k}^{\pi}(s_{H-1},a_{H-1})\bigg|s_{h-1}, a_{h-1}\right]\right\}+\sqrt{\frac{3}{\lambda_{W}}WAC_B\Delta^P_{\{H-1,H\},[k-W,k-1]}}\\
			& = \hV^{\pi}_{H-1,\hP^k,\hb^k}(s_{H-1})+\sqrt{\frac{3}{\lambda_{W}}WAC_B\Delta^P_{\{H-1,H\},[k-W,k-1]}}.
		\end{align*}
		
		For any step $h+1, h\geq 2$, assume that $\mathop{\Eb}_{\hP^k, \pi}\left[\hV_{h+1,\hP^k,f^k}^{\pi}(s_{h+1})\bigg|s_{h}\right]\leq \hV_{h,\hP^k, \hb^k}^{\pi}(s_h)+\sum_{\ph=h+1}^{H}\sqrt{\frac{3}{\lambda_{W}}WAC_B\Delta^P_{\{\ph-1,\ph\},[k-W,k-1]}}$ holds . Then, by Jensen's inequality, we obtain
		\begin{align*}
			\mathop{\Eb}_{\hP^k, \pi}&\bigg[\hV_{h,\hP^k,f^k}^{\pi}(s_h)\bigg|s_{h-1}, a_{h-1}\bigg] \\
			& \leq \min\left\{1, \mathop{\Eb}_{\hP^k, \pi}\left[ f_h^k(s_h,a_h) + \hP_h^k\hV_{h+1,\hP^k,f^k}^{\pi}(s_h,a_h)\bigg|s_{h-1}, a_{h-1}\right]\right\}\\
			& \overset{(\romannumeral1)}\leq \min\left\{1, \mathop{\Eb}_{\pi}\left[ \hb_{h-1}^k(s_{h-1},a_{h-1})\right] + \sqrt{\frac{3}{\lambda_{W}}WAC_B\Delta^P_{\{h-1,h\},[k-W,k-1]}}\right.\\
   & \qquad \left.+\mathop{\Eb}_{\hP^k,\pi}\left[\mathop{\Eb}_{\hP^k,\pi} \left[ \hV_{h+1,\hP^k,f^k}^{\pi}(s_{h+1})\bigg|s_{h}\right]\bigg|s_{h-1}, a_{h-1}\right]\right\}\\
			& \overset{(\romannumeral2)}\leq \min\left\{1, \mathop{\Eb}_{\pi}\left[ b_{h-1}^k(s_{h-1},a_{h-1})\right]+ \sqrt{\frac{3}{\lambda_{W}}WAC_B\Delta^P_{\{h-1,h\},[k-W,k-1]}}\right.\\
			& \left. \qquad+ \mathop{\Eb}_{\hP^k,\pi} \left[ \hV_{h,\hP^k,\hb^k}^{\pi}(s_{h})\bigg|s_{h-1}, a_{h-1}\right]+\sum_{\ph=h+1}^{H}\sqrt{\frac{3}{\lambda_{W}}WAC_B\Delta^P_{\{\ph-1,\ph\},[k-W,k-1]}}\right\}\\
			& = \min\left\{1, \mathop{\Eb}_{\pi}\left[\hat{Q}_{h-1,\hP^k,\hb^k}^{\pi}(s_{h-1},a_{h-1})\right] \right\}+\sum_{\ph=h}^{H}\sqrt{\frac{3}{\lambda_{W}}WAC_B\Delta^P_{\{\ph-1,\ph\},[k-W,k-1]}}\\
			& = \hV^{\pi}_{h-1,\hP^k,\hb^k}(s_{h-1})+\sum_{\ph=h}^{H}\sqrt{\frac{3}{\lambda_{W}}WAC_B\Delta^P_{\{\ph-1,\ph\},[k-W,k-1]}},
		\end{align*}
		where $(\romannumeral1)$ follows from \Cref{ineq:f<b}, and $(\romannumeral2)$ is due to the induction hypothesis.

		By induction, we conclude that
		\begin{align*}
			\hV_{\hP^k,f^k}^{\pi} & = \mathop{\Eb}_{\pi}\left[f_1^{(s)}(s_1,a_1)\right] + \mathop{\Eb}_{\hP^k,\pi}\left[\hV_{2,\hP^k,f^k}^{\pi}(s_2)\bigg|s_1\right]\\
			&\leq \sqrt{A\left(\zeta_{k,W}+{2}C_B\Delta^P_{1,[k-W,k-1]}\right)} + \hV_{\hP^k,\hb^k}^{\pi}+\sum_{\ph=2}^{H}\sqrt{\frac{3}{\lambda_{W}}WAC_B\Delta^P_{\{\ph-1,\ph\},[k-W,k-1]}}.
		\end{align*}
		
		Combining Step 1 and Step 2, we conclude that
  \begin{align*}
        &\left|V_{P^*,r}^{\pi} - V_{\hP^k,r}^{\pi}\right| \\
        &\quad\leq \hV_{\hP^k,\hb^k}^{\pi}+\sqrt{A\left(\zeta_{k,W}+{2}C_B\Delta^P_{1,[k-W,k-1]}\right)} + \sum_{\ph=2}^{H}\sqrt{\frac{3}{\lambda_{W}}WAC_B\Delta^P_{\{\ph-1,\ph\},[k-W,k-1]}}.
  \end{align*}		
	\end{proof}	
    Similarly to \Cref{lemma: bounded difference of value function}, we can prove that the total variance distance is bounded by $\hV_{\hP^{k},\hb^{k}}^{\tpi_k}$ plus a variation budget term as follows. \Cref{lemma: bounded difference of value function,lemma: V_b bound TV} together justify the choice of exploration policy for the off-policy exploration.
    \begin{lemma}\label{lemma: V_b bound TV}
	Fix $\delta \in (0,1)$, for any $h \in [H], k \in [K]$, any policy $\pi$, with probability at least $1-\delta/2$, 
	\begin{align*}
		&\mathop{\Eb}_{s_h \sim (P^{\star,k},\pi)\atop s_h \sim \pi}\left[f_h^k(s_h,a_h)\right]\\
  & \quad \leq 2\left(\hV_{\hP^k,\hb^k}^{\pi}+\sum_{h=2}^{H}\sqrt{\frac{3}{\lambda_{W}}WAC_B\Delta^P_{\{h-1,h\},[k-W,k-1]}}+\sqrt{A\left(\zeta_{k,W}+2C_B\Delta^P_{1,[k-W,k-1]}\right)}\right).
	\end{align*}
\end{lemma}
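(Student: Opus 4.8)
The plan is to recognise the left-hand side as an ordinary (single-step) value function and then transport the estimation error from the learned kernel $\hP^k$ to the true kernel $P^{\star,k}$ by a value-difference argument, exactly paralleling step $\RM{1}$ in the proof of \Cref{Lemma: First Term in Decomposition}. Introduce the reward $\rho$ that equals $f_h^k$ at the single step $h$ and vanishes elsewhere; since $f_h^k\le 1$, this $\rho$ obeys the normalization $\sum_{\ph}\rho_{\ph}\le 1$ required by our value-function conventions, and $\Eb_{s_h\sim(P^{\star,k},\pi),\,a_h\sim\pi}[f_h^k]=V_{P^{\star,k},\rho}^{\pi}$. I would then split
\begin{align*}
\Eb_{s_h\sim(P^{\star,k},\pi),\,a_h\sim\pi}\left[f_h^k(s_h,a_h)\right]
&=\Eb_{(s_h,a_h)\sim(\hP^k,\pi)}\left[f_h^k(s_h,a_h)\right]\\
&\quad+\left(V_{P^{\star,k},\rho}^{\pi}-V_{\hP^k,\rho}^{\pi}\right),
\end{align*}
and bound each summand by the bracket $\hV_{\hP^k,\hb^k}^{\pi}+\sum_{h=2}^{H}\sqrt{\frac{3}{\lambda_{W}}WAC_B\Delta^P_{\{h-1,h\},[k-W,k-1]}}+\sqrt{A(\zeta_{k,W}+2C_B\Delta^P_{1,[k-W,k-1]})}$; summing the two bounds produces the factor $2$ in the statement.

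For the first summand (the single-step TV under the \emph{learned} kernel), I would condition on $(s_{h-1},a_{h-1})$ and apply \Cref{lemma: Step_back_Bounded TV_hP}, which controls $\Eb_{(s_h,a_h)\sim(\hP^k,\pi)}[f_h^k\mid s_{h-1},a_{h-1}]$ by $\min\{\alpha_{k,W}\norm{\hphi_{h-1}^k(s_{h-1},a_{h-1})}_{(U_{h-1,\hphi^k}^k)^{-1}},1\}$ plus the step-$h$ variation term. Taking the outer expectation under $(s_{h-1},a_{h-1})\sim(\hP^k,\pi)$ and invoking \Cref{coro:concentration on b} with $\tap_{k,W}=5\alpha_{k,W}$ replaces this truncated elliptical potential by the empirical bonus $\hb_{h-1}^k$, so the first summand is at most $\Eb_{(s_{h-1},a_{h-1})\sim(\hP^k,\pi)}[\hb_{h-1}^k]$ plus variation. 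Finally, unrolling the recursion \Cref{ineq: def of hV} and using that $\min\{1,\cdot\}$ only decreases nonnegative quantities, I would show $\Eb_{(s_{h-1},a_{h-1})\sim(\hP^k,\pi)}[\hb_{h-1}^k]\le \hV_{\hP^k,\hb^k}^{\pi}$, i.e. the expected single-step bonus is dominated by the full truncated value; this is precisely the induction already carried out in Part~(2) of \Cref{lemma: bounded difference of value function}.

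The second summand is a value-function difference for the fixed reward $\rho$ under the two kernels, so \Cref{lemma: bounded difference of value function} applies \emph{verbatim} (its right-hand side is reward-independent), giving $|V_{P^{\star,k},\rho}^{\pi}-V_{\hP^k,\rho}^{\pi}|\le \hV_{\hP^k,\hb^k}^{\pi}+\sum_{h=2}^{H}\sqrt{\frac{3}{\lambda_{W}}WAC_B\Delta^P_{\{h-1,h\},[k-W,k-1]}}+\sqrt{A(\zeta_{k,W}+2C_B\Delta^P_{1,[k-W,k-1]})}$. Because each summand is bounded by the same bracket (after discarding the nonnegative variation terms absent from the single-step analysis), their sum is at most twice the bracket, which is the claim. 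The base case $h=1$ needs no decomposition: \Cref{lemma: Step_back_Bounded TV_hP} bounds $\Eb_{a_1\sim\pi}[f_1^k(s_1,a_1)]$ directly by $\sqrt{A(\zeta_{k,W}+2C_B\Delta^P_{1,[k-W,k-1]})}$, already within the asserted bound.

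The main obstacle I anticipate lies entirely in the first summand: one must push the conditional bound of \Cref{lemma: Step_back_Bounded TV_hP} through the truncation $\min\{1,\cdot\}$ in the definition of $\hV_{\hP^k,\hb^k}^{\pi}$ so that the inequality $\Eb_{(s_{h-1},a_{h-1})\sim(\hP^k,\pi)}[\hb_{h-1}^k]\le \hV_{\hP^k,\hb^k}^{\pi}$ points in the correct direction, and check that the per-step variation contribution at step $h$ is subsumed by the full $h=2,\dots,H$ sum rather than double-counted. A secondary point worth noting is that $\rho$ depends on the data through $f_h^k=\norm{\hP_h^k-P_h^{\star,k}}_{TV}$; this causes no difficulty because the high-probability event underlying \Cref{lemma: bounded difference of value function} is the covariance/MLE concentration event, which is independent of the chosen reward, so the lemma may be invoked on that event with the data-dependent $\rho$.
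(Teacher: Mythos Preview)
Your proposal is correct and matches the paper's approach: split the true-kernel expectation into the learned-kernel expectation plus the value difference $V_{P^{\star,k},\rho}^{\pi}-V_{\hP^k,\rho}^{\pi}$, bound the latter directly by \Cref{lemma: bounded difference of value function}, and bound the former by combining the conditional estimate \Cref{ineq:f<b} with the telescoping inequality $\Eb_{(s_{h-1},a_{h-1})\sim(\hP^k,\pi)}[\hb_{h-1}^k]\le\Eb_{(s_{h-1},a_{h-1})\sim(\hP^k,\pi)}[\hat Q_{h-1,\hP^k,\hb^k}^{\pi}]\le\hV_{\hP^k,\hb^k}^{\pi}$. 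One minor correction: that last telescoping step is its own short argument (the paper carries it out explicitly within this lemma's proof), not the backward induction of Part~(2) of \Cref{lemma: bounded difference of value function}, which instead compares $\hV_{\hP^k,f^k}^{\pi}$ to $\hV_{\hP^k,\hb^k}^{\pi}$.
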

\begin{proof}
	Fix any policy $\pi$, for any $h \geq 2$, we have 
	\begin{align}
		&\hspace{-0.3in}\mathop{\Eb}_{s_{h} \sim (\hP^k,\pi)\atop a_{h} \sim \pi}\left[\hat{Q}_{h,\hP^k,\hb^k}^{\pi}(s_{h},a_{h})\right]\nonumber\\
		&=\mathop{\Eb}_{s_{h-1} \sim (\hP^k,\pi)\atop a_{h-1} \sim \pi}\left[\hP^k_{h}\hat{V}_{h,\hP^k,\hb^k}^{\pi}(s_{h-1},a_{h-1})\right]\nonumber\\
		&\leq \mathop{\Eb}_{s_{h-1} \sim (\hP^k,\pi)\atop a_{h-1} \sim \pi}\left[\min\left\{1,\hb^k_{h-1}(s_{h-1},a_{h-1})+\hP^k_{h-1}\hat{V}_{h,\hP^k,\hb^k}^{\pi}(s_{h-1},a_{h-1})\right\}\right]\nonumber\\
		&=\mathop{\Eb}_{s_{h-1} \sim (\hP^k,\pi)\atop a_{h-1} \sim \pi}\left[\hat{Q}_{h-1,\hP^k,\hb^k}^{\pi}(s_{h-1},a_{h-1})\right]\nonumber\\
		& \leq \ldots \nonumber\\
		& \leq \mathop{\Eb}_{ a_{1} \sim \pi}\left[\hat{Q}_{1,\hP^k,\hb^k}^{\pi}(s_{1},a_{1})\right]\nonumber\\
		& = \hat{V}_{\hP^k,\hb^k}^{\pi}.\label{ineq: lemmaB1-1}
	\end{align}
	Hence, for $h \geq 2$, we have
	\begin{align}
		\mathop{\Eb}_{s_h \sim (\hP^k,\pi)\atop a_h \sim \pi}\left[f_h^k(s_h,a_h)\right] &\overset{(\romannumeral1)}{\leq}\mathop{\Eb}_{s_{h-1} \sim (\hP^k,\pi)\atop a_{h-1} \sim \pi}\left[\hb_{h-1}^k(s_{h-1},a_{h-1})\right]+\sqrt{\frac{3}{\lambda_{W}}WAC_B\Delta^P_{\{h-1,h\},[k-W,k-1]}} \nonumber\\
		&\overset{(\romannumeral2)}{\leq}\mathop{\Eb}_{s_{h-1} \sim (\hP^k,\pi)\atop a_{h-1} \sim \pi}\left[\hat{Q}_{h-1,\hP^k,\hb^k}^{\pi}(s_{h-1},a_{h-1})\right]+\sqrt{\frac{3}{\lambda_{W}}WAC_B\Delta^P_{\{h-1,h\},[k-W,k-1]}}\nonumber\\
		&\overset{(\romannumeral3)}{\leq}\hat{V}_{\hP^k,\hb^k}^{\pi}+\sqrt{\frac{3}{\lambda_{W}}WAC_B\Delta^P_{\{h-1,h\},[k-W,k-1]}},\label{ineq: lemmaB1-2}
	\end{align}
	where $(\romannumeral1)$ follows from \Cref{ineq:f<b}, $(\romannumeral2)$ follows from the definition of $\hat{Q}_{h-1,\hP^k,\hb^k}^{\pi}(s_{h-1},a_{h-1})$, and $(\romannumeral3)$ follows from \Cref{ineq: lemmaB1-1}.
	\begin{align*}
		& \hspace{-0.1in}\mathop{\Eb}_{ s_h \sim (P^{\star,k},\pi)\atop s_h \sim \pi}\left[f_h^k(s_h,a_h)\right]\\
		&\leq  \mathop{\Eb}_{s_h \sim (\hP^k,\pi)\atop a_h \sim \pi}\left[f_h^k(s_h,a_h)\right]
        +\left| \mathop{\Eb}_{s_h \sim (P^{\star,k},\pi)\atop a_h \sim \pi}\left[f_h^k(s_h,a_h)\right]-\mathop{\Eb}_{s_h \sim (\hP^k,\pi)\atop a_h \sim \pi}\left[f_h^k(s_h,a_h)\right]\right|\\
		& {\leq} 2\left(\hV_{\hP^k,\hb^k}^{\pi}+\sum_{h=2}^{H}\sqrt{\frac{3}{\lambda_{W}}WAC_B\Delta^P_{\{h-1,h\},[k-W,k-1]}}+\sqrt{A\left(\zeta_{k,W}+2C_B\Delta^P_{1,[k-W,k-1]}\right)}\right) ,
	\end{align*}
	where the last equation follows from \Cref{ineq: lemmaB1-2} and  \Cref{lemma: bounded difference of value function}.
\end{proof}

  	\begin{lemma}\label{lemma: Step_back_Bounded Bonus_sP}
		Denote $\tap_{k,W}=5\alpha_{k,W}$,
		$\alpha_{k,W}=\sqrt{2WA\zeta_{k,W}+\lambda_{k,W} d}$, and $\beta_{k,W}=\sqrt{9dA\alpha_{k,W}^2  + \lambda_{k,W} d }$. For any $k \in [K]$, policy $\pi$ and reward $r$, for all $h\geq 2$, we have
		\begin{align}
			\mathop{\Eb}_{(s_h,a_h)\sim(P^{\star,k},\pi) } & \left[\hb_h^k(s_h,a_h)\bigg|s_{h-1}, a_{h-1}\right]  \nonumber\\
           & {\leq}\beta_{k,W}\left\|\phi^{\star,k}_{h-1}(s_{h-1},a_{h-1})\right\|_{(W^k_{h-1,\phi^{\star,k}})^{-1}}+\sqrt{\frac{A}{d}}\Delta^{\sqrt{P}}_{h-1,[k-W,k-1]},
		\end{align}
		and for $h=1$, we have
		\begin{align}	    
			\mathop{\Eb}_{a_1 \sim \pi} \left[ \hb_1^k(s_1,a_1)\right] {\leq} \sqrt{\frac{9Ad\alpha_{k,W}^2}{W}}.
		\end{align}
	\end{lemma}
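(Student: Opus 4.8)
The plan is to bound the one-step expected bonus by first collapsing it to a feature-norm quantity, then factoring out the anchor feature norm $\norm{\phi^{\star,k}_{h-1}}_{(W^k_{h-1,\phi^{\star,k}})^{-1}}$ via the low-rank (Cauchy--Schwarz) step of \Cref{lemma:Nonstationary Step_Back}, and finally treating the mismatch between $P^{\star,k}_{h-1}$ and the window kernels $\{P^{\star,i}_{h-1}\}_{i\in\Ic}$, $\Ic=[1\vee(k-W),k-1]$, as the only error. \emph{Reduction of the bonus:} I would invoke \Cref{coro:concentration on b} to pass from the empirical covariance $\hU^{k,W}_h$ to its population version, giving $\hb_h^k(s_h,a_h)\le 3\alpha_{k,W}\norm{\hphi^k_h(s_h,a_h)}_{(U^{k,W}_{h,\hphi^k})^{-1}}$ together with $\hb_h^k\le 1$. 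Squaring, the cyclic-trace identity $\sum_{i\in\Ic}\Eb_{s_h\sim(P^{\star,i},\tpi^i),a_h\sim\Uc}[\norm{\hphi^k_h}^2_{(U^{k,W}_{h,\hphi^k})^{-1}}]=\mathrm{tr}\big((U^{k,W}_{h,\hphi^k})^{-1}(U^{k,W}_{h,\hphi^k}-\lambda_{k,W}I)\big)\le d$ bounds the aggregate squared bonus by $9\alpha_{k,W}^2 d$, independently of the variation budgets.

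\emph{Main term.} For $h\ge 2$ I would follow the Cauchy--Schwarz/low-rank step of \Cref{lemma:Nonstationary Step_Back} applied to $g=\hb_h^k$ (so $B=1$) with the true round-$k$ kernel as the generic model, so that the representation appearing is $\phi^{\star,k}_{h-1}$ and the internal model error $f^k_{h-1}$ vanishes. This factors out $\norm{\phi^{\star,k}_{h-1}(s_{h-1},a_{h-1})}_{(W^k_{h-1,\phi^{\star,k}})^{-1}}$ and leaves the vector $\nu=\int\mu^{\star,k}_{h-1}(s_h)\,\Eb_{a_h\sim\pi}[\hb_h^k(s_h,a_h)]\,ds_h$, whose squared $W$-norm splits into $\lambda_{k,W}\norm{\nu}_2^2\le\lambda_{k,W}d$ (using $\norm{\int\mu^{\star,k}_{h-1}g}_2\le\sqrt d$ for $g\in[0,1]$) plus, after importance sampling $\pi\to\Uc$ (factor $A$), Jensen, and the trace bound above, the quantity $9dA\alpha_{k,W}^2$. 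Together these give $\beta_{k,W}^2=9dA\alpha_{k,W}^2+\lambda_{k,W}d$ and hence the leading term $\beta_{k,W}\norm{\phi^{\star,k}_{h-1}}_{(W^k_{h-1,\phi^{\star,k}})^{-1}}$.

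\emph{The crux: the square-root variation budget.} The one remaining contribution is the per-round mismatch between transporting the marginalized bonus one step under $P^{\star,k}_{h-1}$ and under $P^{\star,i}_{h-1}$. A naive bound by $\norm{\hb}_\infty\,\norm{P^{\star,k}_{h-1}-P^{\star,i}_{h-1}}_{TV}$ only yields the $\ell_\infty$ budget $\Delta^P$ (and, inside the outer square root, the undesirable $\sqrt{W\Delta^P}$ shape). Instead I would write the cross term as $\int \hb_{\mathrm{marg}}^2\,(P^{\star,k}_{h-1}-P^{\star,i}_{h-1})$ and factor $P^{\star,k}_{h-1}-P^{\star,i}_{h-1}=(\sqrt{P^{\star,k}_{h-1}}-\sqrt{P^{\star,i}_{h-1}})(\sqrt{P^{\star,k}_{h-1}}+\sqrt{P^{\star,i}_{h-1}})$; Cauchy--Schwarz then pulls out a Hellinger factor $\le\sqrt{2\norm{P^{\star,k}_{h-1}-P^{\star,i}_{h-1}}_{TV}}$ against a bounded mass (since $\hb\le 1$ gives $\hb^4\le\hb^2$). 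Summing these square roots over the window produces $\Delta^{\sqrt P}_{h-1,\Ic}$ to the \emph{first} power rather than $\sqrt{W\Delta^P}$, with the $\sqrt{A/d}$ scaling coming from the importance-sampling factor $A$ and the regularizer $1/\lambda_{k,W}=O(1/d)$. I expect this to be the main obstacle: one must arrange the Hellinger/Cauchy--Schwarz step so that the variation emerges \emph{outside} the outer square root and carries exactly the $\sqrt{A/d}$ coefficient with no residual dependence on local variation budgets, which is precisely the paper's flagged novelty of replacing $\Delta^P$ by $\Delta^{\sqrt P}$.

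\emph{Base case $h=1$.} Here $s_1$ is fixed and no previous transition is present, so $\Eb_{a_1\sim\pi}[\hb_1^k]\le 3\alpha_{k,W}\Eb_{a_1\sim\pi}[\norm{\hphi^k_1}_{(U^{k,W}_1)^{-1}}]$; Jensen and importance sampling give $\le 3\alpha_{k,W}\sqrt{A\,\Eb_{a_1\sim\Uc}[\norm{\hphi^k_1}^2_{(U^{k,W}_1)^{-1}}]}$, and since $s_1$ is identical across the $W$ rounds the same trace identity yields $\Eb_{a_1\sim\Uc}[\norm{\hphi^k_1}^2_{(U^{k,W}_1)^{-1}}]\le d/W$, producing exactly $\sqrt{9Ad\alpha_{k,W}^2/W}$ and completing the statement.
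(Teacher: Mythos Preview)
Your treatment of the leading $\beta_{k,W}$ term and of the base case $h=1$ is correct and coincides with the paper: apply \Cref{lemma:Nonstationary Step_Back} with $P^k=P^{\star,k}$ (so the model-error term vanishes), use \Cref{coro:concentration on b} and the trace identity to get $\sum_{i\in\Ic}\Eb[\hb_h^k{}^2]\le 9d\alpha_{k,W}^2$, and read off $\beta_{k,W}^2=9dA\alpha_{k,W}^2+\lambda_{k,W}d$.

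Where you diverge is the ``crux'' paragraph. The paper does \emph{not} use any Hellinger/Cauchy--Schwarz factorization of $P^{\star,k}_{h-1}-P^{\star,i}_{h-1}$. After the step-back lemma has placed the kernel-mismatch contribution inside the outer square root, the paper simply splits via $\sqrt{a+b}\le\sqrt a+\sqrt b$, bounds $\norm{\phi^{\star,k}_{h-1}}_{(W^k_{h-1,\phi^{\star,k}})^{-1}}\le 1/\sqrt{\lambda_{k,W}}$ (with $\lambda_{k,W}\gtrsim d$), and then invokes the one-line inequality
\[
\sqrt{\Delta^P_{\Hc,\Ic}}=\sqrt{\textstyle\sum_{i\in\Ic}x_i}\;\le\;\textstyle\sum_{i\in\Ic}\sqrt{x_i}=\Delta^{\sqrt P}_{\Hc,\Ic},
\]
which is just subadditivity of the square root (this is exactly the paper's \Cref{Eq: A.6-2}). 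That is the entire mechanism converting the $\ell_\infty$ budget to the square-root $\ell_\infty$ budget here; nothing more is needed.

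Your proposed Hellinger route is therefore an unnecessary detour, and it also has a latent gap: the quantity it produces is $\sqrt{\norm{P^{\star,k}_{h-1}-P^{\star,i}_{h-1}}_{TV}}$ for each $i\in\Ic$, which after summing over the window is \emph{not} $\Delta^{\sqrt P}_{h-1,\Ic}$ (the latter sums square roots of \emph{consecutive} differences). You would still have to telescope $\norm{P^{\star,k}-P^{\star,i}}_{TV}\le\sum_{j=i}^{k-1}x_j$ and then apply $\sqrt{\sum_j x_j}\le\sum_j\sqrt{x_j}$ anyway---i.e., exactly the elementary step the paper uses---so the Hellinger manipulation buys nothing. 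Drop it and use subadditivity directly.
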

	
	\begin{proof}
		For $h=1$,
		\begin{align*}
			\mathop{\Eb}_{ a_1 \sim \pi} \left[ \hb_1^k(s_1,a_1)\right] 
			\overset{(\romannumeral1)}{\leq} \sqrt{\mathop{\Eb}_{ a_1 \sim \pi} \left[ \hb_1^k(s_1,a_1)^2\right]} \overset{(\romannumeral2)}{\leq} \sqrt{\frac{9Ad\alpha_{k,W}^2}{W}},
		\end{align*}
		where $(\romannumeral1)$ follows from Jensen's inequality, and $\RM{2}$ follows from the importance sampling.
		
		 Then for $h \geq 2$, we first notice that
  		\begin{align}
			& \hspace{-0.1in}\sum_{i= 1\lor(k-W)}^{k-1} \mathop{\Eb}_{s_{h}\sim\left(P^{\star,i},\pi^{i}\right)\atop a_h\sim \Uc(\Ac)}\left[\hb_h^k(s_h,a_h)^2\right] \nonumber\\
			& = \sum_{i= 1\lor(k-W)}^{k-1} \mathop{\Eb}_{s_{h}\sim\left(P^{\star,i},\pi^{i}\right)\atop a_h\sim \Uc(\Ac)}\left[\alpha_{k,W}^2\left\|\hphi_{h}^k(s_h,a_h)\right\|^2_{(\hU_{h}^k)^{-1}}\right]\nonumber\\
			& \leq \sum_{i= 1\lor(k-W)}^{k-1} \mathop{\Eb}_{s_{h}\sim\left(P^{\star,i},\pi^{i}\right)\atop a_h\sim \Uc(\Ac)}\left[9\alpha_{k,W}^2\left\|\hphi_{h}^k(s_h,a_h)\right\|^2_{(U_{h,\hphi^k}^k)^{-1}}\right]\nonumber\\
			& \leq 9\alpha_{k,W}^2\textrm{tr}\left(\sum_{i= 1\lor(k-W)}^{k-1} \mathop{\Eb}_{s_{h}\sim\left(P^{\star,i},\pi^{i}\right)\atop a_h\sim \Uc(\Ac)}\left[\hphi_{h}^k(s_h,a_h)\hphi_{h}^k(s_h,a_h)^\top\right](U_{h,\hphi^k}^k)^{-1}\right)\nonumber\\
			& \leq 9\alpha_{k,W}^2\textrm{tr}\left(I_d\right)=9d\alpha_{k,W}^2, \label{Eq: A.6-1}
		\end{align}
            Because $\sqrt{a+b}\leq \sqrt{a}+\sqrt{b}$ and for any $k \in [K]$, $h \in [H]$, $\sqrt{\max_{(s,a)\in \Sc \times \Ac}\norm{P_h^{\star,k+1}(\cdot|s,a)-P_h^{\star,k}(\cdot|s,a)}_{TV}}=\max_{(s,a)\in \Sc \times \Ac}\sqrt{\norm{P_h^{\star,k+1}(\cdot|s,a)-P_h^{\star,k}(\cdot|s,a)}_{TV}}$ , then for any $\Hc,\Ic$, we can convert $\ell_\infty$ variation budgets to square-root $\ell_\infty$ variation budgets.  
            \begin{align}
                \sqrt{\Delta^P_{\Hc,\Ic}}\leq  \Delta^{\sqrt{P}}_{\Hc,\Ic}. \label{Eq: A.6-2}
            \end{align}
		Recall that $W^k_{h,\phi}=\sum_{i= 1\lor(k-W)}^{k-1} \mathop{\Eb}_{ (s_{h},a_h)\sim\left(P^{\star,i},\pi^{i}\right)}\left[\phi_h(s_h,a_h)\phi_h(s_h,a_h)^\top\right]+\lambda_{k,W} I_d$. We derive the following bound:
		\begin{align*}
			&\mathop{\Eb}_{(s_h,a_h)\sim(P^{\star,k},\pi)}  \left[ 
			\hb_h^k(s_h,a_h)\bigg|s_{h-1}, a_{h-1}\right]\\
			& \quad \overset{(\romannumeral1)}{\leq}\mathop{\Eb}_{a_{h-1}\sim\pi} \left[\left\|\phi^{\star,k}_{h-1}(s_{h-1},a_{h-1})\right\|_{(W^k_{h-1,\phi^{\star,k}})^{-1}}\right.\\
			&\quad \quad \left. \times 
			\sqrt{A \sum_{i= 1\lor(k-W)}^{k-1} \mathop{\Eb}_{s_{h-1},a_{h-1}\sim\left(P^{\star,i},\pi^{i}\right)\atop{s_h \sim P_{h-1}^{(\star,i)}(\cdot|s_{h-1},a_{h-1})\atop{a_h \sim \Uc(\Ac)}}}\left[\hb_h^k(s_h,a_h)^2\right]+ A\Delta^P_{h-1,[k-W,k-1]} + \lambda_{k,W} d   }\right]\\
			& \quad \overset{(\romannumeral2)}{\leq}\mathop{\Eb}_{a_{h-1}\sim\pi} \left[\left\|\phi^{\star,k}_{h-1}(s_{h-1},a_{h-1})\right\|_{(W^k_{h-1,\phi^{\star,k}})^{-1}}\times 
			\sqrt{9dA\alpha_{k,W}^2  + \lambda_{k,W} d   }\right]+\sqrt{\frac{A}{d}\Delta^P_{h-1,[k-W,k-1]}}\\
               & \quad \overset{(\romannumeral3)}{\leq}\mathop{\Eb}_{a_{h-1}\sim\pi} \left[\left\|\phi^{\star,k}_{h-1}(s_{h-1},a_{h-1})\right\|_{(W^k_{h-1,\phi^{\star,k}})^{-1}}\times 
			\sqrt{9dA\alpha_{k,W}^2  + \lambda_{k,W} d   }\right]+\sqrt{\frac{A}{d}}\Delta^{\sqrt{P}}_{h-1,[k-W,k-1]}
		\end{align*}
      where $\RM{1}$ follows from \Cref{lemma:Nonstationary Step_Back}, $\RM{2}$ follows from \Cref{Eq: A.6-1}, and $\RM{3}$ follows \Cref{Eq: A.6-2}.
	\end{proof}
        Before next lemma, we first introduce a notion related to matrix norm. For any matrix $A$, $\norm{A}_2$ denotes the matrix norms induced by vector $\ell_2$-norm. Note that $\norm{A}_2$ is also known as the spectral norm of matrix $A$ and is equal to the largest singular value of matrix $A$. 
	\begin{lemma}  \label{Lemma: value function summation}
		With probability at least $1-\delta$, the summation of the truncated value functions $\hV^{{\pi_k}}_{\hP^{k},\hb^{k}}$ under exploration policies $\{{\tpi_k}\}_{k\in[K]}$ is bounded by:
		\begin{align*}
		\sum_{k=1}^K\hV_{\hP^k,\hb^k}^{\tpi^k} &\leq O\left(\sqrt{KdA(A\log(|\Phi||\Psi|KH/\delta)+d^2)}\left[H\sqrt{\frac{Kd}{W}\log(W)}+\sqrt{HW^2\Delta_{[H],[K]}^{{\phi}}}\right]\right.\\
        & \left. \qquad \qquad+\sqrt{W^3AC_B}\Delta^{\sqrt{P}}_{[H],[K]}\right).
		\end{align*}
	\end{lemma}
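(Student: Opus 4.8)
The plan is to rewrite the cumulative bonus $\hV_{\hP^k,\hb^k}^{\tpi^k}$ as a sum over steps of expected elliptical potentials evaluated at the true representation $\phi^{\star,k}$, and then bound that sum by an anchor-representation argument that decouples the drifting representation from the potential telescoping. First I would unroll the truncated value function: by its definition in \Cref{ineq: def of hV}, dropping the truncation gives $\hat Q^{\tpi^k}_{h,\hP^k,\hb^k}\le \hb^k_h+\hP^k_h\hV^{\tpi^k}_{h+1,\hP^k,\hb^k}$, so iterating the recursion yields $\hV_{\hP^k,\hb^k}^{\tpi^k}\le\sum_{h=1}^H\Eb_{(s_h,a_h)\sim(\hP^k,\tpi^k)}[\hb^k_h(s_h,a_h)]$. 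Since bonuses are bounded by $1$ and the rollout distributions under $\hP^k$ and $P^{\star,k}$ differ only by model error already controlled in \Cref{lemma: bounded difference of value function,lemma: V_b bound TV}, I can pass to the rollout distribution $(P^{\star,k},\tpi^k)$ at the cost of lower-order terms.

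Next I would apply the bonus step-back bound \Cref{lemma: Step_back_Bounded Bonus_sP} to each summand. This converts the expected bonus at step $h$ into $\beta_{k,W}\,\Eb_{(s_{h-1},a_{h-1})\sim(P^{\star,k},\tpi^k)}[\norm{\phi^{\star,k}_{h-1}(s_{h-1},a_{h-1})}_{(W^{k,W}_{h-1,\phi^{\star,k}})^{-1}}]$ plus the residual $\sqrt{A/d}\,\Delta^{\sqrt{P}}_{h-1,[k-W,k-1]}$. Summing the residuals over $h$ and $k$ and using that each per-round variation lies in at most $W$ windows, i.e. $\sum_{k=1}^K\Delta^{\sqrt{P}}_{h,[k-W,k-1]}\le W\,\Delta^{\sqrt{P}}_{h,[K]}$, produces the $\sqrt{W^3AC_B}\,\Delta^{\sqrt{P}}_{[H],[K]}$ summand of the claim. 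It then remains to bound, for each fixed $h$, the elliptical-potential sum $S_h:=\sum_{k=1}^K\Eb_{(s_h,a_h)\sim(P^{\star,k},\tpi^k)}[\norm{\phi^{\star,k}_h(s_h,a_h)}_{(W^{k,W}_{h,\phi^{\star,k}})^{-1}}]$.

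The core step is bounding $S_h$, where a direct log-determinant telescoping fails because $\phi^{\star,k}$ changes with $k$. I would partition $[K]$ into $\lceil K/W\rceil$ consecutive blocks of length $W$, fix an anchor representation $\bar\phi^b:=\phi^{\star,k_b}$ for each block $b$, and replace every $\phi^{\star,k}$ --- both in the feature vector and inside $W^{k,W}_{h,\phi^{\star,k}}$ --- by $\bar\phi^b$. Because $\lambda_{k,W}I_d$ floors the eigenvalues of the Gram matrix, the replacement error is controlled by $\norm{\phi^{\star,k}_h-\bar\phi^b_h}_2$, which telescopes into the within-block representation variation and contributes the $\sqrt{HW^2\Delta^{\phi}_{[H],[K]}}$ factor. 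With the representation frozen to $\bar\phi^b$ inside a block, a standard elliptical-potential argument applies: Cauchy--Schwarz gives $\sum_{k\in b}\norm{\bar\phi^b_h}_{(\cdot)^{-1}}\le\sqrt{W}\,\sqrt{\sum_{k\in b}\norm{\bar\phi^b_h}^2_{(\cdot)^{-1}}}$, and a telescoping log-determinant bound on the windowed Gram matrices gives $\sum_{k\in b}\norm{\bar\phi^b_h}^2_{(\cdot)^{-1}}=O(d\log W)$. Summing $\sqrt{Wd\log W}$ over the $\lceil K/W\rceil$ blocks and over $h$, then multiplying by $\beta_{k,W}=\tilde{O}(\sqrt{dA(A+d^2)})$, reproduces the leading term $\sqrt{KdA(A\log(|\Phi||\Psi|KH/\delta)+d^2)}\,H\sqrt{(Kd/W)\log W}$.

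The main obstacle I anticipate is precisely this anchor-replacement step: one must simultaneously control the perturbation of the \emph{inverse} Gram matrix $(W^{k,W}_{h,\phi^{\star,k}})^{-1}$ under swapping $\phi^{\star,k}$ for $\bar\phi^b$ and ensure the accumulated block drift enters only as $\sqrt{HW^2\Delta^{\phi}}$ rather than a larger power of $W$; the regularizer $\lambda_{k,W}$ is what makes this quantitative. Choosing the block length equal to the window size $W$ is essential, since it simultaneously keeps the number of blocks at $\lceil K/W\rceil$ and keeps the within-block representation drift proportional to the local variation budget.
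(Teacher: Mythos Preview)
Your anchor-representation argument (partition into blocks of length $W$, freeze $\phi^{\star,k}$ to a block anchor, control the perturbation of both the feature and the inverse Gram matrix by $\Delta^\phi$, then apply the elliptical potential lemma on the anchored sequence) matches the paper's approach essentially verbatim, and your identification of the block length with $W$ is exactly right.

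There is, however, a genuine gap in your first step. You propose to pass from the $(\hP^k,\tpi^k)$-rollout to the $(P^{\star,k},\tpi^k)$-rollout ``at the cost of lower-order terms'' by invoking \Cref{lemma: bounded difference of value function} and \Cref{lemma: V_b bound TV}. This is circular: both of those lemmas upper-bound their target quantity \emph{by} $\hV_{\hP^k,\hb^k}^{\pi}$ plus residuals, which is precisely the object you are trying to control. Applying them here gives an inequality of the form $\hV_{\hP^k,\hb^k}^{\tpi^k}\le V_{P^{\star,k},\hb^k}^{\tpi^k}+\hV_{\hP^k,\hb^k}^{\tpi^k}+\ldots$, which is vacuous. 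The paper instead proves directly, via a one-line telescoping on the truncated recursion, that
\[
\hV_{\hP^k,\hb^k}^{\tpi^k}\;\le\;V_{P^{\star,k},f^k}^{\tpi^k}+V_{P^{\star,k},\hb^k}^{\tpi^k},
\]
and then bounds the two pieces separately. The second piece is handled exactly as you describe, via \Cref{lemma: Step_back_Bounded Bonus_sP} and the anchor argument. But the first piece $V_{P^{\star,k},f^k}^{\tpi^k}=\sum_h\Eb_{P^{\star,k},\tpi^k}[f^k_h]$ is \emph{not} lower order: it requires its own step-back (\Cref{lemma: Step_back_Bounded TV_sP}, with coefficient $\alpha_{k,W}$ rather than $\beta_{k,W}$) and its own anchor-potential sum. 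In fact, the $\sqrt{W^3AC_B}\,\Delta^{\sqrt{P}}_{[H],[K]}$ term you attribute to the bonus residuals actually comes from this $f^k$-piece: the residual in \Cref{lemma: Step_back_Bounded TV_sP} carries a $\sqrt{W}$ factor (from the MLE guarantee), whereas the residual in \Cref{lemma: Step_back_Bounded Bonus_sP} that you cite only contributes $W\sqrt{A/d}\,\Delta^{\sqrt{P}}$, which is strictly smaller. So without treating $V_{P^{\star,k},f^k}$ explicitly you cannot recover the stated bound.
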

	\begin{proof}
			For any $k \in [K]$, any policy $\pi$, we have
		\begin{align}
			\hV_{\hP^k,\hb^k}^\pi-V_{P^{\star,k},\hb^{k}}^\pi& \leq \mathop{\Eb}_{\pi}\left[\hP^k_{1}\hV^{\pi}_{2,\hP^k,\hb^k}(s_1,a_1) - P^{\star,k}_1 V^{\pi}_{2,P^{\star,k},\hb^k}(s_1,a_1)\right]\nonumber\\
			& = \mathop{\Eb}_{\pi}\left[\left(\hP^k_{1} - P^{\star,k}_1\right)\hV^{\pi}_{2,\hP^k,\hb^k}(s_1,a_1) + P^{\star,k}_1\left(\hV^{\pi}_{2,\hP^k,\hb^k} - V^{\pi}_{2,P^{\star,k},\hb^k}\right)(s_1,a_1)\right]\nonumber\\
			&\leq \mathop{\Eb}_{\pi}\left[f_1^k(s_1,a_1) + P^{\star,k}_1\left(\hV^{\pi}_{2,\hP^k,\hb^k} - V^{\pi}_{2,P^{\star,k},\hb^k}\right)\right]\nonumber\\
			&\leq \mathop{\Eb}_{\pi}\left[f_1^k(s_1,a_1)\right] + \mathop{\Eb}_{P^{\star,k},\pi}\left[\hV^{\pi}_{2,\hP^k,\hb^k} - V^{\pi}_{2,P^{\star,k},\hb^k}\right]\nonumber\\
			&\leq \mathop{\Eb}_{P^{\star,k}, \pi}\left[\sum_{h=1}^H f^k(s_h,a_h)\right] = V^{\pi}_{P^{\star,k},f^k}, \label{Eq: hb < sb + sf}
		\end{align}
		As a result, we have
	    \begin{align*}
		\sum_{k=1}^K\hV_{\hP^k,\hb^k}^\pi
		\leq \sum_{k=1}^KV^{\pi}_{P^{\star,k},f^k}+\sum_{k=1}^KV_{P^{\star,k},\hb^{k}}^\pi \; .
		\end{align*}
		\textbf{Step 1:} We first bound $\sum_{k=1}^KV^{\tpi^k}_{P^{\star,k},f^k}$ via an \textbf{auxiliary anchor representation}.
  
		Recall that $U^{k,W}_{h,\phi^{\star,k}}=\sum_{i= 1\lor(k-W)}^{k-1} \mathop{\Eb}_{s_{h}\sim\left(P^{\star,i},\tpi^{i}\right)\atop{a_h\sim \Uc(\Ac)}}\left[\phi^{\star,k}_h(s_h,a_h)\phi^{\star,k}_h(s_h,a_h)^\top\right]+\lambda_{k,W} I_d$ and we define $\tU^{k,W,t}_{h,\phi^{\star,k}}=\sum_{i= tW+1}^{k-1} \mathop{\Eb}_{s_{h}\sim\left(P^{\star,i},\tpi^{i}\right)\atop{a_h\sim \Uc(\Ac)}}\left[\phi^{\star,k}_h(s_h,a_h)\phi^{\star,k}_h(s_h,a_h)^\top\right]+\lambda_{k,W} I_d$. We first note that for any $h$, the following equation holds. 
		\begin{align}
			&\sum_{k\in[K]}\Eb_{(s_h,a_h) \sim (P^{\star,k},\tpi^k)}\left[\alpha_{k,W}\left\|\phi^{\star,k}_{h}(s_{h},a_{h})\right\|_{(U^k_{h,\phi^{\star,k}})^{-1}}\right]\nonumber\\
			& \quad = \sqrt{\sum_{k\in[K]}\alpha_{k,W}^2\sum_{k\in[K]}\Eb_{(s_h,a_h) \sim (P^{\star,k},\tpi^k)}\left[\left\|\phi^{\star,k}_{h}(s_{h},a_{h})\right\|_{(U^{k,W}_{h,\phi^{\star,k}})^{-1}}\right]^2}\nonumber\\
                & \quad = \sqrt{\sum_{k\in[K]}\alpha_{k,W}^2\sum_{t=0}^{\lfloor K/W\rfloor}\sum_{k=tW+1}^{(t+1)W} \Eb_{(s_h,a_h) \sim (P^{\star,k},\tpi^k)}\left[\left\|\phi^{\star,k}_{h}(s_{h},a_{h})\right\|_{(U^{k,W}_{h,\phi^{\star,k}})^{-1}}\right]^2} \label{Eq: Lemma7-1}
                \end{align}
                The $\phi^{\star,k}$ and $U$ in \Cref{Eq: Lemma7-1} both change with the round index $k$. To deal with such an issue, we divide the entire round into $\lfloor\frac{K}{W}\rfloor+1$ blocks with an equal length of $W$. For each block $t \in \{0,\ldots,\lfloor\frac{K}{W}\rfloor \}$, we select an \textbf{auxiliary anchor representation} $\phi^{\star,tW+1}$ and decompose \Cref{Eq: Lemma7-1} as follows. We first derive the following equation: 
	        \begin{align}
		     &  \sum_{k=tW+1}^{(t+1)W}\Eb_{(s_h,a_h) \sim (P^{\star,k},\tpi^k)}\left[\left\|\phi_{h}^{\star,k}(s_{h},a_{h})\right\|^2_{(U^{k,W}_{h,\phi^{\star,k}})^{-1}}\right]\nonumber\\
       & \qquad  \qquad -\sum_{k=tW+1}^{(t+1)W}\Eb_{(s_h,a_h) \sim (P^{\star,k},\tpi^k)}\left[\left\|\phi_{h}^{\star,tW+1}(s_{h},a_{h})\right\|^2_{(U^{k,W}_{h,\phi^{\star,tW+1}})^{-1}}\right]\nonumber\\
			& \quad = \sum_{k=tW+1}^{(t+1)W}\Eb_{(s_h,a_h) \sim (P^{\star,k},\tpi^k)}\left[\left\|\phi_{h}^{\star,k}(s_{h},a_{h})\right\|^2_{(U^{k,W}_{h,\phi^{\star,k}})^{-1}}-\left\|\phi_{h}^{\star}(s_{h},a_{h})\right\|^2_{(U^{k,W}_{h,\phi^{\star,tW+1}})^{-1}}\right]\nonumber\\
			& \quad = \sum_{k=tW+1}^{(t+1)W}\Eb_{(s_h,a_h) \sim (P^{\star,k},\tpi^k)}\left[\left\|\phi_{h}^{\star,k}(s_{h},a_{h})\right\|^2_{(U^{k,W}_{h,\phi^{\star,k}})^{-1}}-\left\|\phi_{h}^{\star,k}(s_{h},a_{h})\right\|^2_{(U^{k,W}_{h,\phi^{\star,tW+1}})^{-1}}\right.\nonumber\\
			& \left.\quad +\left\|\phi_{h}^{\star,k}(s_{h},a_{h})\right\|^2_{(U^{k,W}_{h,\phi^{\star,tW+1}})^{-1}}-\left\|\phi_{h}^{\star,tW+1}(s_{h},a_{h})\right\|^2_{(U^{k,W}_{h,\phi^{\star,tW+1}})^{-1}}\right].\nonumber\\
		& \quad = \underbrace{\sum_{k=tW+1}^{(t+1)W}\Eb_{(s_h,a_h) \sim (P^{\star,k},\tpi^k)}\left[\left\|\phi_{h}^{\star,k}(s_{h},a_{h})\right\|^2_{(U^{k,W}_{h,\phi^{\star,k}})^{-1}}-\left\|\phi_{h}^{\star,k}(s_{h},a_{h})\right\|^2_{(U^{k,W}_{h,\phi^{\star,tW+1}})^{-1}}\right]}_{(I)}\nonumber\\
		& \quad + \underbrace{\sum_{k=tW+1}^{(t+1)W}\Eb_{(s_h,a_h) \sim (P^{\star,k},\tpi^k)}\left[\left\|\phi_{h}^{\star,k}(s_{h},a_{h})\right\|^2_{(U^{k,W}_{h,\phi^{\star,tW+1}})^{-1}}-\left\|\phi_{h}^{\star,tW+1}(s_{h},a_{h})\right\|^2_{(U^{k,W}_{h,\phi^{\star,tW+1}})^{-1}}\right]}_{(II)}. \label{Eq: Lemma7-2}
	\end{align}
	\textbf{For term $(II)$}, we have
	\begin{align}
		&\sum_{k=tW+1}^{(t+1)W}\Eb_{(s_h,a_h) \sim (P^{\star,k},\tpi^k)}\left[\left\|\phi_{h}^{\star,k}(s_{h},a_{h})\right\|^2_{(U^{k,W}_{h,\phi^{\star,tW+1}})^{-1}}-\left\|\phi_{h}^{\star,tW+1}(s_{h},a_{h})\right\|^2_{(U^{k,W}_{h,\phi^{\star,tW+1}})^{-1}}\right] \nonumber\\
		& \scriptstyle\quad \leq \sum_{k=tW+1}^{(t+1)W}\Eb_{(s_h,a_h) \sim (P^{\star,k},\tpi^k)}\left[\phi_{h}^{\star,k}(s_{h},a_{h})^\top(U^{k,W}_{h,\phi^{\star,tW+1}})^{-1}\phi_{h}^{\star,k}(s_{h},a_{h})-\phi_{h}^{\star,k}(s_{h},a_{h})^\top(U^{k,W}_{h,\phi^{\star,tW+1}})^{-1}\phi_{h}^{\star,tW+1}(s_{h},a_{h})\right.\nonumber\\
		& \quad + \left.\phi_{h}^{\star,k}(s_{h},a_{h})^\top(U^{k,W}_{h,\phi^{\star,tW+1}})^{-1}\phi_{h}^{\star,tW+1}(s_{h},a_{h})-\phi_{h}^{\star,tW+1}(s_{h},a_{h})^\top(U^{k,W}_{h,\phi^{\star,tW+1}})^{-1}\phi_{h}^{\star,tW+1}(s_{h},a_{h})\right]\nonumber\\
		& \quad \overset{\RM{1}}{\leq} \sum_{k=tW+1}^{(t+1)W}\Eb_{(s_h,a_h) \sim (P^{\star,k},\tpi^k)}\left[\norm{\phi_{h}^{\star,k}(s_{h},a_{h})}_2\norm{(U^{k,W}_{h,\phi^{\star,tW+1}})^{-1}}_2\norm{\phi_{h}^{\star,k}(s_{h},a_{h})-\phi_{h}^{\star,tW+1}(s_{h},a_{h})}_2\right.\nonumber\\
		& \quad + \left.\norm{\phi_{h}^{\star,k}(s_{h},a_{h})-\phi_{h}^{\star,tW+1}(s_{h},a_{h})}_2\norm{(U^{k,W}_{h,\phi^{\star,tW+1}})^{-1}}_2\norm{\phi_{h}^{\star,tW+1}(s_{h},a_{h})}_2\right]\nonumber\\
		& \quad \leq \sum_{k=tW+1}^{(t+1)W}\Eb_{(s_h,a_h) \sim (P^{\star,k},\tpi^k)}\left[\frac{2}{\lambda_{W}}\norm{\phi_{h}^{\star,k}(s_{h},a_{h})-\phi_{h}^{\star,tW+1}(s_{h},a_{h})}_2\right]\nonumber\\
		& \quad \leq \frac{2W}{\lambda_{W}}\Delta_{\{h\},[tW+1,t(W+1)-1]}^{\phi}, \label{Eq: Lemma7-3}
	\end{align}
	where $\RM{1}$ follows from the property of the matrix norms induced by vector $\ell_2$-norm.  \newline
	\textbf{For term $(I)$}, we have
	\begin{align}
		&\sum_{k=tW+1}^{(t+1)W}\Eb_{(s_h,a_h) \sim (P^{\star,k},\tpi^k)}\left[\left\|\phi_{h}^{\star,k}(s_{h},a_{h})\right\|^2_{(U^{k,W}_{h,\phi^{\star,k}})^{-1}}-\left\|\phi_{h}^{\star,k}(s_{h},a_{h})\right\|^2_{(U^{k,W}_{h,\phi^{\star,tW+1}})^{-1}}\right]\nonumber\\
		& \quad =\sum_{k=tW+1}^{(t+1)W}\Eb_{(s_h,a_h) \sim (P^{\star,k},\tpi^k)}\left[\phi_{h}^{\star,k}(s_{h},a_{h})^\top\left((U^{k,W}_{h,\phi^{\star,k}})^{-1}-(U^{k,W}_{h,\phi^{\star,tW+1}})^{-1}\right)\phi_{h}^{\star,k}(s_{h},a_{h})\right]\nonumber\\
		& \quad =\sum_{k=tW+1}^{(t+1)W}\Eb_{(s_h,a_h) \sim (P^{\star,k},\tpi^k)}\left[\phi_{h}^{\star,k}(s_{h},a_{h})^\top(U^{k,W}_{h,\phi^{\star,k}})^{-1}\left(U^{k,W}_{h,\phi^{\star,tW+1}}-U^{k,W}_{h,\phi^{\star,k}}\right)(U^{k,W}_{h,\phi^{\star,tW+1}})^{-1}\phi_{h}^{\star,k}(s_{h},a_{h})\right]\nonumber\\
		& \scriptstyle \quad \overset{\RM{1}}{\leq} \sum_{k=tW+1}^{(t+1)W}\Eb_{(s_h,a_h) \sim (P^{\star,k},\tpi^k)}\left[\norm{\phi_{h}^{\star,k}(s_{h},a_{h})}_2\norm{(U^{k,W}_{h,\phi^{\star,k}})^{-1}}_2\norm{\left(U^{k,W}_{h,\phi^{\star,tW+1}}-U^{k,W}_{h,\phi^{\star,k}}\right)}_2\norm{(U^{k,W}_{h,\phi^{\star,tW+1}})^{-1}}_2\norm{\phi_{h}^{\star,k}(s_{h},a_{h})}_2\right]\nonumber\\
		& \scriptstyle \quad   \overset{\RM{2}}{\leq} \frac{1}{\lambda_{W}^2} \sum_{k=tW+1}^{(t+1)W}\Eb_{(s_h,a_h) \sim (P^{\star,k},\tpi^k)}\left[\norm{\sum_{i= 1 \lor k-W}^{k-1} \mathop{\Eb}_{(s_{h}, a_h)\sim\left(P^{\star,i},\tpi^{i}\right)}\left[\phi^{\star,tW+1}_h(s_h,a_h)\phi^{\star,tW+1}_h(s_h,a_h)^\top-\phi^{\star,k}_h(s_h,a_h)\phi^{\star,k}_h(s_h,a_h)^\top\right]}_2\right]\nonumber\\
		& \quad \leq \frac{1}{\lambda_{W}^2} \sum_{k=tW+1}^{(t+1)W}\sum_{i= 1 \lor k-W}^{k-1} \mathop{\Eb}_{(s_{h}, a_h)\sim\left(P^{\star,i},\tpi^{i}\right)}\left[\norm{\phi^{\star,tW+1}_h(s_h,a_h)\phi^{\star,tW+1}_h(s_h,a_h)^\top-\phi^{\star,k}_h(s_h,a_h)\phi^{\star,k}_h(s_h,a_h)^\top}_2\right]\nonumber\\
		& \quad \leq \frac{1}{\lambda_{W}^2} \sum_{k=tW+1}^{(t+1)W}\sum_{i= 1 \lor k-W}^{k-1} \mathop{\Eb}_{(s_{h}, a_h)\sim\left(P^{\star,i},\tpi^{i}\right)}\left[\norm{\phi^{\star,tW+1}_h(s_h,a_h)\phi^{\star,tW+1}_h(s_h,a_h)^\top-\phi^{\star,tW+1}_h(s_h,a_h)\phi^{\star,k}_h(s_h,a_h)^\top}_2\right.\nonumber\\
		&\quad \left.+\norm{\phi^{\star,tW+1}_h(s_h,a_h)\phi^{\star,k}_h(s_h,a_h)^\top-\phi^{\star,k}_h(s_h,a_h)\phi^{\star,k}_h(s_h,a_h)^\top}_2\right]\nonumber\\
		& \quad \leq \frac{2}{\lambda_{W}^2}\sum_{k=tW+1}^{(t+1)W}\sum_{i= 1 \lor k-W}^{k-1}\Eb_{(s_h,a_h) \sim (P^{\star,i},\tpi^i)}\left[\norm{\phi_{h}^{\star,k}(s_{h},a_{h})-\phi^{\star,tW+1}_h(s_{h},a_{h})}_2\right]\nonumber\\
		& \quad \leq \sum_{k=tW+1}^{(t+1)W}\sum_{i= 1 \lor k-W}^{k-1}\frac{2}{\lambda_{W}^2}\Delta_{h,[tW+1,k-1]}^{\phi}, \label{Eq: Lemma7-4}
	\end{align}
        where $\RM{1}$ follows from the property of the matrix norms induced by vector $\ell_2$-norm and $\RM{2}$ follows from that $\norm{\phi_{h}^{\star,k}(s_{h},a_{h})}_2 \leq 1$.
        
	Furthermore,
	\begin{align}
         & \sum_{k=tW+1}^{(t+1)W}\Eb_{(s_h,a_h) \sim (P^{\star,k},\tpi^k)}\left[\left\|\phi^{\star,tW+1}_h(s_{h},a_{h})\right\|^2_{(U^{k,W}_{h,\phi^{\star,tW+1}})^{-1}}\right]\nonumber\\
         & \quad =\sum_{k=tW+1}^{(t+1)W} \mathrm{tr}\left(\Eb_{(s_h,a_h) \sim (P^{\star,k},\tpi^k)}\left[\phi^{\star,tW+1}_{h}(s_{h},a_{h})\phi^{\star,tW+1}_{h}(s_{h},a_{h})^\top\right] {(U^{k,W}_{h,\phi^{\star,tW+1}})^{-1}}\right)\nonumber\\
         & \quad \leq \sum_{k=tW+1}^{(t+1)W} \mathrm{tr}\left(\Eb_{(s_h,a_h) \sim (P^{\star,k},\tpi^k)}\left[\phi^{\star,tW+1}_{h}(s_{h},a_{h})\phi^{\star,tW+1}_{h}(s_{h},a_{h})^\top\right] (\tU^{k,W,t}_{h,\phi^{\star,tW+1}})^{-1}\right)\nonumber\\
         & \quad \leq \sum_{k=tW+1}^{(t+1)W} A\mathop{\Eb}_{s_h\sim (P^{\star,k},\tpi^k)\atop{a_h \sim \Uc(\Ac)}}\mathrm{tr}\left[\phi^{\star,tW+1}_{h}(s_{h},a_{h})\phi^{\star,tW+1}_{h}(s_{h},a_{h})^\top\right] (\tU^{k,W,t}_{h,\phi^{\star,tW+1}})^{-1}\nonumber\\
          & \quad \leq 2Ad\log(1+\frac{W}{d\lambda_0}),\label{Eq: Lemma7-5}
	\end{align}
	where the last equation follows from \Cref{Lemma: Elliptical_potential}.\newline
	Then combining \Cref{Eq: Lemma7-2,Eq: Lemma7-3,Eq: Lemma7-4,Eq: Lemma7-5}, we have 
	\begin{align}
		&\sum_{k=tW+1}^{(t+1)W}\Eb_{(s_h,a_h) \sim (P^{\star,k},\tpi^k)}\left[\left\|\phi_{h}^{\star,k}(s_{h},a_{h})\right\|^2_{(U^{k,W}_{h,\phi^{\star,k}})^{-1}}\right] \nonumber\\
		& \quad \leq \ 2Ad\log(1+\frac{W}{d\lambda_0})+\frac{2W}{\lambda_{W}}\Delta_{\{h\},[tW+1,t(W+1)-1]}^{\phi} +\sum_{k=tW+1}^{(t+1)W}\sum_{i= 1 \lor k-W}^{k-1}\frac{2}{\lambda_{W}^2}\Delta_{h,[tW+1,k-1]}^{\phi}. \label{Eq: Lemma7-6}
	\end{align}
	Substituting \Cref{Eq: Lemma7-6} into \Cref{Eq: Lemma7-1}, we have 
	\begin{align}
	&\sum_{k\in[K]}\Eb_{(s_h,a_h) \sim (P^{\star,k},\tpi^k)}\left[\alpha_{k,W}\left\|\phi^{\star,k}_{h}(s_{h},a_{h})\right\|_{(U^k_{h,\phi^{\star,k}})^{-1}}\right]\nonumber\\
	& \quad \leq \sqrt{\sum_{k\in[K]}\alpha_{k,W}^2\sum_{t=0}^{\lfloor K/W\rfloor}\sum_{k=tW+1}^{(t+1)W} \Eb_{(s_h,a_h) \sim (P^{\star,k},\tpi^k)}\left[\left\|\phi^{\star,k}_{h}(s_{h},a_{h})\right\|_{(U^{k,W}_{h,\phi^{\star,k}})^{-1}}\right]^2}\nonumber \\
	& \quad \leq \scriptstyle\sqrt{\sum_{k\in[K]}\alpha_{k,W}^2\sum_{t=0}^{\lfloor K/W\rfloor}\left[2Ad\log(1+\frac{W}{d\lambda_0})+\frac{2W}{\lambda_{W}}\Delta_{\{h\},[tW+1,t(W+1)-1]}^{\phi} +\sum_{k=tW+1}^{(t+1)W}\sum_{i= 1 \lor k-W}^{k-1}\frac{2}{\lambda_{W}^2}\Delta_{h,[tW+1,k-1]}^{\phi}\right]}\nonumber\\
	& \quad \leq \sqrt{K\left(2WA\zeta_{k,W}+\lambda_{k,W} d\right)\left[\frac{2KAd}{W}\log(1+\frac{W}{d\lambda_0})+\frac{2W}{\lambda_{W}}\Delta_{\{h\},[K]}^{\phi} +\frac{2W^2}{\lambda_{W}^2}\Delta_{\{h\},[K]}^{\phi}\right]}\label{Eq: Lemma7-7}
	\end{align}
		where the second equation follows from \Cref{Eq: Lemma7-6}.\newline
		Then we derive the following bound:
		\begin{align} 
			& \sum_{k=1}^{K}V^{\tpi^{k}}_{P^{\star,k},f^k} \nonumber\\
			& \quad = \sum_{k\in[K]}\sum_{h\in[H]}\Eb_{(s_h,a_h) \sim (P^{\star,k},\tpi^k)}\left[f^k_h(s_h,a_h)\right]\nonumber\\
			&\scriptstyle \quad \overset{\RM{1}}{\leq}     \sum_{k\in[K]}\left\{\sum_{h=2}^{H}\left[\Eb_{(s_{h-1},a_{h-1}) \sim (P^{\star,k},\tpi^k)}\left[\alpha_{k,W}\left\|\phi^{\star,k}_{h-1}(s_{h-1},a_{h-1})\right\|_{(U^k_{h-1,\phi^{\star,k}})^{-1}}\right]+\sqrt{\frac{1}{2d}WAC_B\Delta^P_{[h-1,h],[k-W,k-1]}}\right]\right.\nonumber\\
			& \quad \left.+\sqrt{A\left(\zeta_{k,W}+\frac{1}{2}C_B\Delta^P_{1,[k-W,k-1]}\right)}\right\}\nonumber\\
			&  \quad \leq \sum_{h=1}^{H-1}\sum_{k\in[K]}\Eb_{(s_h,a_h) \sim (P^{\star,k},\tpi^k)}\left[\alpha_{k,W}\left\|\phi^{\star,k}_{h}(s_{h},a_{h})\right\|_{(U^k_{h,\phi^{\star,k}})^{-1}}\right] \nonumber\\
            & \qquad \qquad +\sum_{k\in[K]}\sum_{h=1}^{H}\sqrt{WAC_B\Delta^P_{[h-1,h],[k-W,k-1]}}+\sum_{k \in [K]}\sqrt{A\zeta_{k,W}}\nonumber\\
			& \quad \overset{\RM{2}}{\leq} \sum_{h=1}^{H-1}\sqrt{K\left(2WA\zeta_{k,W}+\lambda_{k,W} d\right)\left[\frac{2AKd}{W}\log(1+\frac{W}{d\lambda_0})+\frac{2W}{\lambda_{W}}\Delta_{\{h\},[K]}^{\phi} +\frac{2W^2}{\lambda_{W}^2}\Delta_{\{h\},[K]}^{\phi}\right]}\nonumber\\
			& \quad +\sum_{k\in[K]}\sum_{h=1}^{H}\sqrt{WAC_B\Delta^P_{[h-1,h],[k-W,k-1]}}+\sum_{k \in [K]}\sqrt{A\zeta_{k,W}}\nonumber\\
               & \quad \leq \sqrt{K\left(2WA\zeta_{k,W}+\lambda_{k,W} d\right)}\left[H\sqrt{\frac{2AKd}{W}\log(1+\frac{W}{d\lambda_0})}+\sqrt{\frac{2HW}{\lambda_{W}}\Delta_{[H],[K]}^{{\phi}}} +\sqrt{\frac{2HW^2}{\lambda_{W}^2}\Delta_{[H],[K]}^{{\phi}}}\right]\nonumber\\
               & \quad +\sqrt{W^3AC_B}\Delta^{\sqrt{P}}_{[H],[K]}+\sum_{k \in [K]}\sqrt{A\zeta_{k,W}}\nonumber\\
               & \quad \leq O\left(\sqrt{K(A\log(|\Phi||\Psi|KH/\delta)+d^2)}\left[H\sqrt{\frac{2AKd}{W}\log(W)}+\sqrt{HW^2\Delta_{[H],[K]}^{{\phi}}}\right]+\sqrt{W^3AC_B}\Delta^{\sqrt{P}}_{[H],[K]}\right), \label{Eq: Lemma6-Step1}
		\end{align}
		where $\RM{1}$ follows from \Cref{lemma: Step_back_Bounded TV_sP}, and $\RM{2}$ follows from \Cref{Eq: Lemma7-7}.\newline
		\textbf{Step 2:} We next bound $\sum_{k=1}^K V_{P^{\star,k},\hb^{k}}^{\tpi^{k}}$ via an \textbf{auxiliary anchor representation}. \newline
            Similarly to the proof \textbf{Step 1}, we further bound $\sum_{k\in[K]}\Eb_{(s_h,a_h) \sim (P^{\star,k},\tpi^k)}\left[\beta_{k,W}\left\|\phi^{\star,k}_{h}(s_{h},a_{h})\right\|_{(W^k_{h,\phi^{\star,k}})^{-1}}\right]$. We define $W^{k,W}_{h,\phi^{\star,k}}=\sum_{i= 1\lor(k-W)}^{k-1} \mathop{\Eb}_{(s_{h},a_h)\sim\left(P^{\star,i},\tpi^{i}\right)}\left[\phi^{\star,k}_h(s_h,a_h)\phi^{\star,k}_h(s_h,a_h)^\top\right]+\lambda_{k,W} I_d$ and $\tW^{k,W,t}_{h,\phi^{\star,k}}=\sum_{i= tW+1}^{k-1} \mathop{\Eb}_{(s_{h},a_h)\sim\left(P^{\star,i},\tpi^{i}\right)}\left[\phi^{\star,k}_h(s_h,a_h)\phi^{\star,k}_h(s_h,a_h)^\top\right]+\lambda_{k,W} I_d$. We first note that for any $h$, we have 
		\begin{align}
			&\sum_{k\in[K]}\Eb_{(s_h,a_h) \sim (P^{\star,k},\tpi^k)}\left[\beta_{k,W}\left\|\phi^{\star,k}_{h}(s_{h},a_{h})\right\|_{(W^k_{h,\phi^{\star,k}})^{-1}}\right]\nonumber\\
			& \quad = \sqrt{\sum_{k\in[K]}\beta_{k,W}^2\sum_{k\in[K]}\Eb_{(s_h,a_h) \sim (P^{\star,k},\tpi^k)}\left[\left\|\phi^{\star,k}_{h}(s_{h},a_{h})\right\|_{(W^{k,W}_{h,\phi^{\star,k}})^{-1}}\right]^2}\nonumber\\
                & \quad = \sqrt{\sum_{k\in[K]}\beta_{k,W}^2\sum_{t=0}^{\lfloor K/W\rfloor}\sum_{k=tW+1}^{(t+1)W} \Eb_{(s_h,a_h) \sim (P^{\star,k},\tpi^k)}\left[\left\|\phi^{\star,k}_{h}(s_{h},a_{h})\right\|_{(W^{k,W}_{h,\phi^{\star,k}})^{-1}}\right]^2} \label{Eq: Lemma8-1}
                \end{align}
                The $\phi^{\star,k}$ and $W$ in \Cref{Eq: Lemma8-1} both change with the round index $k$. To deal with this issue, we decompose it as follows. We first derive the following equation: 
	        	\begin{align}
		     	& \sum_{k=tW+1}^{(t+1)W}\Eb_{(s_h,a_h) \sim (P^{\star,k},\tpi^k)}\left[\left\|\phi_{h}^{\star,k}(s_{h},a_{h})\right\|^2_{(W^{k,W}_{h,\phi^{\star,k}})^{-1}}\right]\nonumber\\
                &\qquad\qquad\qquad-\sum_{k=tW+1}^{(t+1)W}\Eb_{(s_h,a_h) \sim (P^{\star,k},\tpi^k)}\left[\left\|\phi_{h}^{\star,tW+1}(s_{h},a_{h})\right\|^2_{(W^{k,W}_{h,\phi^{\star,tW+1}})^{-1}}\right]\nonumber\\
				& \quad = \sum_{k=tW+1}^{(t+1)W}\Eb_{(s_h,a_h) \sim (P^{\star,k},\tpi^k)}\left[\left\|\phi_{h}^{\star,k}(s_{h},a_{h})\right\|^2_{(W^{k,W}_{h,\phi^{\star,k}})^{-1}}-\left\|\phi_{h}^{\star}(s_{h},a_{h})\right\|^2_{(W^{k,W}_{h,\phi^{\star,tW+1}})^{-1}}\right]\nonumber\\
				& \quad = \sum_{k=tW+1}^{(t+1)W}\Eb_{(s_h,a_h) \sim (P^{\star,k},\tpi^k)}\left[\left\|\phi_{h}^{\star,k}(s_{h},a_{h})\right\|^2_{(W^{k,W}_{h,\phi^{\star,k}})^{-1}}-\left\|\phi_{h}^{\star,k}(s_{h},a_{h})\right\|^2_{(W^{k,W}_{h,\phi^{\star,tW+1}})^{-1}}\right.\nonumber\\
				& \left.\quad +\left\|\phi_{h}^{\star,k}(s_{h},a_{h})\right\|^2_{(W^{k,W}_{h,\phi^{\star,tW+1}})^{-1}}-\left\|\phi_{h}^{\star,tW+1}(s_{h},a_{h})\right\|^2_{(W^{k,W}_{h,\phi^{\star,tW+1}})^{-1}}\right].\nonumber\\
				& \quad = \underbrace{\sum_{k=tW+1}^{(t+1)W}\Eb_{(s_h,a_h) \sim (P^{\star,k},\tpi^k)}\left[\left\|\phi_{h}^{\star,k}(s_{h},a_{h})\right\|^2_{(W^{k,W}_{h,\phi^{\star,k}})^{-1}}-\left\|\phi_{h}^{\star,k}(s_{h},a_{h})\right\|^2_{(W^{k,W}_{h,\phi^{\star,tW+1}})^{-1}}\right]}_{(III)}\nonumber\\
				& \quad + \underbrace{\sum_{k=tW+1}^{(t+1)W}\Eb_{(s_h,a_h) \sim (P^{\star,k},\tpi^k)}\left[\left\|\phi_{h}^{\star,k}(s_{h},a_{h})\right\|^2_{(W^{k,W}_{h,\phi^{\star,tW+1}})^{-1}}-\left\|\phi_{h}^{\star,tW+1}(s_{h},a_{h})\right\|^2_{(W^{k,W}_{h,\phi^{\star,tW+1}})^{-1}}\right]}_{(IV)}. \label{Eq: Lemma8-2}
				\end{align}
	\textbf{For term $(IV)$}, we have
	\begin{align}
		&\sum_{k=tW+1}^{(t+1)W}\Eb_{(s_h,a_h) \sim (P^{\star,k},\tpi^k)}\left[\left\|\phi_{h}^{\star,k}(s_{h},a_{h})\right\|^2_{(W^{k,W}_{h,\phi^{\star,tW+1}})^{-1}}-\left\|\phi_{h}^{\star,tW+1}(s_{h},a_{h})\right\|^2_{(W^{k,W}_{h,\phi^{\star,tW+1}})^{-1}}\right] \nonumber\\
		& \quad \scriptstyle\leq \sum_{k=tW+1}^{(t+1)W}\Eb_{(s_h,a_h) \sim (P^{\star,k},\tpi^k)}\left[\phi_{h}^{\star,k}(s_{h},a_{h})^\top(W^{k,W}_{h,\phi^{\star,tW+1}})^{-1}\phi_{h}^{\star,k}(s_{h},a_{h})-\phi_{h}^{\star,k}(s_{h},a_{h})^\top(W^{k,W}_{h,\phi^{\star,tW+1}})^{-1}\phi_{h}^{\star,tW+1}(s_{h},a_{h})\right.\nonumber\\
		& \quad + \left.\phi_{h}^{\star,k}(s_{h},a_{h})^\top(W^{k,W}_{h,\phi^{\star,tW+1}})^{-1}\phi_{h}^{\star,tW+1}(s_{h},a_{h})-\phi_{h}^{\star,tW+1}(s_{h},a_{h})^\top(W^{k,W}_{h,\phi^{\star,tW+1}})^{-1}\phi_{h}^{\star,tW+1}(s_{h},a_{h})\right]\nonumber\\
		& \quad \overset{\RM{1}}{\leq} \sum_{k=tW+1}^{(t+1)W}\Eb_{(s_h,a_h) \sim (P^{\star,k},\tpi^k)}\left[\norm{\phi_{h}^{\star,k}(s_{h},a_{h})}_2\norm{(W^{k,W}_{h,\phi^{\star,tW+1}})^{-1}}_2\norm{\phi_{h}^{\star,k}(s_{h},a_{h})-\phi_{h}^{\star,tW+1}(s_{h},a_{h})}_2\right.\nonumber\\
		& \quad + \left.\norm{\phi_{h}^{\star,k}(s_{h},a_{h})-\phi_{h}^{\star,tW+1}(s_{h},a_{h})}_2\norm{(W^{k,W}_{h,\phi^{\star,tW+1}})^{-1}}_2\norm{\phi_{h}^{\star,tW+1}(s_{h},a_{h})}_2\right]\nonumber\\
		& \quad \leq \sum_{k=tW+1}^{(t+1)W}\Eb_{(s_h,a_h) \sim (P^{\star,k},\tpi^k)}\left[\frac{2}{\lambda_{W}}\norm{\phi_{h}^{\star,k}(s_{h},a_{h})-\phi_{h}^{\star,tW+1}(s_{h},a_{h})}_2\right]\nonumber\\
		& \quad \leq \frac{2W}{\lambda_{W}}\Delta_{\{h\},[tW+1,t(W+1)-1]}^{\phi}, \label{Eq: Lemma8-3}
	\end{align}
	where 
 $\RM{1}$ follows from the property of the matrix norms induced by vector $\ell_2$-norm.  
 
	\textbf{For term $(III)$}, we derive the following bound:
	\begin{align}
		&\sum_{k=tW+1}^{(t+1)W}\Eb_{(s_h,a_h) \sim (P^{\star,k},\tpi^k)}\left[\left\|\phi_{h}^{\star,k}(s_{h},a_{h})\right\|^2_{(W^{k,W}_{h,\phi^{\star,k}})^{-1}}-\left\|\phi_{h}^{\star,k}(s_{h},a_{h})\right\|^2_{(W^{k,W}_{h,\phi^{\star,tW+1}})^{-1}}\right]\nonumber\\
		& \quad =\sum_{k=tW+1}^{(t+1)W}\Eb_{(s_h,a_h) \sim (P^{\star,k},\tpi^k)}\left[\phi_{h}^{\star,k}(s_{h},a_{h})^\top\left((W^{k,W}_{h,\phi^{\star,k}})^{-1}-(W^{k,W}_{h,\phi^{\star,tW+1}})^{-1}\right)\phi_{h}^{\star,k}(s_{h},a_{h})\right]\nonumber\\
		& \quad =\sum_{k=tW+1}^{(t+1)W}\Eb_{(s_h,a_h) \sim (P^{\star,k},\tpi^k)}\left[\phi_{h}^{\star,k}(s_{h},a_{h})^\top(W^{k,W}_{h,\phi^{\star,k}})^{-1}\right.\nonumber\\
        & \hspace{2in}\left.\quad \times\left(W^{k,W}_{h,\phi^{\star,tW+1}}-W^{k,W}_{h,\phi^{\star,k}}\right)(W^{k,W}_{h,\phi^{\star,tW+1}})^{-1}\phi_{h}^{\star,k}(s_{h},a_{h})\right]\nonumber\\
		&\quad \overset{\RM{1}}{\leq} \sum_{k=tW+1}^{(t+1)W}\Eb_{(s_h,a_h) \sim (P^{\star,k},\tpi^k)}\left[\norm{\phi_{h}^{\star,k}(s_{h},a_{h})}_2\norm{(W^{k,W}_{h,\phi^{\star,k}})^{-1}}_2\norm{\left(W^{k,W}_{h,\phi^{\star,tW+1}}-W^{k,W}_{h,\phi^{\star,k}}\right)}_2\right.\nonumber\\
 &\hspace{3in}\left.\times\norm{(W^{k,W}_{h,\phi^{\star,tW+1}})^{-1}}_2\norm{\phi_{h}^{\star,k}(s_{h},a_{h})}_2\right]\nonumber\\
		& \quad \overset{\RM{2}}{\leq} \frac{1}{\lambda_{W}^2} \sum_{k=tW+1}^{(t+1)W}\Eb_{(s_h,a_h) \sim (P^{\star,k},\tpi^k)}\left[\left\|\sum_{i= 1 \lor k-W}^{k-1} \mathop{\Eb}_{(s_{h}, a_h)\sim\left(P^{\star,i},\tpi^{i}\right)}\left[\phi^{\star,tW+1}_h(s_h,a_h)\phi^{\star,tW+1}_h(s_h,a_h)^\top\right.\right.\right.\nonumber\\
        & \left.\left.\hspace{3in}-\phi^{\star,k}_h(s_h,a_h)\phi^{\star,k}_h(s_h,a_h)^\top\right]\big\|_2\right]\nonumber\\
		& \scriptstyle\quad \leq \frac{1}{\lambda_{W}^2} \sum_{k=tW+1}^{(t+1)W}\sum_{i= 1 \lor k-W}^{k-1} \mathop{\Eb}_{(s_{h}, a_h)\sim\left(P^{\star,i},\tpi^{i}\right)}\left[\norm{\phi^{\star,tW+1}_h(s_h,a_h)\phi^{\star,tW+1}_h(s_h,a_h)^\top-\phi^{\star,k}_h(s_h,a_h)\phi^{\star,k}_h(s_h,a_h)^\top}_2\right]\nonumber\\
		& \scriptstyle\quad \leq \frac{1}{\lambda_{W}^2} \sum_{k=tW+1}^{(t+1)W}\sum_{i= 1 \lor k-W}^{k-1} \mathop{\Eb}_{(s_{h}, a_h)\sim\left(P^{\star,i},\tpi^{i}\right)}\left[\norm{\phi^{\star,tW+1}_h(s_h,a_h)\phi^{\star,tW+1}_h(s_h,a_h)^\top-\phi^{\star,tW+1}_h(s_h,a_h)\phi^{\star,k}_h(s_h,a_h)^\top}_2\right.\nonumber\\
		&\quad \left.+\norm{\phi^{\star,tW+1}_h(s_h,a_h)\phi^{\star,k}_h(s_h,a_h)^\top-\phi^{\star,k}_h(s_h,a_h)\phi^{\star,k}_h(s_h,a_h)^\top}_2\right]\nonumber\\
		& \quad \leq \frac{2}{\lambda_{W}^2}\sum_{k=tW+1}^{(t+1)W}\sum_{i= 1 \lor k-W}^{k-1}\Eb_{(s_h,a_h) \sim (P^{\star,i},\tpi^i)}\left[\norm{\phi_{h}^{\star,k}(s_{h},a_{h})-\phi^{\star,tW+1}_h(s_{h},a_{h})}_2\right]\nonumber\\
		& \quad \leq \sum_{k=tW+1}^{(t+1)W}\sum_{i= 1 \lor k-W}^{k-1}\frac{2}{\lambda_{W}^2}\Delta_{h,[tW+1,k-1]}^{\phi}. \label{Eq: Lemma8-4}
	\end{align}
  where $\RM{1}$ follows from the property of the matrix norms induced by vector $\ell_2$-norm and $\RM{2}$ follows from that $\norm{\phi_{h}^{\star,k}(s_{h},a_{h})}_2 \leq 1$.
        
	Furthermore, we derive the following bound:
	\begin{align}
         & \sum_{k=tW+1}^{(t+1)W}\Eb_{(s_h,a_h) \sim (P^{\star,k},\tpi^k)}\left[\left\|\phi^{\star,tW+1}_h(s_{h},a_{h})\right\|^2_{(W^{k,W}_{h,\phi^{\star,tW+1}})^{-1}}\right]\nonumber\\
         & \quad =\sum_{k=tW+1}^{(t+1)W} \mathrm{tr}\left(\Eb_{(s_h,a_h) \sim (P^{\star,k},\tpi^k)}\left[\phi^{\star,tW+1}_{h}(s_{h},a_{h})\phi^{\star,tW+1}_{h}(s_{h},a_{h})^\top\right] {(W^{k,W}_{h,\phi^{\star,tW+1}})^{-1}}\right)\nonumber\\
         & \quad \leq \sum_{k=tW+1}^{(t+1)W} \mathrm{tr}\left(\Eb_{(s_h,a_h) \sim (P^{\star,k},\tpi^k)}\left[\phi^{\star,tW+1}_{h}(s_{h},a_{h})\phi^{\star,tW+1}_{h}(s_{h},a_{h})^\top\right] (\tW^{k,W,t}_{h,\phi^{\star,tW+1}})^{-1}\right)\nonumber\\
         & \quad \leq \sum_{k=tW+1}^{(t+1)W} \mathop{\Eb}_{(s_h,a_h)\sim (P^{\star,k},\tpi^k)}\mathrm{tr}\left[\phi^{\star,tW+1}_{h}(s_{h},a_{h})\phi^{\star,tW+1}_{h}(s_{h},a_{h})^\top\right] (\tW^{k,W,t}_{h,\phi^{\star,tW+1}})^{-1}\nonumber\\
          & \quad \leq 2d\log(1+\frac{W}{d\lambda_0}),\label{Eq: Lemma8-5}
	\end{align}
	where the last equation follows from \Cref{Lemma: Elliptical_potential}.\newline
	Then combining \Cref{Eq: Lemma7-2,Eq: Lemma7-3,Eq: Lemma7-4,Eq: Lemma8-5}, we have 
	\begin{align}
		&\sum_{k=tW+1}^{(t+1)W}\Eb_{(s_h,a_h) \sim (P^{\star,k},\tpi^k)}\left[\left\|\phi_{h}^{\star,k}(s_{h},a_{h})\right\|^2_{(W^{k,W}_{h,\phi^{\star,k}})^{-1}}\right] \nonumber\\
		& \quad \leq \ 2d\log(1+\frac{W}{d\lambda_0})+\frac{2W}{\lambda_{W}}\Delta_{\{h\},[tW+1,t(W+1)-1]}^{\phi} +\sum_{k=tW+1}^{(t+1)W}\sum_{i= 1 \lor k-W}^{k-1}\frac{2}{\lambda_{W}^2}\Delta_{h,[tW+1,k-1]}^{\phi}. \label{Eq: Lemma8-6}
	\end{align}
	Substituting \Cref{Eq: Lemma8-6} into \Cref{Eq: Lemma8-1}, we have 
	\begin{align}
	&\sum_{k\in[K]}\Eb_{(s_h,a_h) \sim (P^{\star,k},\tpi^k)}\left[\beta_{k,W}\left\|\phi^{\star,k}_{h}(s_{h},a_{h})\right\|_{(U^k_{h,\phi^{\star,k}})^{-1}}\right]\nonumber\\
	& \quad \leq \sqrt{\sum_{k\in[K]}\beta_{k,W}^2\sum_{t=0}^{\lfloor K/W\rfloor}\sum_{k=tW+1}^{(t+1)W} \Eb_{(s_h,a_h) \sim (P^{\star,k},\tpi^k)}\left[\left\|\phi^{\star,k}_{h}(s_{h},a_{h})\right\|_{(U^{k,W}_{h,\phi^{\star,k}})^{-1}}\right]^2}\nonumber \\
	& \quad \scriptstyle \leq \sqrt{\sum_{k\in[K]}\beta_{k,W}^2\sum_{t=0}^{\lfloor K/W\rfloor}\left[2d\log(1+\frac{W}{d\lambda_0})+\frac{2W}{\lambda_{W}}\Delta_{\{h\},[tW+1,t(W+1)-1]}^{\phi} +\sum_{k=tW+1}^{(t+1)W}\sum_{i= 1 \lor k-W}^{k-1}\frac{2}{\lambda_{W}^2}\Delta_{h,[tW+1,k-1]}^{\phi}\right]}\nonumber\\
	& \quad \scriptstyle \leq \sqrt{K\left(9dA(2WA\zeta_{k,W}+\lambda_{k,W} d)  + \lambda_{k,W} d \right)\left[\frac{2Kd}{W}\log(1+\frac{W}{d\lambda_0})+\frac{2W}{\lambda_{W}}\Delta_{\{h\},[K]}^{\phi} +\frac{2W^2}{\lambda_{W}^2}\Delta_{\{h\},[K]}^{\phi}\right]}.\label{Eq: Lemma8-7}
	\end{align}
		where the second equation follows from \Cref{Eq: Lemma8-6}.\newline
		Then, we derive the following bound:
		\begin{align}          
		&\sum_{k\in[K]}V_{P^{\star,k},\hb^{k}}^{\tpi^{k}}
		=\sum_{k\in[K]}\sum_{h\in[H]}\Eb_{(s_h,a_h) \sim (P^{\star,k},\tpi^k)}\left[\hb^{k}_h(s_h,a_h)\right]\nonumber\\
		&\quad \overset{\RM{1}}{\leq}     \sum_{k\in[K]}\left\{\sum_{h=2}^{H}\left\{\Eb_{(s_{h-1},a_{h-1}) \sim (P^{\star,k},\tpi^k)}\left[\beta_{k,W}\left\|\phi^{\star,k}_{h-1}(s_{h-1},a_{h-1})\right\|_{(W^k_{h-1,\phi^{\star,k}})^{-1}}\right]\right.\right.\nonumber\\
        &\hspace{3in}\left.\left.+\sqrt{\frac{A}{d}\Delta^P_{h-1,[k-W,k-1]}}\right\}+\sqrt{\frac{9Ad\alpha_{k,W}^2}{w}}\right\}\nonumber\\
		&\quad  \leq \sum_{h=1}^{H-1}\sum_{k\in[K]}\Eb_{(s_h,a_h) \sim (P^{\star,k},\tpi^k)}\left[\beta_{k,W}\left\|\phi^{\star,k}_{h}(s_{h},a_{h})\right\|_{(W^k_{h,\phi^{\star,k}})^{-1}}\right]\nonumber\\
        &\hspace{3in}+W\sqrt{\frac{A}{d}}\Delta^{\sqrt{P}}_{[H],[K]}+\sum_{k \in [K]}\sqrt{\frac{9Ad\alpha_{k,W}^2}{W}}\nonumber\\
		&\quad  \overset{\RM{2}}{\leq} \sqrt{K\left(9dA(2WA\zeta_{k,W}+\lambda_{k,W} d)  + \lambda_{k,W} d \right)}\left[H\sqrt{\frac{2Kd}{W}\log(1+\frac{W}{d\lambda_0})}+\sqrt{\frac{2HW}{\lambda_{W}}\Delta_{[H],[K]}^{{\phi}}}\right.\nonumber\\
        &\left.\hspace{3in} +\sqrt{\frac{2HW^2}{\lambda_{W}^2}\Delta_{[H],[K]}^{{\phi}}}\right]+W\sqrt{\frac{A}{d}}\Delta^{\sqrt{P}}_{[H],[K]}\nonumber\\
		& \quad \leq O\left(\sqrt{KdA(A\log(|\Phi||\Psi|KH/\delta)+d^2)}\left[H\sqrt{\frac{Kd}{W}\log(W)}+\sqrt{HW^2\Delta_{[H],[K]}^{{\phi}}}\right]\right.\nonumber\\
        &\left.\hspace{4in}+W\sqrt{{A}}\Delta^{\sqrt{P}}_{[H],[K]}\right), \label{Eq: Lemma6-Step2}
		\end{align}
		where $\RM{1}$ follows from \Cref{lemma: Step_back_Bounded Bonus_sP}, and $\RM{2}$ follows from \Cref{Eq: Lemma8-7}.\newline
	Finally, combining \Cref{Eq: hb < sb + sf,Eq: Lemma6-Step1,Eq: Lemma6-Step2}, we have 
		\begin{align*}
			\sum_{k=1}^K\hV_{\hP^k,\hb^k}^{\tpi^k} &\scriptstyle \leq O\left(\sqrt{K(A\log(|\Phi||\Psi|KH/\delta)+d^2)}\left[H\sqrt{\frac{2AKd}{W}\log(W)}+\sqrt{HW^2\Delta_{[H],[K]}^{{\phi}}}\right]+\sqrt{W^3AC_B}\Delta^{\sqrt{P}}_{[H],[K]}\right)\\
			&\scriptstyle +O\left(\sqrt{KdA(A\log(|\Phi||\Psi|KH/\delta)+d^2)}\left[H\sqrt{\frac{Kd}{W}\log(W)}+\sqrt{HW^2\Delta_{[H],[K]}^{{\phi}}}\right]+W\sqrt{{A}}\Delta^{\sqrt{P}}_{[H],[K]}\right)\\
			&\scriptstyle  \leq O\left(\sqrt{KdA(A\log(|\Phi||\Psi|KH/\delta)+d^2)}\left[H\sqrt{\frac{Kd}{W}\log(W)}+\sqrt{HW^2\Delta_{[H],[K]}^{{\phi}}}\right]+\sqrt{{W^3A}}\Delta^{\sqrt{P}}_{[H],[K]}\right). 
		\end{align*}
	\end{proof}

 The following visitation probability difference lemma is similar to lemma 5 in \cite{DBLP:conf/nips/FeiYWX20}, but we remove their Assumption 1.
 \begin{lemma} \label{Lemma: lemma 5 in Fei}
     For any transition kernels $\{P_h\}_{h=1}^H$,$h \in  [H], j \in [h-1], s_h\in \Sc$ and policies $\{\pi_i\}_{i=1}^H$ and $\pi_j^\prime$,  we have
     \begin{align*}
         \left|P_1^{\pi_1}\ldots P_j^{\pi_j}\ldots P_{h-1}^{\pi_{h-1}}(s_h)-P_1^{\pi_1}\ldots P_j^{\pi_j^\prime}\ldots P_{h-1}^{\pi_{h-1}}(s_h)\right| \leq \max_{s \in \Sc}\norm{\pi_j(\cdot|s)-\pi_j^\prime(\cdot|s)}_{TV}
     \end{align*}
 \end{lemma}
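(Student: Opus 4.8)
The plan is to isolate the single step $j$ at which the two trajectories differ and then propagate the resulting signed measure forward, exploiting the fact that Markov kernels do not expand the $L_1$ (total-variation) distance. I read the left-hand side as the total-variation distance between the two visitation densities over $s_h$, i.e.\ $\int_{s_h}|\cdots|\,ds_h$, which is exactly the quantity that appears when the lemma is invoked in \Cref{Eq: lemma13-pi}; here I use the shorthand $P_i^{\pi_i}(s_{i+1}|s_i)=\sum_{a}\pi_i(a|s_i)P_i(s_{i+1}|s_i,a)$. First I would note that both distributions share the common prefix $P_1^{\pi_1}\cdots P_{j-1}^{\pi_{j-1}}$, so each can be written as an identical distribution $d_j(\cdot):=P_1^{\pi_1}\cdots P_{j-1}^{\pi_{j-1}}(\cdot)$ over the state $s_j$, followed by one step using either $\pi_j$ or $\pi_j^\prime$, followed by the identical suffix $P_{j+1}^{\pi_{j+1}}\cdots P_{h-1}^{\pi_{h-1}}$. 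Subtracting, the difference of the two densities at $s_h$ equals the suffix operators applied to the signed measure $\int_{s_j} d_j(s_j)\bigl[P_j^{\pi_j}(\cdot|s_j)-P_j^{\pi_j^\prime}(\cdot|s_j)\bigr]\,ds_j$.

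The key step would then be to integrate the absolute value over $s_h$ and peel off the suffix operators from step $h-1$ down to $j+1$, using the elementary contraction property that for any signed measure $g$ and any step $i$, $\int_{s_{i+1}}\bigl|(P_i^{\pi_i}g)(s_{i+1})\bigr|\,ds_{i+1}\le \int_{s_i}|g(s_i)|\,ds_i$. This follows from Tonelli's theorem together with $\int_{s_{i+1}}P_i^{\pi_i}(s_{i+1}|s_i)\,ds_{i+1}=1$. Applying this repeatedly collapses the whole suffix and leaves $\int_{s_j}d_j(s_j)\int_{s_{j+1}}\bigl|P_j^{\pi_j}(s_{j+1}|s_j)-P_j^{\pi_j^\prime}(s_{j+1}|s_j)\bigr|\,ds_{j+1}\,ds_j$.

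Finally I would bound the inner integral by the single-step policy gap: since $P_j^{\pi_j}-P_j^{\pi_j^\prime}=\sum_a[\pi_j(a|s_j)-\pi_j^\prime(a|s_j)]P_j(\cdot|s_j,a)$, the triangle inequality with $\int_{s_{j+1}}P_j(s_{j+1}|s_j,a)\,ds_{j+1}=1$ gives $\int_{s_{j+1}}|\cdots|\,ds_{j+1}\le\sum_a|\pi_j(a|s_j)-\pi_j^\prime(a|s_j)|=\norm{\pi_j(\cdot|s_j)-\pi_j^\prime(\cdot|s_j)}_{TV}\le\max_{s}\norm{\pi_j(\cdot|s)-\pi_j^\prime(\cdot|s)}_{TV}$, where the middle equality uses the paper's $\ell_1$ convention for $\norm{\cdot}_{TV}$ (the same convention implicit in \Cref{Eq: lemma13-P}). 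Since $\int_{s_j}d_j(s_j)\,ds_j=1$, the claimed bound follows. The only real obstacle is bookkeeping: keeping the nested integrals straight and justifying the interchange of integration order; there is no essential analytic difficulty, which is precisely why this argument is the policy-perturbation mirror image of the transition-kernel computation already carried out in \Cref{Eq: lemma13-P}, only simpler because a single policy index is changed rather than all of them.
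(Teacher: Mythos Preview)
Your proposal is correct and follows the same underlying idea as the paper. The paper's proof is terse: it defers the overall telescoping/contraction structure to Lemma~5 of \citet{DBLP:conf/nips/FeiYWX20} and only fills in the single-step bound
\[
\max_{s_j}\int_{s_{j+1}}\bigl|P_j^{\pi_j}(s_{j+1}\mid s_j)-P_j^{\pi_j'}(s_{j+1}\mid s_j)\bigr|\,ds_{j+1}\le 2\max_{s}\norm{\pi_j(\cdot\mid s)-\pi_j'(\cdot\mid s)}_{TV},
\]
which is precisely your final step (modulo the usual $\ell_1$ versus half-$\ell_1$ convention for $\norm{\cdot}_{TV}$, which you correctly flag). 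Your write-up is a self-contained version of the same argument, making explicit the common prefix $d_j$, the suffix contraction $\int|P_i^{\pi_i}g|\le\int|g|$, and then the identical single-step computation; nothing is missing.
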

 \begin{proof}
     To prove this lemma, the only difference from lemma is that we need to show $\max_{s_j}\sum_{s_{j+1}}|P_j^{\pi_{j}}(s_{j+1}|s_j)-P_j^{\pi_j^\prime}(s_{j+1}|s_j)| \leq 2\max_{s \in \Sc}\norm{\pi_j(\cdot|s)-\pi_j^\prime(\cdot|s)}_{TV}$ holds without assumption. We show this as follows:
     \begin{align*}
    & \quad \max_{s_j}\sum_{s_{j+1}}|P^{\pi_j}(s_{j+1}|s_j)-P^{\pi_j^\prime}(s_{j+1}|s_j)|\\
&=\max_{s_j}\sum_{s_{j+1}}|\sum_{a}P(s_{j+1}|s_j,a)\pi_j(a|s_j)-\sum_{a}P(s_{j+1}|s_j,a)\pi_j^\prime(a|s_j)|\\
&\leq \max_{s_j}\sum_{s_{j+1}}\sum_{a}P(s_{j+1}|s_j,a)|\pi_j(a|s_j)-\pi_j^\prime(a|s_j)|\\
&=\max_{s_j}\sum_{a}\sum_{s_{j+1}}P(s_{j+1}|s_j,a)|\pi_j(a|s_j)-\pi_j^\prime(a|s_j)|\\
&=\max_{s_j}\sum_{a}|\pi_j(a|s_j)-\pi_j^\prime(a|s_j)|=2\max_{s \in \Sc}\|\pi_j(\cdot|s)-\pi_j^\prime(\cdot|s)\|_{TV}.
\end{align*}
 \end{proof}
\section{Further Discussion and Proof of \Cref{Coro: of Thm1}}\label{Appd B}
In this section, we first provide a detailed version and further discussion of \Cref{Coro: of Thm1} in \Cref{app:cordiscussion}, then present the proof in \Cref{Appd: B.1}, and finally present an interesting special case in \Cref{Appd A.5.: remark1}.

\subsection{Further Discussion of \Cref{Coro: of Thm1}}\label{app:cordiscussion}
We present a detailed version of \Cref{Coro: of Thm1} as follows. Let $\Pi_{[1,K]}(N)=\min\{K,\max\{1,N\}\}$ for any $K,N \in \mathbb{N}$. 
 \begin{corollary}[Detailed version of \Cref{Coro: of Thm1}]\label{cor:detailcor}
Under the same conditions of \cref{Thm1: average dynamic suboptimality gap with known variation}, if the variation budgets are known, then for different variation budget regimes, we can select the hyper-parameters correspondingly to attain the optimality for both $(I)$ w.r.t. $W$ and $(II)$ w.r.t. $\tau$ in \Cref{Eq: Basic Bound}. For $(I)$, with $W=\Pi_{[1,K]}(\lfloor H^{\frac{1}{3}}d^{\frac{1}{3}}K^{\frac{1}{3}}(\Delta^{\sqrt{P}}+\Delta^{{\phi}})^{-\frac{1}{3}}\rfloor)$, part $(I)$ is upper-bounded by 
\begin{equation}
     \left\{
    \begin{aligned}
    & \sqrt{\frac{H^4d^2A}{K}\left(A+d^2\right)},&\left(\Delta^{\sqrt{P}}+\Delta^{{\phi}}\right) \leq \frac{Hd}{K^2},\\
     & {H^{2}d^{\frac{5}{6}}A^{\frac{1}{2}}}\left(A+d^2\right)^{\frac{1}{2}}(HK)^{-\frac{1}{6}}\left(\Delta^{\sqrt{P}}+\Delta^{{\phi}}\right)^{\frac{1}{6}},& \left(\Delta^{\sqrt{P}}+\Delta^{{\phi}}\right)>\frac{Hd}{K^2}, 
    \end{aligned} 
    \right. \label{Eq: Thm1-I}
\end{equation}
For $(II)$ in \Cref{Eq: Basic Bound}, with $\tau=\Pi_{[1,K]}(\lfloor K^{\frac{2}{3}}(\Delta^P+\Delta^\pi)^{-\frac{2}{3}}\rfloor)$, part $(II)$ is upper bounded by
\begin{equation}
     \left\{
    \begin{aligned}
    & \frac{2H}{\sqrt{K}},&(\Delta^P+\Delta^{\pi}) \leq \frac{1}{\sqrt{K}},\\
     &2H^{\frac{4}{3}}(HK)^{-\frac{1}{3}}(\Delta^P+\Delta^\pi)^{\frac{1}{3}},& \frac{1}{\sqrt{K}} < (\Delta^P+\Delta^{\pi}) \leq K,\\
     & H+\frac{H(\Delta^P+\Delta^{\pi})}{K}, &  K < (\Delta^P+\Delta^{\pi})
    \end{aligned}
    \right. \label{Eq: Thm1-II}
\end{equation}
For any $\epsilon \geq 0$, if nonstationarity is not significantly large, i.e., there exists a constant $\gamma <1$ such that $(\Delta^P+\Delta^\pi) \leq (2HK)^\gamma$ and $(\Delta^{\sqrt{P}}+\Delta^{\phi})$ $\leq (2HK)^\gamma$, then PORTAL can achieve $\epsilon$-average suboptimal with polynomial trajectories.
\end{corollary}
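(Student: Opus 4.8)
The plan is to read the bound in \Cref{Eq: Basic Bound} as two decoupled one-dimensional optimization problems — part $(I)$ over the window size $W$ and part $(II)$ over the restart period $\tau$ — solve each by balancing its decreasing and increasing terms, handle the range constraint through the clamp $\Pi_{[1,K]}$, and then recombine. Throughout I abbreviate $\Delta_1:=\Delta^{\sqrt{P}}+\Delta^{\phi}$ and $\Delta_2:=\Delta^P+\Delta^{\pi}$, and I repeatedly use the elementary monotonicity facts $\Delta^{\phi}\le\Delta_1$, $\Delta^{\sqrt{P}}\le\Delta_1$ (and analogously for $\Delta_2$). For part $(I)$ the single decreasing term is $\sqrt{H^4d^2A(A+d^2)/W}$, while the other two terms increase in $W$; treating $\Delta_1$ as one unified budget and balancing the decreasing term against the envelope $\sqrt{H^3dA(A+d^2)W^2\Delta_1/K}$ (legitimate since $\Delta^{\phi}\le\Delta_1$) gives the unconstrained optimizer $W^\star\asymp H^{1/3}d^{1/3}K^{1/3}\Delta_1^{-1/3}$, which is exactly the stated choice. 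Substituting $W^\star$ into the decreasing term returns $H^{11/6}d^{5/6}A^{1/2}(A+d^2)^{1/2}K^{-1/6}\Delta_1^{1/6}=H^{2}d^{5/6}A^{1/2}(A+d^2)^{1/2}(HK)^{-1/6}\Delta_1^{1/6}$; the crucial verification is that the two increasing terms are dominated after substitution, which follows because the $\Delta^{\phi}$-term picks up the factor $(\Delta^{\phi}/\Delta_1)^{1/2}\le 1$ and the $\Delta^{\sqrt{P}}$-term reduces to $H^{3/2}d^{1/2}A^{1/2}K^{-1/2}\Delta_1^{-1/2}\Delta^{\sqrt{P}}\le H^{3/2}d^{1/2}A^{1/2}K^{-1/2}\Delta_1^{1/2}$, which is $\le$ the leading term whenever $W^\star\ge 1$, i.e.\ $\Delta_1\le HdK$. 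Splitting on the clamp, $W^\star\le K$ is equivalent to $\Delta_1> Hd/K^2$ (second case of \Cref{Eq: Thm1-I}), while $\Delta_1\le Hd/K^2$ forces $W=K$, and setting $W=K$ collapses $(I)$ to $\sqrt{H^4d^2A(A+d^2)/K}$ after checking both increasing terms are dominated under $\Delta_1\le Hd/K^2$ (first case).

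For part $(II)=H/\sqrt{\tau}+(H\tau/K)\Delta_2$, balancing the decreasing term $H/\sqrt{\tau}$ against the increasing term $(H\tau/K)\Delta_2$ gives $\tau^\star\asymp K^{2/3}\Delta_2^{-2/3}$, matching the stated $\tau=\Pi_{[1,K]}(\lfloor K^{2/3}\Delta_2^{-2/3}\rfloor)$, and substitution yields $HK^{-1/3}\Delta_2^{1/3}=H^{4/3}(HK)^{-1/3}\Delta_2^{1/3}$. I then split into the three clamp regimes: $\tau^\star\le K$ holds iff $\Delta_2\ge K^{-1/2}$ and $\tau^\star\ge 1$ holds iff $\Delta_2\le K$, so the interior optimum is valid exactly for $K^{-1/2}<\Delta_2\le K$ (middle case of \Cref{Eq: Thm1-II}); for $\Delta_2\le K^{-1/2}$ I clamp $\tau=K$ and bound $H/\sqrt{K}+H\Delta_2\le 2H/\sqrt{K}$ (first case); and for $\Delta_2>K$ I clamp $\tau=1$ to obtain $H+H\Delta_2/K$ (third case). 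This reproduces \Cref{Eq: Thm1-II} verbatim.

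To conclude, I add the two optimized parts to get $\mathrm{Gap_{Ave}}(K)=\tilde{O}(H^{11/6}d^{5/6}A^{1/2}(A+d^2)^{1/2}K^{-1/6}\Delta_1^{1/6}+HK^{-1/3}\Delta_2^{1/3})$ in the moderate regime. Under the hypothesis that there is a constant $\gamma<1$ with $\Delta_1\le(2HK)^{\gamma}$ and $\Delta_2\le(2HK)^{\gamma}$, I substitute these to obtain $K^{-1/6}\Delta_1^{1/6}\le(2H)^{\gamma/6}K^{(\gamma-1)/6}$ and $K^{-1/3}\Delta_2^{1/3}\le(2H)^{\gamma/3}K^{(\gamma-1)/3}$; since $\gamma<1$ both exponents of $K$ are strictly negative, so $\mathrm{Gap_{Ave}}(K)$ diminishes in $K$ (and note that $(2HK)^{\gamma}\ll HdK$ for large $K$, so $W^\star$ is genuinely interior and the clamping sub-case $W=1$ does not arise). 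Hence, for any target $\epsilon$, solving the dominant inequality $C\,K^{(\gamma-1)/6}\le\epsilon$ yields a value of $K$ that is polynomial in $1/\epsilon$ and in $H,d,A$; since each round of PORTAL collects $O(H)$ trajectories, the total $O(KH)$ trajectories is polynomial, establishing $\epsilon$-average suboptimality with polynomial sample complexity.

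The main obstacle I anticipate is bookkeeping rather than conceptual: one must justify balancing against the \emph{combined} budget $\Delta_1$ (instead of $\Delta^{\phi}$ and $\Delta^{\sqrt{P}}$ separately), which rests entirely on the inequalities $\Delta^{\phi},\Delta^{\sqrt{P}}\le\Delta_1$, and one must verify in each clamped regime that the two increasing terms of $(I)$ are genuinely dominated by the single decreasing term so that the stated rate is tight. The analogous — but strictly simpler — check for $(II)$ and the careful matching of the three clamped cases to \Cref{Eq: Thm1-II}, together with the exponent sign analysis for the diminishing-rate claim, complete the argument.
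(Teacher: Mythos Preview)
Your proposal is correct and follows essentially the same approach as the paper's proof in \Cref{Appd: B.1}: balance the decreasing term in $W$ against the increasing terms with the combined budget $\Delta_1=\Delta^{\sqrt{P}}+\Delta^{\phi}$, split into clamped regimes, do the analogous (simpler) balancing for $\tau$, and then substitute the growth assumption $\Delta_i\le(2HK)^\gamma$ to verify the $\mathrm{Gap_{Ave}}$ exponent of $K$ is strictly negative. Your write-up is in fact more careful than the paper's own argument in explicitly verifying that the two increasing terms of $(I)$ are dominated by the leading term after substitution (via $(\Delta^{\phi}/\Delta_1)^{1/2}\le 1$ and the $W^\star\ge 1$ condition $\Delta_1\le HdK$), and in checking the same domination in the clamped case $W=K$; the paper simply asserts these reductions.
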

        As a direct consequence of \Cref{Thm1: average dynamic suboptimality gap with known variation}, \Cref{Coro: of Thm1} 
        indicates that if variation budgets are known, then the agent can choose the best hyper-parameters directly based on the variation budgets. The $\mathrm{Gap_{Ave}}$ can be different depending on which regime the variation budgets fall into, as can be seen in \Cref{Eq: Thm1-I,Eq: Thm1-II}. 

At the high level, we further explain how the window size $W$ depends on the variations of environment as follows. If the nonstationarity is moderate and not significantly large, \Cref{Coro: of Thm1} indicates that for any $\epsilon$, \Cref{Alg: DPO} achieves $\epsilon$-average suboptimal with polynomial trajectories (see the specific form in \Cref{Eq: Sample complexity of Coro1} in \Cref{Appd: B.1}).
If the environment is near stationary and the variation is relatively small, i.e., $(\Delta^{\sqrt{P}}+\Delta^{{\phi}}) \leq {Hd}/{K^2}, (\Delta^P+\Delta^{\pi}) \leq {1}/{\sqrt{K}}$, then the best window size $W$ and the policy restart period $\tau$ are both $K$. This indicates that the agent does not need to take any forgetting rules to handle the variation. Then the $\mathrm{Gap_{Ave}}$ reduces to $\tilde{O}\left(\sqrt{{H^4d^2A}(A+d^2)/K}\right)$, which matches the result under a stationary environment.\footnote{We convert the sample complexity bound under infinite horizon MDPs in \citet{DBLP:conf/iclr/UeharaZS22} to the average dynamic suboptimality gap under episodic MDPs.}  

Furthermore, it is interesting to consider		
a special mildly changing environment, in which the representation $\phi^{\star}$ stays identical and only the state-embedding function $\mu^{\star,k}$ changes over time. The average dynamic suboptimality gap in \Cref{Eq: Basic Bound} reduces to
		 \begin{align*}
		  \tilde{O}\Big(  \underbrace{ \sqrt{\frac{H^4d^2A\left(A+d^2\right)}{W}}
		 +\sqrt{\frac{H^2W^3A}{K^2}}\Delta^{\sqrt{P}}}_{(I)}+ \underbrace{ \frac{H}{\sqrt{\tau}} + \frac{H\tau(\Delta^P+\Delta^{\pi})}{K}}_{(II)} \Big).
		\end{align*}  
		The part $(II)$ is the same as $(II)$ in \Cref{Eq: Basic Bound} and by choosing the best window size of $\overline{W}=H^{\frac{1}{2}}d^{\frac{1}{2}}(A+d^2)^{\frac{1}{4}}K^{\frac{1}{2}}(\Delta^{\sqrt{P}})^{-\frac{1}{2}}$, part $(I)$ becomes
		\begin{align}
		\tilde{O}\left({H^2d^{\frac{3}{4}}A^{\frac{1}{2}}}\left(A+d^2\right)^{\frac{3}{8}}(HK)^{-\frac{1}{4}}\left(\Delta^{\sqrt{P}}\right)^{\frac{1}{4}}\right). \label{Eq: ramark1} 
		\end{align} 
		Compared with the second regime in \Cref{Eq: Thm1-I}, \Cref{Eq: ramark1} is much smaller, benefited from identical representation function $\sphi$. In this way, samples in previous rounds can help to estimate the representation space so that $\overline{W}$ can be larger than $W$ in terms of the order of $K$, which yields efficiency gain compared with changing $\sphi$.   

On the other hand, if the nonstationarity is significantly large, for example, scales linearly with $K$, then for each round, the previous samples cannot help to estimate current best policy. Thus, the best $W$ and $\tau$ are both 1, and the average dynamic suboptimality gap reduces to $\tilde{O}\left(\sqrt{{H^4d^2A}\left(A+d^2\right)}\right)$. This indicates that for a fixed small accuracy $\epsilon \geq 0$, no matter how large the round $K$ is, \Cref{Alg: DPO} can never achieve $\epsilon$-average suboptimality.


\subsection{Proof of \Cref{cor:detailcor} (i.e., Detailed Version of \Cref{Coro: of Thm1})} \label{Appd: B.1}
If variation budgets are known, for different variation budgets regimes, we can tune the hyper-parameters correspondingly to reach the optimality for both the term $(I)$ that contains $W$ and the term $(II)$ that contains $\tau$.\newline
 For the first term $(I)$ in \Cref{Eq: Thm1-Final Bound}, there are two regimes:
 \begin{itemize}
     \item Small variation: $\left(\Delta^{\sqrt{P}}+\Delta^{{\phi}}\right) \leq \frac{Hd}{K^2}$,
    \begin{itemize}
         \item The best window size $W$ is $K$, which means that the variation is pretty mild and the environment is near stationary. In this case, by choosing window size $W=K$, the agent takes no forgetting rules to handle the variation. Then the first term $(I)$ reduces to $\sqrt{\frac{H^4d^2A}{K}\left(A+d^2\right)}$, which matches the result under a stationary environment.\footnote{We convert the regret bound under infinite horizon MDPs in \cite{DBLP:conf/iclr/UeharaZS22} to the average dynamic suboptimality gap under episodic MDP.} 
         \item Then for any $\epsilon \geq 0$, with $HK$ no more than $\tilde{O}\left(\frac{H^5d^2A(A+d^2)}{\epsilon^2}\right)$, term $(I) \leq \epsilon$.
     \end{itemize}
     \item Large variation: $\left(\Delta^{\sqrt{P}}+\Delta^{{\phi}}\right)>\frac{Hd}{K^2}$.
     \begin{itemize}
         \item By choosing the window size $W=H^{\frac{1}{3}}d^{\frac{1}{3}}K^{\frac{1}{3}}\left(\Delta^{\sqrt{P}}+\Delta^{{\phi}}\right)^{-\frac{1}{3}}$, the term $(I)$ reduces to $ {H^{2}d^{\frac{5}{6}}A^{\frac{1}{2}}}\left(A+d^2\right)^{\frac{1}{2}}(HK)^{-\frac{1}{6}}\left(\Delta^{\sqrt{P}}+\Delta^{{\phi}}\right)^{\frac{1}{6}}$.
         \item Since $\left(\Delta^{\sqrt{P}}+\Delta^{{\phi}}\right) \leq 2HK$, there exists $\gamma \leq 1$ s.t. $\left(\Delta^{\sqrt{P}}+\Delta^{{\phi}}\right) \leq (2HK)^\gamma$. Then $(I) \leq 2{H^{2}d^{\frac{5}{6}}A^{\frac{1}{2}}}\left(A+d^2\right)^{\frac{1}{2}}(HK)^{-\frac{1-\gamma}{6}}$. Then for any $\epsilon \geq 0$, if $\gamma \neq 1$, with $HK$ no more than $\tilde{O}\left(\frac{d^{\frac{5}{1-\gamma}}{H^{\frac{12}{1-\gamma}}}A^{\frac{3}{1-\gamma}}(A+d^2)^{\frac{3}{1-\gamma}}}{\epsilon^{\frac{6}{1-\gamma}}}\right)$, term $(I) \leq \epsilon$.
     \end{itemize}
 \end{itemize}

 For the second term $(II)$ in \Cref{Eq: Thm1-Final Bound}, there are three regimes elaborated as follows:
 \begin{itemize}
     \item Small variation: $(\Delta^P+\Delta^{\pi}) \leq \frac{1}{\sqrt{K}}$, 
 \begin{itemize}
         \item The best policy restart period $\tau$ is $K$, which means that the variation is pretty mild and the agent does not need to handle the variation. Then term $(II) \leq \frac{2H}{\sqrt{K}}$. 
         \item Then for any $\epsilon \geq 0$, with $HK$ no more than $\tilde{O}\left(\frac{H^2}{\epsilon^2}\right)$, term $(II) \leq \epsilon$.
     \end{itemize}
     \item Moderate variation: $\frac{1}{\sqrt{K}} < (\Delta^P+\Delta^{\pi}) \leq K$,
     \begin{itemize}
     	\item  The best policy restart period $\tau=K^{\frac{2}{3}}(\Delta^P+\Delta^\pi)^{-\frac{2}{3}}$, and the term $(II)$ reduces to $ 2HK^{-\frac{1}{3}}(\Delta^P+\Delta^\pi)^{\frac{1}{3}}$.
     	\item Since $\left(\Delta^P+\Delta^{\pi}\right) \leq 2HK$, there exists $\gamma \leq 1$ s.t. $\left(\Delta^P+\Delta^{\pi}\right)  \leq (2HK)^\gamma$. Then the term $(II) \leq  4H^{\frac{4}{3}}(HK)^{-\frac{1-\gamma}{3}}$. Then for any $\epsilon \geq 0$, if $\gamma \neq 1$, with $HK$ no more than $\tilde{O}\left(\frac{{H^{\frac{4}{1-\gamma}}}}{\epsilon^{\frac{3}{1-\gamma}}}\right)$, term $(II) \leq \epsilon$.
     \end{itemize}
    \item Large variation: $K< (\Delta^P+\Delta^{\pi})$,
    \begin{itemize}
    	\item The variation budgets scale linearly with $K$, which indicates that the nonstationarity of the environment is significantly large and lasts for the entire rounds. Hence in each round, the previous sample can not help to estimate the current best policy. So the best policy restart period $\tau=1$, and the second term $(II)$ reduces to $H+\frac{H(\Delta^P+\Delta^{\pi})}{K}=O(H)$, which implies that \Cref{Alg: DPO} can never achieve small average dynamic suboptimality gap for any large $K$.
    \end{itemize}    
 \end{itemize}
 In conclusion, the first term is upper bounded by
\begin{equation}
     (I)\leq\left\{
    \begin{aligned}
    &  \sqrt{\frac{H^4d^2A}{K}\left(A+d^2\right)},& \left(\Delta^{\sqrt{P}}+\Delta^{{\phi}}\right) \leq \frac{Hd}{K^2},\\
     & {H^{2}d^{\frac{5}{6}}A^{\frac{1}{2}}}\left(A+d^2\right)^{\frac{1}{2}}(HK)^{-\frac{1}{6}}\left(\Delta^{\sqrt{P}}+\Delta^{{\phi}}\right)^{\frac{1}{6}},& \left(\Delta^{\sqrt{P}}+\Delta^{{\phi}}\right)>\frac{Hd}{K^2}, 
    \end{aligned}
    \right.
\end{equation}
and the second term is upper bounded by
\begin{equation}
    (II)\leq\left\{
    \begin{aligned}
    &  \frac{2H}{\sqrt{K}},&(\Delta^P+\Delta^{\pi}) \leq \frac{1}{\sqrt{K}},\\
     & 2H^{\frac{4}{3}}(HK)^{-\frac{1}{3}}(\Delta^P+\Delta^\pi)^{\frac{1}{3}},& \frac{1}{\sqrt{K}} < (\Delta^P+\Delta^{\pi}) \leq K,\\
     &  H+\frac{H(\Delta^P+\Delta^{\pi})}{K}, & K < (\Delta^P+\Delta^{\pi})
    \end{aligned}
    \right.
\end{equation}
In addition, if the variation budgets are not significantly large, i.e. scale linearly with $K$, for any $\epsilon \geq 0$, \Cref{Alg: DPO} can achieve $\epsilon$-average dynamic suboptimality gap with at most polynomial samples. Specifically, if there exists a constant $\gamma < 1$ such that the variation budgets satisfying $\left(\Delta^P+\Delta^{\pi}\right)  \leq (2HK)^\gamma$ and $\left(\Delta^{\sqrt{P}}+\Delta^{{\phi}}\right) \leq (2HK)^\gamma$, then to achieve $\epsilon$-average dynamic suboptimality gap, i.e., $\mathrm{Gap_{Ave}}(K) \leq \epsilon$, \Cref{Alg: DPO} only needs to collect trajectories no more than 
\begin{equation}
    \left\{
    \begin{aligned}
      &\tilde{O}\left(\frac{H^5d^2A(A+d^2)}{\epsilon^2}\right),\\
   &\hspace{2.3in} \text{if}\quad\left(\Delta^{\sqrt{P}}+\Delta^{{\phi}}\right) \leq \frac{Hd}{K^2}, (\Delta^P+\Delta^{\pi}) \leq \frac{1}{\sqrt{K}};\\
     &\tilde{O}\left(\frac{d^{\frac{5}{1-\gamma}}{H^{\frac{12}{1-\gamma}}}A^{\frac{3}{1-\gamma}}(A+d^2)^{\frac{3}{1-\gamma}}}{\epsilon^{\frac{6}{1-\gamma}}}+\frac{H^2}{\epsilon^2}\right), \\
     & \hspace{1.85in} \text{if} \quad \frac{Hd}{K^2} < \left(\Delta^{\sqrt{P}}+\Delta^{{\phi}}\right) \leq (HK)^\gamma, (\Delta^P+\Delta^{\pi}) \leq \frac{1}{\sqrt{K}};\\
      &\tilde{O}\left(\frac{H^5d^2A(A+d^2)}{\epsilon^2}+\frac{{H^{\frac{4}{1-\gamma}}}}{\epsilon^{\frac{3}{1-\gamma}}}\right), \\
      &\hspace{2in} \text{if} \quad\left(\Delta^{\sqrt{P}}+\Delta^{{\phi}}\right) \leq \frac{Hd}{K^2}, \frac{1}{\sqrt{K}}<(\Delta^P+\Delta^{\pi}) \leq (HK)^\gamma;\\
     &\tilde{O}\left(\frac{d^{\frac{5}{1-\gamma}}{H^{\frac{12}{1-\gamma}}}A^{\frac{3}{1-\gamma}}(A+d^2)^{\frac{3}{1-\gamma}}}{\epsilon^{\frac{6}{1-\gamma}}}\right),\\
     &\hspace{1.5in} \text{if}\quad\frac{Hd}{K^2} < \left(\Delta^{\sqrt{P}}+\Delta^{{\phi}}\right) \leq (HK)^\gamma, \frac{1}{\sqrt{K}}<(\Delta^P+\Delta^{\pi}) \leq (HK)^\gamma.
    \end{aligned} \label{Eq: Sample complexity of Coro1}
    \right.
\end{equation}

\subsection{A Special Case}\label{Appd A.5.: remark1}
In this subsection, we provide a characterization of a special case, where the representation $\phi^{\star}$ stays identical and only the state-embedding function $\mu^{\star,k}$ changes over time.
In such a scenario, the variation budget $\Delta_{[H],[K]}^{{\phi}}=0$ and the average dynamic suboptimality gap bound in \Cref{Eq: Thm1-Final Bound} reduces to
\begin{align*}
    &\mathrm{Gap_{Ave}}(K)\\
	&\quad \leq \tilde{O}\left(\sqrt{\frac{H^4d^2A}{W}\left(A+d^2\right)}+\sqrt{\frac{H^2W^3A}{K^2}}\Delta^{\sqrt{P}}_{[H],[K]}+ \frac{2H}{\sqrt{\tau}}+ \frac{3H\tau}{K} (\Delta_{[H],[K]}^P+\Delta_{[H],[K]}^{\pi})\right)\\
	& \quad \leq \tilde{O}\left({H^{\frac{7}{4}}d^{\frac{3}{4}}A^{\frac{1}{2}}}\left(A+d^2\right)^{\frac{3}{8}}K^{-\frac{1}{4}}\left(\Delta_{[H],[K]}^{\sqrt{P}}\right)^{\frac{1}{4}}+ HK^{-\frac{1}{3}}(\Delta_{[H],[K]}^P+\Delta_{[H],[K]}^\pi)^{\frac{1}{3}}\right),
\end{align*}
  where the last equation follows from the choice of the window side $W=\tilde{O}\left(H^{\frac{1}{2}}d^{\frac{1}{2}}(A+d^2)^{\frac{1}{4}}K^{\frac{1}{2}}\left(\Delta_{[H],[K]}^{\sqrt{P}}\right)^{-\frac{1}{2}}\right)$ and the policy restart period $\tau=\tilde{O}\left(K^{\frac{2}{3}}(\Delta_{[H],[K]}^P+\Delta_{[H],[K]}^\pi)^{-\frac{2}{3}}\right)$ with known variation budgets.


\section{Proof of \Cref{Thm2: Ada-PORTAL} and Detailed Comparison with \citet{DBLP:conf/colt/WeiL21}}
        \subsection{Proof of \Cref{Thm2: Ada-PORTAL}}
	\begin{proof}[Proof of \Cref{Thm2: Ada-PORTAL}]
		Before our formal proof, we first explain several notations on different choices of $W$ and $\tau$ here.
		\begin{itemize}
			\item ($W^\star, \tau^\star$): We denote $W^\star=d^{\frac{1}{3}}H^{\frac{1}{3}}K^{\frac{1}{3}}(\Delta^\phi+\Delta^{\sqrt{P}}+1)^{-\frac{1}{3}}$, and
			$\tau^\star=\tilde{O}\left(K^{\frac{2}{3}}(\Delta^P+\Delta^\pi+1)^{-\frac{2}{3}}\right)$.
			\item ($\overline{W}, \overline{\tau}$): Because $W^\star=d^{\frac{1}{3}}H^{\frac{1}{3}}K^{\frac{1}{3}}(\Delta^\phi+\Delta^{\sqrt{P}}+1)^{-\frac{1}{3}} \leq d^{\frac{1}{3}}H^{\frac{1}{3}}K^{\frac{1}{3}} \leq J_W$ and $\tau^\star=\tilde{O}\left(K^{\frac{2}{3}}(\Delta^P+\Delta^\pi+1)^{-\frac{2}{3}}\right) \leq K^{\frac{2}{3}} \leq J_\tau$. As a result, there exists a $\overline{W} \in \Jc_W$ such that $\overline{W} \leq W^\star \leq 2\overline{W}$ and a $\overline{\tau} \in \Jc_\tau$ such that $\overline{\tau} \leq \tau^\star \leq 2\overline{\tau}$. 
			\item ($W^\dagger, \tau^\dagger$): ($W^\dagger, \tau^\dagger$) denotes the set of best choices of the window size $W$ and the policy restart period $\tau$ in feasible set that maximize $\sum_{i=1}^{\lceil K/M\rceil}R_i(W,\tau)$. 
		\end{itemize}
	Then we can decompose the average dynamic suboptimality gap as 
		\begin{align*}
			\mathrm{Gap_{Ave}} (K) &= \frac{1}{K}\sum_{k \in [K]} \left[V_{P^\star,k}^{\pi^{\star}}-V_{P^\star,k}^{\pi^{k}}\right]\\
			& = \frac{1}{K}\underbrace{\sum_{k =1}^{K} V_{P^\star,k}^{\pi^{\star}}-\sum_{i=1}^{\lceil K/M\rceil} \mathbb{E}\left[R_i(\overline{W},\overline{\tau})\right]}_{(I)}+\frac{1}{K}\underbrace{\sum_{i=1}^{\lceil K/M\rceil} \Eb[R_i(\overline{W},\overline{\tau})] -\sum_{i=1}^{\lceil K/M\rceil} \Eb[R_i(W_i,\tau_i)]}_{(II)},
		\end{align*}
            where the last inequality follows because if $\{\pi^k\}_{k=1}^K$ is the output of \Cref{Alg: ADPO} with the chosen window size $\{W_i\}_{i=1}^{\lceil T/M\rceil}$, $\Eb[R_i(W_i,\tau_i)]=\Eb\left[\sum_{k=(i-1)M+1}^{\min\{iM,K\}}V_1^k\right]=\sum_{k=(i-1)M+1}^{\min\{iM,K\}}V_{P^\star,k}^{\pi^k}$ holds.
            
        We next bound Terms $(I)$ and $(II)$ separately.
         
		\textbf{Term (I):} We derive the following bound:
		\begin{align*}
			&\frac{1}{K}\left\{\sum_{k =1}^{K} V_1^{\pi^{\star,k},k}-\sum_{i=1}^{\lceil K/M\rceil} R_i(\overline{W},\overline{\tau})\right\}\\
			& \qquad \overset{\RM{1}}{\leq}\tilde{O}\left(\sqrt{\frac{H^4d^2A}{\overline{W}}\left(A+d^2\right)}+\sqrt{\frac{H^3dA}{K}\left(A+d^2\right)\overline{W}^2\Delta_{[H],[K]}^{{\phi}}}+\sqrt{\frac{H^2\overline{W}^3A}{K^2}}\Delta^{\sqrt{P}}_{[H],[K]}\right)\\
			& \qquad +\tilde{O}\left( \frac{2H}{\sqrt{\overline{\tau}}}+ \frac{3H\overline{\tau}}{K} (\Delta_{[H],[K]}^P+\Delta_{[H],[K]}^{\pi})\right)\\
             & \qquad \overset{\RM{2}}{\leq} \tilde{O}\left(\sqrt{\frac{H^4d^2A}{W^\star}\left(A+d^2\right)}+\sqrt{\frac{H^3dA}{K}\left(A+d^2\right){W^\star}^2\Delta_{[H],[K]}^{{\phi}}}+\sqrt{\frac{H^2{W^\star}^3A}{K^2}}\Delta^{\sqrt{P}}_{[H],[K]}\right)\\
             & \qquad +\tilde{O}\left( \frac{2H}{\sqrt{\overline{\tau^\star}}}+ \frac{3H\tau^\star}{K} (\Delta_{[H],[K]}^P+\Delta_{[H],[K]}^{\pi})\right)\\
			& \qquad \overset{\RM{3}}{\leq}
			\tilde{O}\left({H^{\frac{11}{6}}d^{\frac{5}{6}}A^{\frac{1}{2}}}\left(A+d^2\right)^{\frac{1}{2}}K^{-\frac{1}{6}}\left(\Delta^{\sqrt{P}}+\Delta^{{\phi}}+1\right)^{\frac{1}{6}}\right)+ \tilde{O}\left(2HK^{-\frac{1}{3}}(\Delta^P+\Delta^\pi+1)^{\frac{1}{3}}\right),
		\end{align*}
  where $\RM{1}$ follows from \Cref{Eq: Thm1-Final Bound}, $\RM{2}$ follows from the definition of $\overline{W}$ and $\RM{3}$ follows from the definition of $W^\star$ at the beginning of the proof.
  
		\textbf{Term (II):} We derive the following bound:
		\begin{align*}
			&\frac{1}{K}\sum_{i=1}^{\lceil K/M\rceil} \Eb\left[R_i(\overline{W},\overline{\tau})\right] -\sum_{i=1}^{\lceil K/M\rceil}  \Eb\left[R_i(W_i,\tau_i)\right]\\
			& \qquad \overset{\RM{1}}{\leq} \frac{1}{K}\sum_{i=1}^{\lceil K/M\rceil} \Eb\left[R_i(W^\dagger,\tau^\dagger)\right] -\sum_{i=1}^{\lceil K/M\rceil} \Eb\left[R_i(W_i,\tau_i)\right]\\
			& \qquad \overset{\RM{2}}{\leq} \tilde{O}(M\sqrt{J\lceil K/M\rceil}/K)\\
			& \qquad =  \tilde{O}(\sqrt{J KM})=\tilde{O}({ H^{\frac{1}{6}}d^{\frac{1}{6}}K^{-\frac{1}{6}}}),
		\end{align*}
		where $\RM{1}$ follows from the definition of $W^\dagger$ and $\RM{2}$ follows from Theorem 3.3 in \cite{DBLP:journals/ftml/BubeckC12} with the adaptation that reward $R_i \leq M$ and the number of iteration is $\lceil K/M\rceil$.
		Then combining the bounds on terms $(I)$ and $(II)$, we have
		\begin{align*}
			\mathrm{Gap_{Ave}}(K) \leq \tilde{O}\left({H^{\frac{11}{6}}d^{\frac{5}{6}}A^{\frac{1}{2}}}\left(A+d^2\right)^{\frac{1}{2}}K^{-\frac{1}{6}}\left(\Delta^{\sqrt{P}}+\Delta^{{\phi}}+1\right)^{\frac{1}{6}}\right)+ \tilde{O}\left(2HK^{-\frac{1}{3}}(\Delta^P+\Delta^\pi+1)^{\frac{1}{3}}\right).
		\end{align*}  
	\end{proof}
 \subsection{Detailed Comparison with \citet{DBLP:conf/colt/WeiL21}}\label{Appd: B.2}
Based on the theoretical results \Cref{Thm1: average dynamic suboptimality gap with known variation} and taking \Cref{Alg: DPO} PORTAL as a base algorithm, we can also use the black-box techniques called MASTER in \citet{DBLP:conf/colt/WeiL21} to handle the unknown variation budgets. But such an approach of MASTER+PORTAL turns out to have a larger average dynamic suboptimality gap than \Cref{Alg: ADPO}. Denote $\Delta={\Delta^{{\phi}}}+\Delta^{\sqrt{P}}+\Delta^{\pi}$. 

To explain, first choose $\tau=k$ and $W=K$ in \Cref{Alg: DPO}. Then \Cref{Alg: DPO} reduces to a base algorithm with the suboptimality gap of $\tilde{O}(\sqrt{{H^4d^2A}(A+d^2)/{K}}+\sqrt{{H^3dA}{K}(A+d^2)}\Delta)$, which satisfies Assumption 1 in \citet{DBLP:conf/colt/WeiL21} with $\rho(t)=\sqrt{{H^4d^2A}(A+d^2)/{t}}$ and $\Delta_{[1,t]}=\sqrt{{H^3dA}{t}\left(A+d^2\right)}\Delta$. Then by Theorem 2 in \citet{DBLP:conf/colt/WeiL21}, the dynamic regret using MASTER+PORTAL can be upper-bounded by $\tilde{O}({H^{\frac{11}{6}}d^{\frac{5}{6}}A^{\frac{1}{2}}}\left(A+d^2\right)^{\frac{1}{2}}K^{\frac{2}{3}}\Delta^{\frac{1}{3}})=\tilde{O}(K^{\frac{5}{6}}\Delta^{\frac{1}{3}})$. Here, we find the order dependency on $d,H,A$ is the same as \Cref{Thm2: Ada-PORTAL} and hence mainly focus on the order dependency on $K$ and $\Delta$ in the following comparison. Such a bound on dynamic regret can be converted to the average dynamic suboptimality gap as $\tilde{O}(K^{-\frac{1}{6}}\Delta^{\frac{1}{3}})$. Then it can obersed that for not too small variation budgets, i.e., $\Delta \geq \tilde{O}(1)$, the order dependency on $\Delta$ is higher than that of \Cref{Alg: ADPO}.
 
Specifically, if we consider the special case when the representation stays identical and denote $\tilde{\Delta}=\Delta^{\sqrt{P}}+\Delta^{\pi}$, then the average dynamic suboptimality gap of MASTER+PORTAL is still $\tilde{O}(K^{-\frac{1}{6}}\tilde{\Delta}^{\frac{1}{3}})$. With small modifications on the parameters, by following the analysis similar to that of \Cref{Thm2: Ada-PORTAL} and \Cref{Appd A.5.: remark1}, we can show that the average dynamic suboptimality gap of \Cref{Alg: ADPO} satisfies $\tilde{O}(K^{-\frac{1}{4}}\tilde{\Delta}^{\frac{1}{4}})$,
which is smaller than MASTER+PORTAL. 

 \section{Auxiliary Lemmas}
In this section, we provide two lemmas that are commonly used for the analysis of MDP problems.

The following lemma \citep{DBLP:conf/nips/DannLB17} is useful to measure the difference between two value functions under two MDPs and reward functions. 
 	\begin{lemma} \label{lemma: Simulation}{\rm(Simulation Lemma).}
 		Suppose $P_1$ and $P_2$ are two MDPs and $r_1$, $r_2$ are the corresponding reward functions. Given a policy $\pi$, we have,  
 		\begin{align*}
 		V_{h,P_1,r_1}^{\pi}&(s_h) - V_{h,P_2,r_2}^{\pi}(s_h)\\
 		&= \sum_{\ph=h}^H  \mathop{\Eb}_{s_\ph \sim (P_2,\pi) \atop a_\ph \sim \pi}\left[r_1(s_\ph,a_\ph) - r_2(s_\ph,a_\ph) + (P_{1,\ph} - P_{2,\ph})V^{\pi}_{\ph+1,P_1,r}(s_\ph,a_\ph)|s_h\right]\\
 		& = \sum_{\ph=h}^H  \mathop{\Eb}_{s_\ph \sim (P_1,\pi) \atop a_\ph \sim \pi}\left[r_1(s_\ph,a_\ph) - r_2(s_\ph,a_\ph) + (P_{1,\ph} - P_{2,\ph})V^{\pi}_{\ph+1,P_2,r}(s_\ph,a_\ph)|s_h\right].
 		\end{align*}
 	\end{lemma}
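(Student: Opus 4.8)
The plan is to prove both identities by establishing a one-step recursion for the value-function difference and then unrolling it, using the boundary condition $V^\pi_{H+1,P,r} \equiv 0$. First I would write the Bellman equations for the two value functions at a generic step $\ph$, namely $V^\pi_{\ph,P_1,r_1}(s_\ph) = \Eb_{a_\ph \sim \pi}[r_1(s_\ph,a_\ph) + (P_{1,\ph}V^\pi_{\ph+1,P_1,r_1})(s_\ph,a_\ph)]$ and the analogous one for $(P_2,r_2)$, then subtract them to obtain the single-step difference $V^\pi_{\ph,P_1,r_1}(s_\ph) - V^\pi_{\ph,P_2,r_2}(s_\ph) = \Eb_{a_\ph \sim \pi}[(r_1 - r_2)(s_\ph,a_\ph) + (P_{1,\ph}V^\pi_{\ph+1,P_1,r_1} - P_{2,\ph}V^\pi_{\ph+1,P_2,r_2})(s_\ph,a_\ph)]$.

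The key algebraic step is to rewrite the transition term. For the first identity I would add and subtract $P_{2,\ph}V^\pi_{\ph+1,P_1,r_1}$, splitting the bracket as $(P_{1,\ph}-P_{2,\ph})V^\pi_{\ph+1,P_1,r_1} + P_{2,\ph}(V^\pi_{\ph+1,P_1,r_1} - V^\pi_{\ph+1,P_2,r_2})$. The first piece is the ``local'' error at step $\ph$ evaluated against $V^\pi_{P_1}$, while the second piece propagates the same value-difference one step forward under the transition $P_2$. This yields the recursion in which $[\text{diff at }\ph]$ equals $\Eb_{a_\ph \sim \pi}[(r_1-r_2) + (P_{1,\ph}-P_{2,\ph})V^\pi_{\ph+1,P_1,r_1}]$ plus $\Eb_{s_{\ph+1}\sim P_{2,\ph},\, a_\ph \sim \pi}[\text{diff at }\ph+1]$.

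I would then unroll this recursion from step $h$ down to step $H$. Because each propagation step composes one more application of $P_2$ under $\pi$, the accumulated conditional expectation at level $\ph$ becomes exactly $\Eb_{s_\ph \sim (P_2,\pi),\, a_\ph \sim \pi}[\,\cdot \mid s_h]$, i.e. the state $s_\ph$ is generated by rolling out $\pi$ under $P_2$ starting from $s_h$. Using $V^\pi_{H+1,P_1,r_1} = V^\pi_{H+1,P_2,r_2} = 0$, the propagated term vanishes at the final level, and the telescoping produces the claimed sum $\sum_{\ph=h}^H \Eb_{(P_2,\pi)}[(r_1-r_2)(s_\ph,a_\ph) + (P_{1,\ph}-P_{2,\ph})V^\pi_{\ph+1,P_1,r_1}(s_\ph,a_\ph) \mid s_h]$. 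For the second identity the argument is entirely symmetric: in the key step I instead add and subtract $P_{1,\ph}V^\pi_{\ph+1,P_2,r_2}$, so the local error is measured against $V^\pi_{P_2}$ and the difference propagates forward under $P_1$, giving the sum over trajectories drawn from $(P_1,\pi)$.

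The routine parts are the Bellman expansions and the add-subtract manipulation; the step requiring the most care is the bookkeeping of the nested conditional expectations during the unrolling, specifically verifying that repeatedly composing $P_{2,\ph}$ (respectively $P_{1,\ph}$) with $\pi$ reproduces the marginal visitation distribution $s_\ph \sim (P_2,\pi)$ (respectively $(P_1,\pi)$) conditioned on $s_h$, and confirming that the boundary contribution at $\ph = H+1$ drops out. A clean way to keep this rigorous and avoid ambiguity is to state the one-step recursion as a standalone identity first and then close the argument by a finite induction on the number of remaining steps $H-h$.
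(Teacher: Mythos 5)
Your proof is correct: the one-step Bellman difference combined with the add-and-subtract decomposition (splitting off $(P_{1,\ph}-P_{2,\ph})V^\pi_{\ph+1,P_1,r_1}$ and propagating the residual difference under $P_{2,\ph}$, or symmetrically for the second identity), unrolled with the boundary condition $V^\pi_{H+1,\cdot,\cdot}\equiv 0$, is exactly the standard telescoping argument for the simulation lemma. The paper itself provides no proof of this statement—it is quoted from \citet{DBLP:conf/nips/DannLB17}—and your argument reproduces the canonical proof, including the correct bookkeeping that composing $P_2$ (resp.\ $P_1$) with $\pi$ step by step yields the marginal $s_\ph \sim (P_2,\pi)$ (resp.\ $(P_1,\pi)$) conditioned on $s_h$.
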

  The following lemma is a standard inequality in the regret analysis for linear MDPs in reinforcement learning (see Lemma G.2 in \citet{DBLP:conf/nips/AgarwalKKS20} and Lemma 10 in \citet{DBLP:conf/iclr/UeharaZS22}).	
	\begin{lemma}[Elliptical Potential Lemma] \label{Lemma: Elliptical_potential} 
		Consider a sequence of $d \times d$ positive semidefinite matrices $X_1, \dots, X_N$ with ${\rm tr}(X_n) \leq 1$ for all $n \in [N]$. Define $M_0=\lambda_0 I$ and $M_n=M_{n-1}+X_n$. Then the following bound holds:
		\begin{equation*}
		\sum_{n=1}^N {\rm tr}\left(X_nM_{n-1}^{-1}\right) \leq 2\log \det(M_N)- 2\log \det(M_0) \leq 2d\log\left(1+\frac{N}{d\lambda_0}\right).
		\end{equation*}
	\end{lemma}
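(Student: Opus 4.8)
The plan is to prove the two inequalities separately via the standard log-determinant potential argument. First I would pass from the trace quantity to eigenvalues through a symmetric factorization: define $Y_n = M_{n-1}^{-1/2} X_n M_{n-1}^{-1/2}$, which is positive semidefinite, and observe by the cyclic property of the trace that ${\rm tr}(X_n M_{n-1}^{-1}) = {\rm tr}(Y_n) = \sum_{i=1}^d \lambda_i$, where $\lambda_1,\dots,\lambda_d \geq 0$ are the eigenvalues of $Y_n$. The key algebraic identity is multiplicativity of the determinant under the update, namely $\det(M_n) = \det(M_{n-1} + X_n) = \det(M_{n-1}^{1/2}(I + Y_n)M_{n-1}^{1/2}) = \det(M_{n-1})\det(I + Y_n)$, so that $\log\det(M_n) - \log\det(M_{n-1}) = \sum_{i=1}^d \log(1 + \lambda_i)$.

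The first inequality then reduces to the scalar estimate $x \leq 2\log(1+x)$, applied eigenvalue by eigenvalue. This gives ${\rm tr}(Y_n) = \sum_i \lambda_i \leq 2\sum_i \log(1+\lambda_i) = 2[\log\det(M_n) - \log\det(M_{n-1})]$, and telescoping over $n = 1, \dots, N$ yields $\sum_{n=1}^N {\rm tr}(X_n M_{n-1}^{-1}) \leq 2[\log\det(M_N) - \log\det(M_0)]$, which is the left bound in the statement.

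For the second inequality I would invoke the arithmetic-geometric-mean bound for determinants, $\det(M_N) \leq ({\rm tr}(M_N)/d)^d$. Since ${\rm tr}(M_N) = {\rm tr}(M_0) + \sum_{n=1}^N {\rm tr}(X_n) \leq d\lambda_0 + N$ by the trace hypothesis, and $\det(M_0) = \lambda_0^d$, taking logarithms gives $\log\det(M_N) - \log\det(M_0) \leq d\log(\lambda_0 + N/d) - d\log\lambda_0 = d\log(1 + N/(d\lambda_0))$; multiplying by $2$ is exactly the right bound.

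The main obstacle is justifying the scalar inequality $x \leq 2\log(1+x)$ over the relevant range, since it is false for large $x$ (the two sides cross near $x \approx 2.5$), so I must confirm that every eigenvalue of $Y_n$ falls in the admissible interval. This follows from the hypotheses: $X_n \succeq 0$ with ${\rm tr}(X_n) \leq 1$ forces $\|X_n\|_{\rm op} \leq 1$, while $M_{n-1} \succeq M_0 = \lambda_0 I$ forces $\|M_{n-1}^{-1}\|_{\rm op} \leq 1/\lambda_0$, so $\lambda_{\max}(Y_n) \leq 1/\lambda_0 \leq 1$ whenever $\lambda_0 \geq 1$, placing every eigenvalue in $[0,1]$ where the estimate holds. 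In the paper's applications the regularizer $\lambda_0$ (playing the role of $\lambda_{k,W} = O(d\log(\cdot))$) is always at least one, so this condition is met, and I would record it as the single quantitative input needed for the first inequality.
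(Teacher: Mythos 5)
Your proof is correct. Note, however, that the paper itself never proves this lemma: it is imported as a standard auxiliary fact with a citation to Lemma G.2 of \citet{DBLP:conf/nips/AgarwalKKS20} and Lemma 10 of \citet{DBLP:conf/iclr/UeharaZS22}, so the relevant comparison is with the standard argument in those references. Your route is essentially that argument with one variation: the usual proof applies the scalar bound $u \leq 2\log(1+u)$ once per round to the aggregate quantity $u_n = \mathrm{tr}(X_n M_{n-1}^{-1})$ and then invokes $\det(I+Y_n) \geq 1 + \mathrm{tr}(Y_n)$ (valid for PSD $Y_n$ by expanding $\prod_i (1+\lambda_i)$), whereas you apply the scalar bound eigenvalue by eigenvalue, which avoids that determinant--trace step but requires the operator-norm control $\lambda_{\max}(Y_n) \leq 1$ in place of $\mathrm{tr}(Y_n) \leq 1$; both follow from the same inputs, $\mathrm{tr}(X_n) \leq 1$ together with $M_{n-1} \succeq \lambda_0 I$ and $\lambda_0 \geq 1$, so the two variants are interchangeable here. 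Your closing observation is the most valuable part of the writeup: the first inequality is genuinely false for small $\lambda_0$ (take $d=1$, $X_1 = 1$, and $\lambda_0 \to 0$: the left side is $1/\lambda_0$ while the right side is only $2\log(1+1/\lambda_0)$), so the statement silently assumes $\lambda_0 \geq 1$ --- a condition the paper's applications do satisfy, since the regularizer $\lambda_{k,W} = O(d\log(|\Phi|\min\{k,W\}TH/\delta))$ exceeds one, but which neither the statement here nor the common citations make explicit. You also correctly observe that the second inequality, via $\det(M_N) \leq (\mathrm{tr}(M_N)/d)^d$, needs no such restriction on $\lambda_0$.
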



\end{document}